\documentclass[letterpaper,11pt]{scrartcl}

\usepackage[T1]{fontenc}
\usepackage[utf8]{inputenc}
\usepackage{lmodern}
\usepackage{amsfonts,amsmath,amssymb,amsthm,mathtools}
\usepackage[letterpaper,margin=1in]{geometry}
\usepackage[english]{babel}
\usepackage[dvipsnames]{xcolor}
\usepackage{tikz}
\usetikzlibrary{arrows.meta}
\definecolor{coltreat}{HTML}{0066CC}
\definecolor{colout}{HTML}{D62828}
\definecolor{collatent}{HTML}{9A9A9A}
\usepackage{wrapfig}
\usepackage{mdframed}
\usepackage[authoryear,round]{natbib}
\usepackage{bm}
\usepackage{multirow}
\usepackage{array}
\usepackage{verbatim}
\usepackage[normalem]{ulem}
\usepackage{thmtools}
\usepackage{setspace}
\usepackage{adjustbox}
\usepackage{graphicx}
\usepackage{subfig}
\usepackage{placeins}
\usepackage[useregional]{datetime2}
\usepackage{enumitem}
\usepackage[most]{tcolorbox}
\tcbuselibrary{theorems}
\usepackage{longtable}
\usepackage{ltablex}
\usepackage{nccmath}
\usepackage{sansmath}
\usepackage{booktabs,tabularx}
\usepackage{makecell}
\usepackage[ruled,vlined,linesnumbered]{algorithm2e}
\usepackage[colorlinks=true,citecolor=NavyBlue,linkcolor=RoyalBlue,urlcolor=NavyBlue]{hyperref}

\newtheoremstyle{uprightboxed}{0pt}{0pt}{\normalfont}{}{\bfseries}{.}{.5em}{}
\theoremstyle{uprightboxed}
\newtheorem{definition}{Definition}
\newtheorem{theorem}{Theorem}
\newtheorem{assumption}{Assumption}

\newtheorem{lemma}{Lemma}
\newtheorem{proposition}{Proposition}
\newtheorem{corollary}{Corollary}

\tcolorboxenvironment{definition}{colback=gray!5!white,colframe=black,fonttitle=\bfseries,boxrule=0.5pt,breakable}
\tcolorboxenvironment{theorem}{colback=gray!5!white,colframe=black,fonttitle=\bfseries,boxrule=0.5pt,breakable}
\tcolorboxenvironment{corollary}{colback=gray!5!white,colframe=black,fonttitle=\bfseries,boxrule=0.5pt,breakable}
\tcolorboxenvironment{lemma}{colback=gray!5!white,colframe=black,fonttitle=\bfseries,boxrule=0.5pt,breakable}
\tcolorboxenvironment{proposition}{colback=gray!5!white,colframe=black,fonttitle=\bfseries,boxrule=0.5pt,breakable}
\tcolorboxenvironment{remark}{colback=gray!5!white,colframe=black,fonttitle=\bfseries,boxrule=0.5pt,breakable}
\tcolorboxenvironment{assumption}{colback=white,colframe=black,fonttitle=\bfseries,boxrule=0.5pt,breakable}

\graphicspath{{figures/}}

\newcommand{\draftdate}{\the\year-\ifnum\month<10 0\fi\the\month-\ifnum\day<10 0\fi\the\day}
\titlehead{\hfill \draftdate}
\title{Proximal Balancing for Causal Effect Estimation under Unmeasured Confounding}
\author{Yonghan Jung\thanks{Code, results, and scripts that reproduce every figure and table: \url{https://github.com/CausalDataScience/proximal-balancing}.}\\[0.3em]\normalsize School of Information Sciences\\\normalsize University of Illinois Urbana-Champaign}
\date{}

\begin{document}
\maketitle
\allowdisplaybreaks

\begin{abstract}
\emergencystretch=2em
Estimating causal effects from observational data is central to science and policy, but the effects are not identified when confounders are unmeasured. Proximal causal inference addresses this problem with proxies of the unmeasured confounders. However, existing proxy-based approaches either designate proxy roles and solve an inverse problem, which is ill-posed and hard to estimate with high-dimensional proxies, or use a latent-variable model, which assumes that the learned latent variable matches the hidden confounder and leaves bias when it does not. To address these challenges, we introduce \emph{proximal balancing}. It carries the classical idea of covariate balancing to confounders that are observed only through proxies: it learns a low-dimensional summary of the covariates and proxies that makes the treatment groups comparable, and then adjusts for this summary. It needs no designated proxy roles, inverse problem, or latent model. We give identification theory, finite-sample guarantees, and a practical algorithm, PROBE. We demonstrate the method on low-dimensional, high-dimensional, and image proxies and on real-world data.

\end{abstract}

\tableofcontents
\newpage

\begingroup
\allowdisplaybreaks
\sloppy
\section{Introduction}
\label{sec:main-introduction}

Estimating treatment effects from observational data, a central task in medicine and public policy, usually assumes that every confounder (every common cause of the treatment and the outcome) is measured. This assumption, known as \emph{no unmeasured confounding}, often fails. In a hypothetical study of an aggressive therapy $A$ and a recovery outcome $Y$, clinicians treat patients with severe disease more often, and severe disease lowers recovery, so the unrecorded severity $U$ confounds the comparison even after adjustment for recorded covariates $X$ such as age. The effect is then not identified, that is, not determined by the distribution of the observed data \citep{pearl2009causality}.

Although $U$ is unrecorded, modern data often capture it indirectly. For instance, a pretreatment medical image reflects severity through the size, texture, and spread of affected tissue. Variables that carry such indirect information about a hidden confounder are called \emph{proxies}, ranging from discrete codes to multimodal data such as images with thousands of pixels. Proxy-based causal inference has drawn growing attention because it identifies causal effects under hidden confounding by adding structural assumptions that are often plausible in practice \citep{kuroki2014measurement,miao2018proxy,tchetgen2024ProximalCausalInference}.

Existing proxy-based methods either designate proxy roles in advance, for example, that each proxy affects only one of the treatment and the outcome, and obtain the identification functional as the solution of an inverse problem, such as an integral equation for a bridge function \citep{miao2018proxy}, or model latent variables from the observations and treat them as the hidden confounder under strong assumptions \citep{louizos2017causal} (see Section~\ref{sec:related-work} for details). For high-dimensional proxies such as images, these approaches are difficult to apply. When multiple medical images, such as radiographs, CT scans, and MRI scans taken before treatment, serve as proxies of hidden severity, clinical knowledge seldom tells which images may influence the treatment decision, so roles cannot be designated with confidence. The inverse problem is ill-posed, so small errors in the observed regressions can produce large errors in its solution, and this solution is harder to estimate when it takes an image as input \citep{kallus2021causal}. A learned latent variable, in turn, need not match the hidden confounder \citep{rissanen2021critical}.

\begin{figure*}[t]
\centering
\includegraphics{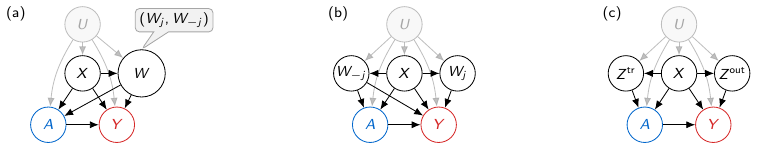}
\caption[Causal structures of proximal balancing and proximal causal inference.]{\textbf{(a)} The proxy $W$, split into $(W_j,W_{-j})$. \textbf{(b)} A higher-resolution view of (a), showing an allowed structure: the held-out block $W_j$ affects $Y$ but not $A$. \textbf{(c)} Proximal causal inference with designated proxies $Z^{\mathrm{tr}}$ and $Z^{\mathrm{out}}$.}
\label{fig:causal-structures}
\end{figure*}

To address these concerns, we introduce \emph{proximal balancing}, which uses a multi-dimensional proxy without designating roles in advance, solving an inverse problem, or modeling the confounder. It builds on balancing in causal inference \citep{rosenbaum1983central}: when high-dimensional covariates $X$ suffice for adjustment, a low-dimensional function $\phi(X)$ satisfying $X\perp\!\!\!\perp A\mid\phi(X)$ suffices as well. Proximal balancing carries this idea to hidden confounding. As Fig.~\ref{fig:causal-structures}(a) shows, it splits the proxy into a held-out block $W_j$ and the remaining blocks $W_{-j}$ and learns $\phi(X,W_{-j})$ satisfying $(X,W_j)\perp\!\!\!\perp A\mid\phi(X,W_{-j})$. We first develop this idea for a known held-out block and then extend it to the case where only the whole proxy $W=(W_1,\ldots,W_J)$ is given and the valid block is unknown.

We make three contributions.
\begin{enumerate}[leftmargin=*]
\item \textbf{Proximal balancing framework.} We develop proximal balancing, which identifies the average treatment effect from a high-dimensional proxy (Section~\ref{sec:probe}).
\item \textbf{Extensions.} We bound the bias under approximate balance, identify the effect by a plurality vote when the valid block is unknown, and give finite-sample guarantees for our algorithm, PROxy Blockwise Exclusion (PROBE) (Sections~\ref{sec:unknown-blocks}--\ref{sec:learning-search}).
\item \textbf{Empirical evidence.} Simulations with scalar, vector, and image-rendered proxies show that proximal balancing recovers the known effect (Section~\ref{sec:main-experiments}).
\end{enumerate}

\subsection{Related Work}
\label{sec:related-work}

\textbf{Causal inference under unmeasured confounding.} When the causal graph is known, a graphical criterion such as the front-door criterion can identify an effect despite unmeasured confounding \citep{pearl2009causality,fulcher2020robust,jung2026debiased}, and a complete algorithm decides whether an effect is identifiable \citep{tian2002general,shpitser2006identification}. Weighting, doubly robust, and debiased machine learning estimators have been developed for identifiable effects \citep{jung2020estimating,jung2020werm,jung2021estimating-dml,jung2021estimating-mec,bhattacharya2022semiparametric,jung2023estimating-obs-exp,guo2023targeted,jung2024complete,jung2024uca}. These methods, however, assume that the causal graph, or at least its equivalence class, is known; without such a graph, the observed data only bound the effect \citep{jung2026information}.

\begin{wraptable}{r}{0.45\textwidth}
\vspace{-\baselineskip}
\centering
\small
\setlength{\tabcolsep}{2.5pt}
\caption[Requirements of proxy-based methods.]{Requirements of proxy-based methods; a check means not needed. Roles: designated proxy roles. Inverse: an inverse problem. Latent: a latent model of $U$.}
\label{tab:related-work-delta}
\begin{tabular}{@{}lccc@{}}
\toprule
Method & Roles & Inverse & Latent \\
\midrule
Proximal inference & $\times$ & $\times$ & $\checkmark$ \\
Single-proxy control & $\times$ & $\times$ & $\checkmark$ \\
CEVAE & $\checkmark$ & $\checkmark$ & $\times$ \\
\citeauthor{saha2026causal} & $\checkmark$ & $\checkmark$ & $\times$ \\
Proximal balancing & $\checkmark$ & $\checkmark$ & $\checkmark$ \\
\bottomrule
\end{tabular}
\vspace{-\baselineskip}
\end{wraptable}

\textbf{Proxy-based identification.} For discrete variables, matrix adjustment \citep{kuroki2014measurement,lee2021matrix} identifies the effect by inverting the matrix of proxy probabilities given the confounder. Proximal causal inference, which grew out of negative-control methods \citep{lipsitch2010negative} and matrix adjustment, uses a treatment-side and an outcome-side proxy, which may affect only the treatment and only the outcome, respectively, as in Fig.~\ref{fig:causal-structures}(c) \citep{miao2018proxy,tchetgen2024ProximalCausalInference,cui2024semiparametric}. It computes the effect by solving an inverse problem, an integral equation among observed distributions, and completeness conditions make this solution identify the effect. Kernel and deep-learning estimators solve this inverse problem with statistical guarantees \citep{mastouri2021proximal,kallus2021causal}, including for image proxies and for proxies extracted from two separate pretreatment texts \citep{xu2021deep,chen2024proximal}. Single-proxy control \citep{tchetgen2014control,park2024single} uses one proxy of $Y(0)$, the outcome that a unit would have without treatment, and identifies the effect on the treated; a kernel version handles deterministic outcomes \citep{xu2025kernel}. All of these methods fix proxy roles before the analysis and solve an inverse problem (Table~\ref{tab:related-work-delta}). Fig.~\ref{fig:causal-structures} contrasts proximal balancing, panels (a) and (b), with the standard proximal structure, panel (c). Proximal balancing restricts only the valid held-out block $W_j$, which must not affect the treatment; the remaining blocks $W_{-j}$ may affect both, and the valid block need not be known in advance. It replaces the inverse problem with an observable balance condition followed by ordinary adjustment.

\textbf{Latent-variable modeling approaches.} CEVAE \citep{louizos2017causal} fits a variational autoencoder \citep{kingma2014auto} to the proxies, treatment, and outcome, and adjusts for the inferred latent vector. Its justification rests on the premise that the encoding and decoding steps recover a latent variable that matches $U$, a premise that need not hold, as \citet{rissanen2021critical} demonstrate empirically. Proximal balancing models neither the proxy nor the confounder.

\textbf{Unknown valid candidates.} Majority and plurality rules, implemented by median and mode estimators, identify an effect when most, or the largest group, of the candidate instruments are valid \citep{kang2016instrumental,bowden2016consistent,hartwig2017robust,guo2018confidence}. In proximal inference, \citet{rakshit2025adaptive} take a median over candidate proxies when more than half are valid, \citet{yu2025fortified} assume that at least a known number of treatment-side proxies are valid, and data-driven searches select valid negative controls in linear models \citep{kummerfeld2024data,xie2024automating}. \citet{saha2026causal} identify the effect without designating proxy roles when three proxies are mutually independent given a categorical confounder. Proximal balancing applies the plurality rule to held-out blocks that pass a balance audit, in a nonparametric model with a learned representation. Its conditions also allow the blocks to depend on each other beyond the confounder.

\textbf{Balancing and representation learning.} Balancing scores and weights align measured covariates across treatment groups \citep{rosenbaum1983central,hainmueller2012entropy,imai2014covariate}, and representation learning balances after compressing high-dimensional covariates \citep{shalit2017estimating}. Both assume that the measured covariates suffice for adjustment, and a compressed representation can lose confounding information. Proximal balancing audits its representation with a held-out proxy block, and its estimation step uses separate samples for representation learning, nuisance fitting, and evaluation.

\section{Problem Setting and Notation}
\label{sec:problem-setting}

 We observe $n$ i.i.d. copies $O_1,\ldots,O_n$ of $O\triangleq(X,W,A,Y)$, where $A\in\{0,1\}$ is treatment, $Y$ is the observed outcome, and $X$ contains observed pretreatment covariates. For each $a\in\{0,1\}$, the \emph{potential outcome} $Y(a)$ is the outcome that would be observed if treatment were set to $a$. The target is the average treatment effect $\tau\triangleq\mathbb E\{Y(1)-Y(0)\}$.

We impose the following global outcome conditions.

\begin{assumption}[Consistency and integrability]
\label{ass:consistency}
For each $a\in\{0,1\}$, $Y=Y(A)$ and $\mathbb E|Y(a)|<\infty$.
\end{assumption}

The high-dimensional pretreatment proxy $W=(W_1,\ldots,W_J)$ is divided into prespecified blocks. An unobserved pretreatment variable $U$ may affect both treatment and outcome. The proxy $W$ carries information about $U$. Panels (a) and (b) of Fig.~\ref{fig:causal-structures} visualize this problem setting; they do not themselves guarantee identification. We analyze one block at a time: one block is held out, and the other blocks remain available.

\begin{definition}[Held-out and remaining proxy blocks]
\label{def:held-out-remaining-blocks}
Let $[J]\triangleq\{1,\ldots,J\}$. For $j\in[J]$, call $W_j$ the held-out block and write $W_{-j}\triangleq(W_k:k\in[J]\setminus\{j\})$ for the collection of remaining blocks. Define $V_j\triangleq(X,W_{-j})$ as the retained variables and $T_j\triangleq(X,W_j)$ as the variables used later to formulate the held-out balance condition.
\end{definition}

\section{Identification by Proximal Balancing}
\label{sec:identification}

We introduce \emph{proximal balancing}, an identification method that brings covariate-balancing ideas \citep{hainmueller2012entropy,imai2014covariate} to high-dimensional pretreatment proxies. It learns a representation from the retained proxy blocks and uses the held-out block with observed covariates to audit residual treatment information.

\subsection{Exact Balance}
\label{sec:probe}

We first consider a fixed held-out block within the high-dimensional proxy $W=(W_1,\ldots,W_J)$. For $j\in[J]$, Definition~\ref{def:held-out-remaining-blocks} calls $W_j$ the \emph{held-out block} and $W_{-j}$ the \emph{remaining blocks}.

\begin{assumption}[Latent exchangeability and held-out treatment independence]
\label{ass:joint-latent-exchangeability}\label{ass:held-out-separation}
	For each $a\in\{0,1\}$, $Y(a)\perp\!\!\!\perp A\mid(U,V_j)$ and $W_j\perp\!\!\!\perp A\mid(U,V_j)$.
\end{assumption}
Latent exchangeability says that, after $(U,V_j)$ is fixed, treatment carries no further information about $Y(a)$. Held-out treatment independence states that $W_j$ carries no additional treatment information given the latent and retained variables.

Let $\phi_j$ be a measurable encoder and set $Z_j\triangleq\phi_j(V_j)=\phi_j(X,W_{-j})$. Proximal balancing seeks a representation that makes the treatment arms comparable for the potential-outcome means, while $T_j=(X,W_j)$ supplies an observable check of residual treatment information. For a distribution $Q$ of $(V_j,U)$, let $\mathcal K_jQ$ be the distribution of $T_j$ obtained by drawing $(V_j,U)\sim Q$ and then $W_j\sim P(W_j\mid V_j,U)$. Let $\mu_{a,j,z}(v,u)\triangleq\mathbb E\{Y(a)\mid V_j=v,U=u,Z_j=z\}$ be the mean of $Y(a)$ at a latent state.

Different distributions $Q,Q'$ of $(V_j,U)$ can induce the same distribution of $T_j$. The next assumption states when this loss of latent information is harmless for the potential-outcome means needed for identification.

\begin{assumption}[Outcome-relevant completeness]
\label{ass:outcome-relevant-completeness}
For almost every $z$, every pair of distributions $Q,Q'$ of $(V_j,U)$ that are absolutely continuous with respect to $P(V_j,U\mid Z_j=z)$ and under which $\mu_{0,j,z}$ and $\mu_{1,j,z}$ are integrable, and each $a\in\{0,1\}$, $\mathcal K_jQ=\mathcal K_jQ'$ implies $\mathbb E_Q\mu_{a,j,z}=\mathbb E_{Q'}\mu_{a,j,z}$.
\end{assumption}

 In words, if two distributions of the latent state $Q,Q'$ give different means of $Y(a)$, they also give different distributions of $T_j=(X,W_j)$.

\begin{theorem}[Identification from a held-out audit]
\label{thm:exact-proxy-identification}
Fix $j\in[J]$ and a measurable encoder $\phi_j$, and set $Z_j\triangleq\phi_j(V_j)$. Suppose Assumptions~\ref{ass:consistency}--\ref{ass:outcome-relevant-completeness} hold. If
\begin{align}
	0 &< P(A=1\mid Z_j)<1\ \text{a.s.},\qquad (X,W_j)\perp\!\!\!\perp A\mid Z_j
\label{eq:exact-proxy-balance}
\end{align}
then
\begin{align}
\tau &=\mathbb E[\mathbb E(Y\mid A=1,Z_j)-\mathbb E(Y\mid A=0,Z_j)].
\label{eq:exact-identification}
\end{align}
\end{theorem}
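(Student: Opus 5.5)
The plan is to show, for each $a$, that $\mathbb E(Y\mid A=a,Z_j)=\mathbb E\{Y(a)\mid Z_j\}$ almost surely. Averaging over $Z_j$ then gives \eqref{eq:exact-identification}. The positivity part of \eqref{eq:exact-proxy-balance} makes both conditional expectations well defined a.s., and Assumption~\ref{ass:consistency} lets us replace $Y$ by $Y(a)$ on $\{A=a\}$. So it suffices to prove $\mathbb E\{Y(a)\mid A=a,Z_j=z\}=\mathbb E\{Y(a)\mid Z_j=z\}$ for a.e.\ $z$.

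\textbf{Step 1 (reduce to latent means).} Since $Z_j=\phi_j(V_j)$ is a function of $V_j$, the first half of Assumption~\ref{ass:joint-latent-exchangeability} gives
\[
\mathbb E\{Y(a)\mid A=a,V_j,U\}=\mathbb E\{Y(a)\mid V_j,U\}=\mu_{a,j,Z_j}(V_j,U).
\]
Conditioning on $Z_j=z$ then yields
\[
\mathbb E\{Y(a)\mid A=a,Z_j=z\}=\mathbb E_{Q_a^z}\mu_{a,j,z}, \qquad \mathbb E\{Y(a)\mid Z_j=z\}=\mathbb E_{Q^z}\mu_{a,j,z},
\]
where $Q_a^z=P(V_j,U\mid A=a,Z_j=z)$ and $Q^z=P(V_j,U\mid Z_j=z)$. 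By positivity, $Q_a^z\ll Q^z$, with density bounded by $1/P(A=a\mid Z_j=z)$. Integrability of $\mu$ under $Q^z$ follows from $\mathbb E|Y(a)|<\infty$, and it transfers to $Q_a^z$ through the same density bound. The problem is now a comparison of two latent-state distributions.

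\textbf{Step 2 (push through the kernel).} By the second half of Assumption~\ref{ass:held-out-separation}, $P(W_j\mid V_j,U,A=a)=P(W_j\mid V_j,U)$. Also $Z_j$ is $V_j$-measurable, so the same kernel applies after further conditioning on $Z_j=z$. Hence the law of $T_j=(X,W_j)$ given $(A=a,Z_j=z)$ is $\mathcal K_jQ_a^z$, and the law of $T_j$ given $Z_j=z$ is $\mathcal K_jQ^z$. The balance condition $(X,W_j)\perp\!\!\!\perp A\mid Z_j$ says these two laws coincide for a.e.\ $z$. Assumption~\ref{ass:outcome-relevant-completeness} therefore gives $\mathbb E_{Q_a^z}\mu_{a,j,z}=\mathbb E_{Q^z}\mu_{a,j,z}$, which closes the argument.

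\textbf{Main obstacle.} I expect the hard part to be the measure-theoretic bookkeeping in Step 2. One has to justify, with regular conditional distributions, that conditioning on $Z_j=z$ (a function of $V_j$) commutes with applying the kernel $P(W_j\mid V_j,U)$. One also has to check that the null sets over $z$ from the balance condition, the completeness assumption, and exchangeability can be combined into a single null set. I would handle this by working with versions of the disintegration of $P(V_j,U,W_j,A)$ along $Z_j$, and verifying the kernel identity by integrating against product test functions $g(z)h(t)$.
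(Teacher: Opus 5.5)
Your proposal is correct and follows the paper's proof. Both arguments reduce each conditional mean to an integral of $\mu_{a,j,z}$ against a latent law, use held-out treatment independence to identify $P(X,W_j\mid A=a,Z_j=z)$ with $\mathcal K_j$ applied to the arm-specific latent law, and invoke Assumption~\ref{ass:outcome-relevant-completeness} on the balanced strata. The only cosmetic difference is that the paper compares the two arm-specific laws $Q_{1,j,z}$ and $Q_{0,j,z}$ and then averages over $A$, whereas you compare $Q_a^z$ directly with the pooled law $Q^z=P(V_j,U\mid Z_j=z)$, which is admissible and satisfies $\mathcal K_jQ^z=P(X,W_j\mid Z_j=z)$; this reaches $\mathbb E\{Y(a)\mid Z_j=z\}$ in one step.
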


\subsection{Unknown Valid Block}
\label{sec:unknown-blocks}

Section~\ref{sec:probe} treats the ideal case in which a valid held-out block is known and exact balance holds. We next allow the valid block to be unknown and then quantify the bias caused by imperfect balance.

Let $\mathfrak S\triangleq\{S\subset[J]:1\le |S|\le J-1\}$ be the collection of nonempty proper subsets of proxy-block indices. For $S\in\mathfrak S$, write $W_S=(W_k)_{k\in S}$ for the held-out blocks and $W_{-S}=(W_k)_{k\notin S}$ for the remaining blocks, set $V_S\triangleq(X,W_{-S})$, and let $Z_S\triangleq\phi_S(V_S)$. The singleton $S=\{j\}$ recovers the split of Section~\ref{sec:probe}. Each split yields the population candidate adjustment value
\begin{align}
\label{eq:block-candidate-adjustment-value}
	\theta_S &\triangleq\mathbb E[\mathbb E(Y\mid A=1,Z_S)-\mathbb E(Y\mid A=0,Z_S)].
\end{align}

The next definition records the observable screen and the target plurality condition.
\begin{definition}[Proximal plurality]
\label{def:population-plurality}
Let $\mathcal B$ be the collection of splits $S\in\mathfrak S$ for which overlap $0<P(A=1\mid Z_S)<1$ holds almost surely and exact balance $(X,W_S)\perp\!\!\!\perp A\mid Z_S$ holds. The target has a unique proximal plurality when
\begin{align}
|\{S\in\mathcal B:\theta_S=\tau\}| &>\max_{c\ne\tau:\{S\in\mathcal B:\theta_S=c\}\ne\varnothing}|\{S\in\mathcal B:\theta_S=c\}|,
\label{eq:unique-target-plurality}
\end{align}
where the maximum over an empty set is zero.
\end{definition}
In words, proximal plurality means that $\tau$ is the most frequent candidate value among the screened splits, even if it is not attained by a majority. For example, let $J=3$, so $\mathfrak S=\{\{1\},\{2\},\{3\},\{1,2\},\{1,3\},\{2,3\}\}$. If $\mathcal B=\{\{1\},\{2\},\{1,2\},\{1,3\}\}$, $\tau=1$, $\theta_{\{1\}}=\theta_{\{1,2\}}=1$, $\theta_{\{2\}}=0.5$, and $\theta_{\{1,3\}}=0.3$, then $\tau$ appears twice while each competing value appears once, so the target has a unique proximal plurality.

\begin{theorem}[Identification with an unknown valid block]
\label{thm:unknown-valid-block-identification}
If $\mathcal B\ne\varnothing$ and the target has a unique proximal plurality, then $\{\tau\}=\operatorname*{arg\,max}_{c\in\mathbb R}|\{S\in\mathcal B:\theta_S=c\}|$.
\end{theorem}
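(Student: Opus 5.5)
The argument is purely combinatorial; it uses neither Theorem~\ref{thm:exact-proxy-identification} nor any distributional assumption beyond the definition of $\mathcal B$ and the plurality condition \eqref{eq:unique-target-plurality}.

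\textbf{Counting function.} For $c\in\mathbb R$, set $N(c)\triangleq|\{S\in\mathcal B:\theta_S=c\}|$. Since $\mathfrak S$ is finite, $\mathcal B$ is finite. Hence $N$ takes values in $\{0,1,\ldots,|\mathcal B|\}$ and is positive on only finitely many points, namely the distinct values of $\theta_S$ for $S\in\mathcal B$. So $\max_c N(c)$ exists, and the $\arg\max$ is a well-defined set.

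\textbf{Main steps.} Let $M\triangleq\max\{N(c):c\ne\tau,\ N(c)>0\}$, with $M=0$ if this set is empty. Condition \eqref{eq:unique-target-plurality} reads $N(\tau)>M$.
\begin{enumerate}[leftmargin=*]
\item Every $c\ne\tau$ satisfies $N(c)\le M$. If $N(c)>0$, this holds by the definition of $M$. If $N(c)=0$, it holds because $M\ge 0$.
\item Combining step 1 with $N(\tau)>M$ gives $N(\tau)>N(c)$ for every $c\ne\tau$.
\item Therefore $\tau$ attains $\max_c N(c)$, and every $c\ne\tau$ fails to attain it. This gives $\arg\max_c N(c)=\{\tau\}$.
\end{enumerate}

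\textbf{Role of the hypotheses.} The hypothesis $\mathcal B\ne\varnothing$ is what makes the conclusion meaningful. If $\mathcal B$ were empty, $N$ would vanish identically and the $\arg\max$ would be all of $\mathbb R$. With $\mathcal B\ne\varnothing$, the maximum of $N$ is at least $1$. Strictly speaking, this also follows from the plurality condition, since $N(\tau)>M\ge 0$ forces $N(\tau)\ge 1$.

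\textbf{Expected obstacle.} The only delicate point is the empty-maximum convention, which must be handled correctly when no competing value occurs among the screened splits. That convention is exactly what sets $M=0$ and makes step 1 valid in that case.
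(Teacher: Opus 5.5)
Your argument is correct and is essentially the paper's proof. Both observe that the plurality condition gives $N(\tau)>N(c)$ for every $c\ne\tau$, with the empty-maximum convention covering the case where no competing value occurs; your treatment of values with $N(c)=0$ and of the finiteness of $\mathcal B$ makes explicit what the paper leaves implicit.
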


\subsection{Approximate Balance}
\label{sec:imperfect-balance}

Exact balance in Eq.~\eqref{eq:exact-proxy-balance} may be infeasible in practice. We derive the bound for a fixed held-out block $W_j$. The same argument extends to a grouped held-out block $W_S$ after replacing the single-block objects by their grouped counterparts. Define $Z_j^\phi \triangleq \phi(V_j)$, $e_{j,\phi}(z) \triangleq P(A=1\mid Z_j^\phi=z)$ and $q_{j,\phi}(t,z) \triangleq P(A=1\mid T_j=t,Z_j^\phi=z)$.

\begin{definition}[Residual treatment discrepancy]
\label{def:held-out-proxy-discrepancy}
The residual treatment discrepancy of $\phi$ is
\begin{align}\label{eq:residual-treatment-discrepancy}
	D_{\mathrm{res},j}(\phi) &\triangleq\left[\mathbb E\bigl[\{q_{j,\phi}(T_j,Z_j^\phi)-e_{j,\phi}(Z_j^\phi)\}^2\bigr]\right]^{1/2}.
\end{align}
\end{definition}
The discrepancy $D_{\mathrm{res},j}(\phi)$ is the $L_2$ magnitude of the additional treatment information in $T_j=(X,W_j)$ beyond $Z_j^\phi$. It characterizes exact conditional balance as follows.

\begin{proposition}[Zero discrepancy]
\label{prop:zero-discrepancy}
	$D_{\mathrm{res},j}(\phi)=0$ if and only if $A\perp\!\!\!\perp(X,W_j)\mid Z_j^\phi$.
\end{proposition}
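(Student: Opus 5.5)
Since $A$ is binary, conditional independence of $A$ and $T_j=(X,W_j)$ given $Z_j^\phi$ is the same as saying that the conditional treatment probability does not depend on $T_j$ once $Z_j^\phi$ is fixed. The plan is to make this precise through the tower property, using the fact that $Z_j^\phi=\phi(X,W_{-j})$ is a function of the observed data.

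\emph{Step 1 (zero discrepancy implies independence).} Since $D_{\mathrm{res},j}(\phi)$ is the $L_2$ norm of $q_{j,\phi}(T_j,Z_j^\phi)-e_{j,\phi}(Z_j^\phi)$, the condition $D_{\mathrm{res},j}(\phi)=0$ is equivalent to
\[
P(A=1\mid T_j,Z_j^\phi)=P(A=1\mid Z_j^\phi)\quad\text{a.s.}
\]
Because $A\in\{0,1\}$, the conditional law of $A$ given $(T_j,Z_j^\phi)$ is determined by $q_{j,\phi}$. That law therefore equals the conditional law of $A$ given $Z_j^\phi$ alone. For bounded measurable $f,g$, I would then compute
\[
\mathbb E[f(A)g(T_j)\mid Z_j^\phi]=\mathbb E[g(T_j)\,\mathbb E\{f(A)\mid T_j,Z_j^\phi\}\mid Z_j^\phi].
\]
Substituting $\mathbb E\{f(A)\mid T_j,Z_j^\phi\}=f(1)e_{j,\phi}+f(0)(1-e_{j,\phi})$, which depends only on $Z_j^\phi$, the right-hand side factors as $\mathbb E[f(A)\mid Z_j^\phi]\,\mathbb E[g(T_j)\mid Z_j^\phi]$. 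This factorization is conditional independence.

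\emph{Step 2 (independence implies zero discrepancy).} Conversely, if $A\perp\!\!\!\perp T_j\mid Z_j^\phi$, then $P(A=1\mid T_j,Z_j^\phi)=P(A=1\mid Z_j^\phi)$ a.s., by the standard characterization of conditional independence. Hence the integrand of $D_{\mathrm{res},j}(\phi)$ vanishes a.s. and $D_{\mathrm{res},j}(\phi)=0$.

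The main obstacle is measure-theoretic rather than conceptual: $q_{j,\phi}$ and $e_{j,\phi}$ are versions of conditional expectations, defined only up to null sets. I would handle this by interpreting every identity almost surely. The justification is that $Z_j^\phi$ is $\sigma(X,W_{-j})$-measurable, so conditioning on $(T_j,Z_j^\phi)$ is well defined and the tower property applies.
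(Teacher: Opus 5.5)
Your proof is correct and takes the same route as the paper. The paper reduces $D_{\mathrm{res},j}(\phi)=0$ to $q_{j,\phi}(T_j,Z_j^\phi)=e_{j,\phi}(Z_j^\phi)$ almost surely and then asserts the binary-treatment equivalence with conditional independence; your tower-property factorization just fills in that step. One small correction: the tower step needs only $\sigma(Z_j^\phi)\subseteq\sigma(T_j,Z_j^\phi)$, not the fact that $Z_j^\phi$ is a function of $(X,W_{-j})$, so the argument works for any random variable in place of $Z_j^\phi$.
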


To translate residual treatment discrepancy into bias of an adjusted effect, we impose the following stability condition.
\begin{assumption}[$L_2$ outcome-relevant stability]
\label{ass:uniform-stability}
For each encoder $\phi$, let $\mu^\phi_{a,j,z}$ be $\mu_{a,j,z}$ with $Z_j^\phi = z$, let $P_{\phi,z}\triangleq P(T_j\mid Z_j^\phi=z)$, and for a signed measure $\nu$ with a density with respect to $P_{\phi,z}$ write $\|\nu\|_{\phi,z}\triangleq\|d\nu/dP_{\phi,z}\|_{L_2(P_{\phi,z})}$. There is a finite $\Gamma_j(\phi)$ such that, for almost every $z$, each $a\in\{0,1\}$, and all $Q,Q'$ as in Assumption~\ref{ass:outcome-relevant-completeness} with $Z_j^\phi$ in place of $Z_j$,
\[|\mathbb E_Q\mu^\phi_{a,j,z}-\mathbb E_{Q'}\mu^\phi_{a,j,z}|\le\Gamma_j(\phi)\,\|\mathcal K_jQ-\mathcal K_jQ'\|_{\phi,z}.\]
\end{assumption}
 If $\mathcal K_jQ=\mathcal K_jQ'$, the right side of Assumption~\ref{ass:uniform-stability} is zero, so Assumption~\ref{ass:uniform-stability} implies Assumption~\ref{ass:outcome-relevant-completeness} for $Z_j^\phi$. It gives additional rate information: when the two distributions of $T_j$ are close, the two means of $Y(a)$ are close, within the unobserved factor $\Gamma_j(\phi)$.

Under Assumption~\ref{ass:uniform-stability}, imperfect balance yields a quantitative bound on causal bias.

\begin{theorem}[Approximate-balance bias]
\label{thm:approximate-proxy-bias-bound}
For any representation $Z$, define its ordinary adjustment value as $\tau_Z\triangleq\mathbb E[\mathbb E(Y\mid A=1,Z)-\mathbb E(Y\mid A=0,Z)]$.
Under Assumptions~\ref{ass:consistency}, \ref{ass:joint-latent-exchangeability}, and \ref{ass:uniform-stability}, and overlap $\eta\le P(A=1\mid Z_j^\phi)\le1-\eta$ almost surely for some $\eta\in(0,1/2]$,
\begin{align}
|\tau_{Z_j^\phi}-\tau| &\le\frac{\Gamma_j(\phi)}{\eta(1-\eta)}D_{\mathrm{res},j}(\phi).
\label{eq:uniform-approximate-proxy-bias-bound}
\end{align}
\end{theorem}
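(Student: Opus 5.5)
Write $Z\triangleq Z_j^\phi$, $e(z)\triangleq e_{j,\phi}(z)$, $q(t,z)\triangleq q_{j,\phi}(t,z)$. The plan is to compare, at each fixed $z$, the treated and control arms as two distributions of the latent state $(V_j,U)$. We push both through $\mathcal K_j$ and apply Assumption~\ref{ass:uniform-stability}. This yields a pointwise bound involving $q-e$, which we then integrate over $z$.

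\textbf{Step 1 (arm-wise representation).} Fix $a$ and $z$. Let $Q_{a,z}\triangleq P(V_j,U\mid A=a,Z=z)$ and $Q_z\triangleq P(V_j,U\mid Z=z)$. Both are absolutely continuous with respect to $P(V_j,U\mid Z=z)$, by overlap. Since $Z$ is a function of $V_j$, latent exchangeability gives
\[
\mathbb E(Y\mid A=a,Z=z)=\mathbb E\{Y(a)\mid A=a,Z=z\}=\mathbb E_{Q_{a,z}}\mu^\phi_{a,j,z}.
\]
Here we use consistency and then condition on $(V_j,U)$ inside, with $Y(a)\perp\!\!\!\perp A\mid(U,V_j)$. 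Likewise $\mathbb E\{Y(a)\mid Z=z\}=\mathbb E_{Q_z}\mu^\phi_{a,j,z}$. Held-out treatment independence $W_j\perp\!\!\!\perp A\mid(U,V_j)$ implies $P(W_j\mid V_j,U,A=a)=P(W_j\mid V_j,U)$. Hence $\mathcal K_jQ_{a,z}=P(T_j\mid A=a,Z=z)$ and $\mathcal K_jQ_z=P_{\phi,z}$. Integrability follows from Assumption~\ref{ass:consistency}.

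\textbf{Step 2 (density of the arm-specific law).} By Bayes' rule,
\[
\frac{dP(T_j\mid A=1,Z=z)}{dP_{\phi,z}}(t)=\frac{q(t,z)}{e(z)},\qquad \frac{dP(T_j\mid A=0,Z=z)}{dP_{\phi,z}}(t)=\frac{1-q(t,z)}{1-e(z)}.
\]
Therefore $\|\mathcal K_jQ_{1,z}-\mathcal K_jQ_z\|_{\phi,z}=\|q(\cdot,z)-e(z)\|_{L_2(P_{\phi,z})}/e(z)$, and the analogous identity holds for arm $0$ with denominator $1-e(z)$. Applying Assumption~\ref{ass:uniform-stability} with $Q=Q_{a,z}$ and $Q'=Q_z$ gives
\[
\bigl|\mathbb E(Y\mid A=a,Z=z)-\mathbb E\{Y(a)\mid Z=z\}\bigr|\le\Gamma_j(\phi)\,\frac{\|q(\cdot,z)-e(z)\|_{L_2(P_{\phi,z})}}{\min\{e(z),1-e(z)\}}.
\]

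\textbf{Step 3 (aggregate).} We have $\tau=\mathbb E[\mathbb E\{Y(1)\mid Z\}-\mathbb E\{Y(0)\mid Z\}]$. Subtracting this from $\tau_Z$ and using the triangle inequality, the pointwise error is bounded by
\[
\Gamma_j(\phi)\,\|q-e\|_z\Bigl(\tfrac1{e(z)}+\tfrac1{1-e(z)}\Bigr)=\Gamma_j(\phi)\,\frac{\|q-e\|_z}{e(z)(1-e(z))}\le\frac{\Gamma_j(\phi)}{\eta(1-\eta)}\,\|q-e\|_z,
\]
where $\|q-e\|_z\triangleq\|q(\cdot,z)-e(z)\|_{L_2(P_{\phi,z})}$. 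The inequality uses $e(1-e)\ge\eta(1-\eta)$ on $[\eta,1-\eta]$. Taking expectations over $Z$, Jensen's inequality gives $\mathbb E\|q-e\|_Z\le(\mathbb E\,\mathbb E[(q-e)^2\mid Z])^{1/2}=D_{\mathrm{res},j}(\phi)$, which is the claimed bound.

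\textbf{Main obstacle.} The delicate step is Step 1, specifically verifying $\mathcal K_jQ_{a,z}=P(T_j\mid A=a,Z=z)$. This identity needs $W_j\perp\!\!\!\perp A\mid(U,V_j)$ to still hold after further conditioning on $Z=z$. That is automatic because $Z$ is measurable with respect to $V_j$. The same step also requires that $Q_{a,z}$ is admissible in Assumption~\ref{ass:uniform-stability}: its density with respect to $P(V_j,U\mid Z=z)$ is $P(A=a\mid V_j,U)/P(A=a\mid Z=z)$, which is finite by overlap. If needed, the measurability of conditional distributions is handled by choosing regular versions.
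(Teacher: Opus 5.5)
Your proof is correct and follows the paper's argument: the same Bayes-rule density identities, the identification $\mathcal K_jQ_{a,z}=P(T_j\mid A=a,Z=z)$ from held-out treatment independence, the stability bound, $e(1-e)\ge\eta(1-\eta)$, and Cauchy--Schwarz. The only difference is cosmetic: you apply stability to each arm against the pooled stratum law $Q_z$, whereas the paper applies it to the treated-versus-control pair through $\delta_{a,j}^{\phi}(z)$ and the decomposition $\{1-e(z)\}\delta_{1,j}^{\phi}(z)+e(z)\delta_{0,j}^{\phi}(z)$; because $Q_z=e(z)Q_{1,z}+\{1-e(z)\}Q_{0,z}$ and $\mathcal K_j$ is linear, both routes produce exactly the per-arm terms $\Gamma_j(\phi)\|q-e\|_z/e(z)$ and $\Gamma_j(\phi)\|q-e\|_z/\{1-e(z)\}$.
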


\section{Learning, Estimation, and Search}
\label{sec:learning-search}
\label{sec:estimation}

This section gives the finite-sample proximal-balancing method and algorithm, called PROxy Blockwise Exclusion (PROBE). Throughout this section, we use three independent samples: a discrepancy sample $\mathcal I_{\mathrm D}$ of size $n_{\mathrm D}$ for representation learning, a nuisance sample $\mathcal I_{\mathrm N}$ for outcome and propensity models, and an evaluation sample $\mathcal I_{\mathrm E}$ of size $n_{\mathrm E}$.

We restrict the candidate encoder class to representations satisfying a common overlap condition.

\begin{assumption}[Common representation overlap]
\label{ass:library-overlap}
There is one $\eta\in(0,1/2]$ such that, for every candidate encoder $\phi$, $\eta\le P(A=1\mid Z_j^\phi)\le1-\eta$ almost surely.
\end{assumption}

We estimate residual treatment discrepancy by comparing how well treatment $A$ can be predicted from $Z_j^\phi$ alone and from $(T_j,Z_j^\phi)$. Let $\mathcal F_{j,n_{\mathrm D}}$ be a prespecified encoder class with $Z_j^\phi=\phi(V_j)$. Let $\mathcal H_{0,n_{\mathrm D}}$ and $\mathcal H_{1,n_{\mathrm D}}$ be prespecified classes of $[0,1]$-valued treatment predictors based on $Z_j^\phi$ and $(T_j,Z_j^\phi)$, respectively. We assume that $\mathcal H_{1,n_{\mathrm D}}$ includes $\mathcal H_{0,n_{\mathrm D}}$ as the subclass.

\begin{definition}[Empirical Brier discrepancy]
\label{def:empirical-held-out-proxy-discrepancy}
	For $Z_{j,i}^\phi=\phi(V_{j,i})$, define
	\begin{align}
		\widehat R_{0,\phi}(h) &\triangleq \frac{1}{n_{\mathrm D}} \sum_{i\in\mathcal I_{\mathrm D}} \{A_i-h(Z_{j,i}^\phi)\}^2, \qquad \widehat R_{1,\phi}(h) \triangleq \frac{1}{n_{\mathrm D}} \sum_{i\in\mathcal I_{\mathrm D}} \{A_i-h(T_{j,i},Z_{j,i}^\phi)\}^2, \notag\\
		\widetilde D_{j,n_{\mathrm D}}^2(\phi) &\triangleq \left[ \widehat R_{0,\phi}(\widehat h_{0,\phi}) - \widehat R_{1,\phi}(\widehat h_{1,\phi}) \right]_+ \label{eq:empirical-brier-fitting}, \quad \widehat h_{b,\phi}\in\arg\min_{h\in\mathcal H_{b,n_{\mathrm D}}}\widehat R_{b,\phi}(h), \text{ for }b\in\{0,1\}.
	\end{align}
	
\end{definition}
 The Brier risks are minimized by $e_{j,\phi}$ and $q_{j,\phi}$ of Section~\ref{sec:imperfect-balance}, so $\widetilde D_{j,n_{\mathrm D}}^2(\phi)$ estimates $D_{\mathrm{res},j}^2(\phi)$.

\begin{theorem}[Learned-representation error]
\label{cor:representation-causal}
	Fix $\delta_{\mathrm D}\in(0,1)$, and select a measurable empirical minimizer $\widehat\phi_j \in \operatorname*{arg\,min}_{\phi\in\mathcal F_{j,n_{\mathrm D}}} \widetilde D_{j,n_{\mathrm D}}^2(\phi)$. Set $\widehat Z_j\triangleq\widehat\phi_j(V_j)$ and $\tau_{\widehat Z_j}\triangleq \mathbb E[ \mathbb E(Y\mid A=1,\widehat Z_j) - \mathbb E(Y\mid A=0,\widehat Z_j)]$. Let $\xi_{j,n_{\mathrm D}}(\delta_{\mathrm D})$ be a nonnegative deterministic radius satisfying
	\begin{align}\label{eq:uniform-discrepancy-radius}
		P\left\{
		\sup_{\phi\in\mathcal F_{j,n_{\mathrm D}}}
		\left|
		\widetilde D_{j,n_{\mathrm D}}^2(\phi)
		-
		D_{\mathrm{res},j}^2(\phi)
		\right|
		\le
		\xi_{j,n_{\mathrm D}}(\delta_{\mathrm D})
		\right\}
		&\ge
		1-\delta_{\mathrm D}.
	\end{align}
	Suppose Assumptions~\ref{ass:consistency} and \ref{ass:joint-latent-exchangeability} hold, Assumption~\ref{ass:uniform-stability} holds for every $\phi\in\mathcal F_{j,n_{\mathrm D}}$, and Assumption~\ref{ass:library-overlap} holds. Then, with probability at least $1-\delta_{\mathrm D}$,
	\begin{align}\label{eq:representation-oracle-causal}
		|\tau_{\widehat Z_j}-\tau| &\le
		\frac{\Gamma_j(\widehat\phi_j)}{\eta(1-\eta)}
		\left[
		\inf_{\phi\in\mathcal F_{j,n_{\mathrm D}}}
		D_{\mathrm{res},j}^2(\phi)
		+
		2\xi_{j,n_{\mathrm D}}(\delta_{\mathrm D})
		\right]^{1/2}.
	\end{align}
\end{theorem}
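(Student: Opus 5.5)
The proof combines a standard oracle inequality for empirical risk minimization with Theorem~\ref{thm:approximate-proxy-bias-bound}. Throughout, work on the event $\mathcal E$ from Eq.~\eqref{eq:uniform-discrepancy-radius}. On $\mathcal E$, $\sup_{\phi\in\mathcal F_{j,n_{\mathrm D}}}|\widetilde D_{j,n_{\mathrm D}}^2(\phi)-D_{\mathrm{res},j}^2(\phi)|\le\xi_{j,n_{\mathrm D}}(\delta_{\mathrm D})$, and $P(\mathcal E)\ge 1-\delta_{\mathrm D}$. All remaining steps are deterministic on $\mathcal E$, so the conclusion holds with probability at least $1-\delta_{\mathrm D}$.

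\textbf{Step 1: oracle inequality for the discrepancy.} Fix any $\phi\in\mathcal F_{j,n_{\mathrm D}}$. Since $\widehat\phi_j$ minimizes $\widetilde D^2_{j,n_{\mathrm D}}$ over the class, on $\mathcal E$ we have
\begin{align*}
D_{\mathrm{res},j}^2(\widehat\phi_j)\le \widetilde D_{j,n_{\mathrm D}}^2(\widehat\phi_j)+\xi_{j,n_{\mathrm D}}(\delta_{\mathrm D})\le \widetilde D_{j,n_{\mathrm D}}^2(\phi)+\xi_{j,n_{\mathrm D}}(\delta_{\mathrm D})\le D_{\mathrm{res},j}^2(\phi)+2\xi_{j,n_{\mathrm D}}(\delta_{\mathrm D}).
\end{align*}
Taking the infimum over $\phi$ gives $D_{\mathrm{res},j}^2(\widehat\phi_j)\le\inf_{\phi}D_{\mathrm{res},j}^2(\phi)+2\xi_{j,n_{\mathrm D}}(\delta_{\mathrm D})$. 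Taking square roots of both nonnegative sides then bounds $D_{\mathrm{res},j}(\widehat\phi_j)$ by the bracketed quantity in Eq.~\eqref{eq:representation-oracle-causal} raised to the power $1/2$.

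\textbf{Step 2: transfer to causal bias.} Apply Theorem~\ref{thm:approximate-proxy-bias-bound} with $\phi=\widehat\phi_j$. Assumption~\ref{ass:uniform-stability} holds for every member of the class, and Assumption~\ref{ass:library-overlap} supplies the common overlap constant $\eta$. Assumptions~\ref{ass:consistency} and \ref{ass:joint-latent-exchangeability} are imposed directly. The theorem therefore gives
\[
|\tau_{\widehat Z_j}-\tau|\le \frac{\Gamma_j(\widehat\phi_j)}{\eta(1-\eta)}\,D_{\mathrm{res},j}(\widehat\phi_j).
\]
Combining this with Step 1 yields the claimed bound.

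\textbf{Main obstacle: the data-dependent encoder.} The delicate point is that $\widehat\phi_j$ depends on the discrepancy sample $\mathcal I_{\mathrm D}$, whereas Theorem~\ref{thm:approximate-proxy-bias-bound} is stated for a fixed encoder. I would address this as follows. The quantity $\tau_{\widehat Z_j}$ is interpreted as the population functional $\phi\mapsto\tau_{\phi(V_j)}$ evaluated at $\phi=\widehat\phi_j$, with population expectations taken over a fresh draw of $O$ independent of $\mathcal I_{\mathrm D}$; this is the sense intended by the sample splitting. Because the bound of Theorem~\ref{thm:approximate-proxy-bias-bound} holds for every fixed $\phi\in\mathcal F_{j,n_{\mathrm D}}$ simultaneously, with constants that are uniform in $\phi$ (the same $\eta$, and $\Gamma_j(\phi)$ evaluated at the chosen $\phi$), substituting the random $\widehat\phi_j$ is legitimate pointwise on $\mathcal E$. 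Measurability of the selection is assumed in the statement, which ensures that the event in question is well defined.
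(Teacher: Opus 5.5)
Your proposal is correct and follows the paper's proof essentially step for step. The paper uses the same inequality chain $D_{\mathrm{res},j}^2(\widehat\phi_j)\le\widetilde D_{j,n_{\mathrm D}}^2(\widehat\phi_j)+\xi_{j,n_{\mathrm D}}(\delta_{\mathrm D})\le\widetilde D_{j,n_{\mathrm D}}^2(\phi)+\xi_{j,n_{\mathrm D}}(\delta_{\mathrm D})\le D_{\mathrm{res},j}^2(\phi)+2\xi_{j,n_{\mathrm D}}(\delta_{\mathrm D})$ on the event of Eq.~\eqref{eq:uniform-discrepancy-radius}, and it also handles the data-dependent encoder the way you do: it applies Theorem~\ref{thm:approximate-proxy-bias-bound} at $\widehat\phi_j$ after conditioning on $\mathcal I_{\mathrm D}$ and taking population expectations over a fresh observation.
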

Thm.~\ref{cor:representation-causal} extends Thm.~\ref{thm:approximate-proxy-bias-bound} to a learned representation: the infimum is the best balance the encoder class can reach, and $2\xi_{j,n_{\mathrm D}}(\delta_{\mathrm D})$ is the price of learning it from finite data.

The preceding theorem controls representation-learning bias. To estimate $\tau_{\widehat Z_j}$ while limiting bias from nuisance estimation, we use an augmented inverse-probability-weighted (AIPW) score, whose remainder is a product of propensity and outcome-regression errors \citep{robins1994estimation,tsiatis2006semiparametric,chernozhukov2017double,kennedy2022semiparametric}. Let $m_{a,j}(z)=\mathbb E[Y\mid A=a,\widehat Z_j=z]$ and $e_j(z)=P(A=1\mid\widehat Z_j=z)$. Let $\widehat{m}_{a,j}(z), \widehat{e}_j(z)$ denote its estimator fitted using samples $\mathcal I_{\mathrm N}$, clip $\widehat e_j$ to $[\eta,1-\eta]$, and evaluate them on $\mathcal I_{\mathrm E}$. At $z=\widehat Z_{j,i}$, the uncentered influence-function score \citep{robins1994estimation,tsiatis2006semiparametric} is
\begin{align}\label{eq:honest-aipw-score}
	\mathtt{UIF}_j(O_i) &\triangleq \widehat m_{1,j}(z)-\widehat m_{0,j}(z)+\frac{A_i\{Y_i-\widehat m_{1,j}(z)\}}{\widehat e_j(z)}-\frac{(1-A_i)\{Y_i-\widehat m_{0,j}(z)\}}{1-\widehat e_j(z)}.
\end{align}
Then, we define the AIPW estimator as 
\begin{align}\label{eq:aipw-estimator}
	\widehat\tau_j^{\mathrm{AIPW}} \triangleq \frac{1}{n_{\mathrm E}}\sum_{i\in\mathcal I_{\mathrm E}}\mathtt{UIF}_j(O_i).
\end{align}

\begin{theorem}[Honest AIPW error]
\label{thm:conditional-aipw-error}
Fix $\delta_{\mathrm E}\in(0,1)$. Suppose Assumptions~\ref{ass:consistency} and~\ref{ass:joint-latent-exchangeability} hold, Assumption~\ref{ass:uniform-stability} holds for $\widehat\phi_j$, Assumption~\ref{ass:library-overlap} holds, $\widehat e_j$ is clipped to $[\eta,1-\eta]$, and the variance defined below is finite. Conditional on $\mathcal I_{\mathrm D}$ and $\mathcal I_{\mathrm N}$, let $O$ be a fresh observation and define $\|g\|_{2,j}\triangleq[\mathbb E\{g(\widehat Z_j)^2\mid\mathcal I_{\mathrm D},\mathcal I_{\mathrm N}\}]^{1/2}$ and $\sigma_j^2\triangleq\operatorname{Var}\{\mathtt{UIF}_j(O)\mid\mathcal I_{\mathrm D},\mathcal I_{\mathrm N}\}$. Define 
	\begin{align}
		\mathtt{err}_{j} \triangleq \frac{1}{\eta}\|\widehat e_j-e_j\|_{2,j}\sum_{a=0}^1\|\widehat m_{a,j}-m_{a,j}\|_{2,j},\qquad\mathtt{bias}_j\triangleq\frac{\Gamma_j(\widehat\phi_j)}{\eta(1-\eta)}D_{\mathrm{res},j}(\widehat\phi_j).
	\end{align}
	Then, with probability at least $1-\delta_{\mathrm E}$ over $\mathcal I_{\mathrm E}$,
\begin{align}
|\widehat\tau_j^{\mathrm{AIPW}}-\tau_{\widehat Z_j}| \le \frac{\sigma_j}{\sqrt{n_{\mathrm E}\delta_{\mathrm E}}}+\mathtt{err}_j, \qquad |\widehat\tau_j^{\mathrm{AIPW}}-\tau| \le \frac{\sigma_j}{\sqrt{n_{\mathrm E}\delta_{\mathrm E}}}+\mathtt{err}_j+\mathtt{bias}_j.
\label{eq:end-to-end-sensitivity}
\end{align}
\end{theorem}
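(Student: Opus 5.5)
The proof splits the error into a sampling term and a nuisance remainder; the second inequality then follows from the triangle inequality and Theorem~\ref{thm:approximate-proxy-bias-bound}. Throughout, I condition on $\mathcal I_{\mathrm D}$ and $\mathcal I_{\mathrm N}$. Given these samples, $\widehat\phi_j$, $\widehat m_{a,j}$ and $\widehat e_j$ are fixed functions, and the evaluation observations are i.i.d.\ and independent of them. So $\mathtt{UIF}_j(O_i)$, $i\in\mathcal I_{\mathrm E}$, are i.i.d.\ with conditional mean $\bar\tau_j\triangleq\mathbb E\{\mathtt{UIF}_j(O)\mid\mathcal I_{\mathrm D},\mathcal I_{\mathrm N}\}$ and conditional variance $\sigma_j^2$. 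I write
\[
\widehat\tau_j^{\mathrm{AIPW}}-\tau_{\widehat Z_j}=(\widehat\tau_j^{\mathrm{AIPW}}-\bar\tau_j)+(\bar\tau_j-\tau_{\widehat Z_j}).
\]
For the first term, Chebyshev's inequality gives $P\{|\widehat\tau_j^{\mathrm{AIPW}}-\bar\tau_j|>\sigma_j/\sqrt{n_{\mathrm E}\delta_{\mathrm E}}\}\le\delta_{\mathrm E}$, which yields the sampling term.

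For the second term, I use the standard AIPW remainder calculation. By iterated expectations given $\widehat Z_j$,
\[
\mathbb E\bigl[A\{Y-\widehat m_{1,j}(\widehat Z_j)\}/\widehat e_j(\widehat Z_j)\mid\widehat Z_j\bigr]=e_j(\widehat Z_j)\{m_{1,j}-\widehat m_{1,j}\}(\widehat Z_j)/\widehat e_j(\widehat Z_j),
\]
and similarly for the control arm. Since $\tau_{\widehat Z_j}=\mathbb E[m_{1,j}(\widehat Z_j)-m_{0,j}(\widehat Z_j)]$, the difference is
\[
\bar\tau_j-\tau_{\widehat Z_j}=\mathbb E\Bigl[\frac{(e_j-\widehat e_j)(m_{1,j}-\widehat m_{1,j})}{\widehat e_j}+\frac{(e_j-\widehat e_j)(m_{0,j}-\widehat m_{0,j})}{1-\widehat e_j}\Bigr],
\]
where the sign of the second term comes from $(1-e_j)-(1-\widehat e_j)=\widehat e_j-e_j$; the absolute values make the sign irrelevant. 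Clipping gives $\widehat e_j,1-\widehat e_j\ge\eta$, so Cauchy--Schwarz in $\|\cdot\|_{2,j}$ bounds this by $\eta^{-1}\|\widehat e_j-e_j\|_{2,j}\sum_a\|\widehat m_{a,j}-m_{a,j}\|_{2,j}=\mathtt{err}_j$. Here $m_{a,j}$ and $e_j$ are defined relative to the law of $\widehat Z_j$ with $\widehat\phi_j$ held fixed, which is legitimate given the conditioning. Assumption~\ref{ass:library-overlap} ensures $m_{1,j},m_{0,j}$ are well defined and that $e_j\in[\eta,1-\eta]$. Integrability follows from Assumption~\ref{ass:consistency}.

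For the second display, I add $|\tau_{\widehat Z_j}-\tau|$. I apply Theorem~\ref{thm:approximate-proxy-bias-bound} with $\phi=\widehat\phi_j$, which is a fixed encoder given $\mathcal I_{\mathrm D}$ and is independent of the evaluation data. Its hypotheses are Assumptions~\ref{ass:consistency} and~\ref{ass:joint-latent-exchangeability}, stability for $\widehat\phi_j$, and overlap with constant $\eta$ from Assumption~\ref{ass:library-overlap}. It gives $|\tau_{\widehat Z_j}-\tau|\le\mathtt{bias}_j$. This bound is deterministic given $\mathcal I_{\mathrm D}$, so it holds on the same event of probability at least $1-\delta_{\mathrm E}$.

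The main obstacle I expect is not analytic but a matter of care with conditioning. One must check that the population quantities $\tau_{\widehat Z_j}$, $D_{\mathrm{res},j}(\widehat\phi_j)$ and $\Gamma_j(\widehat\phi_j)$ are the ones obtained by treating $\widehat\phi_j$ as a fixed encoder applied to a fresh draw. The remainder identity also has to be carried out with the correct arm-specific denominators so that the single factor $1/\eta$ suffices.
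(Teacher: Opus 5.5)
Your proposal is correct and follows essentially the same route as the paper's proof. Like the paper, you apply Chebyshev to the conditionally i.i.d.\ scores, then bound the exact product-form AIPW remainder by Cauchy--Schwarz and clipping to obtain $\mathtt{err}_j$, and finish with the triangle inequality and Theorem~\ref{thm:approximate-proxy-bias-bound} applied to the fixed encoder $\widehat\phi_j$ on the same event.
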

Thm.~\ref{thm:conditional-aipw-error} adds sampling and nuisance errors to Thm.~\ref{cor:representation-causal}; the nuisance term $\mathtt{err}_j$ is doubly robust, vanishing if the propensity model or both outcome models are correct, but $\mathtt{bias}_j$ remains.

So far, we have treated $j$ as a fixed held-out block. When the valid block is unknown, PROxy Blockwise Exclusion (PROBE) in Algo.~\ref{alg:probe-search} turns the proximal plurality principle of Section~\ref{sec:unknown-blocks} into a finite-sample search. PROBE searches the $T=|\mathfrak S|=2^J-2$ splits and samples $m\in\{1,\ldots,T\}$ splits independently of data and learned objects, as in Algorithm~\ref{alg:probe-search}. Fig.~\ref{fig:probe-mechanism} summarizes the search.

\begin{figure}[t]
\centering
\includegraphics[width=\linewidth]{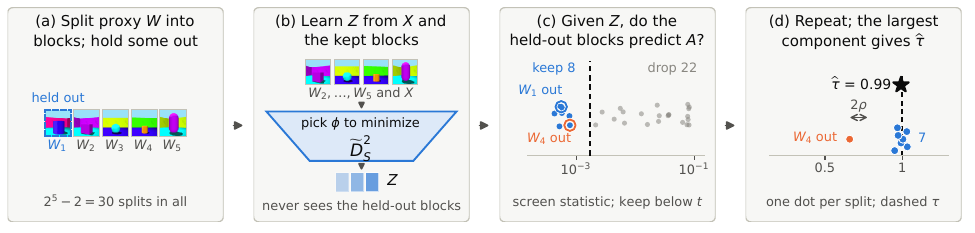}
\caption{PROBE learns, audits, and aggregates across held-out splits.}
\label{fig:probe-mechanism}
\end{figure}

\begin{algorithm}[!h]
\caption{PROBE split search}\label{alg:probe-search}
\KwIn{split budget $m$, screen threshold $t$, overlap $\eta$, linking radius $\rho$}
\textbf{Sample:} draw $m$ splits $S$ uniformly without replacement from $\mathfrak S$.\;
\textbf{Learn and screen:} for each sampled split $S$, learn $\widehat\phi_S$ on $\mathcal I_{\mathrm D}$ by minimizing the discrepancy in Definition~\ref{def:empirical-held-out-proxy-discrepancy}; fit an unclipped propensity $\widehat e_S^{\mathrm{raw}}$ on $\mathcal I_{\mathrm N}$; retain the sampled $S$ if $\widetilde D_{S,n_{\mathrm D}}^2(\widehat\phi_S)\le t$ and $\widehat e_S^{\mathrm{raw}}$ passes a prespecified empirical overlap check at level $\eta$. Define $\widehat{\mathcal B}$ as the \emph{screen set} of all splits $S$ satisfying these fixed checks.\;
\textbf{Estimate:} fit outcome regressions on $\mathcal I_{\mathrm N}$, set $\widehat e_S=\operatorname{clip}(\widehat e_S^{\mathrm{raw}},\eta,1-\eta)$, and compute $\widehat\theta_S=n_{\mathrm E}^{-1}\sum_{i\in\mathcal I_{\mathrm E}}\mathtt{UIF}_S(O_i)$ using Eq.~\eqref{eq:honest-aipw-score}.\;
\textbf{Aggregate:} link $S,S'$ when $|\widehat\theta_S-\widehat\theta_{S'}|\le2\rho$; return the median $\widehat\tau$ of the unique largest component, and declare failure if its size is tied.\;
\end{algorithm}

For $S\in\widehat{\mathcal B}$, the estimate $\widehat\theta_S$ targets $\theta_S\triangleq\tau_{\widehat\phi_S(V_S)}$. PROBE can fail in three ways: $\widehat\theta_S$ misses $\theta_S$ by sampling and nuisance error, $\theta_S$ misses $\tau$ because balance is only approximate, or splits whose held-out blocks violate Assumption~\ref{ass:joint-latent-exchangeability} or~\ref{ass:uniform-stability} agree on a wrong value and outnumber the rest. The next theorem bounds all three.

\begin{theorem}[Finite-sample PROBE cluster recovery]
\label{thm:probe-search-error}
\normalfont
Suppose Assumption~\ref{ass:library-overlap} holds and, for every $S\in\widehat{\mathcal B}$, $\widehat e_S$ is clipped to $[\eta,1-\eta]$ and $\sigma_S<\infty$, where $\sigma_S$, $\mathtt{err}_S$, and $\mathtt{bias}_S$ are the quantities of Thm.~\ref{thm:conditional-aipw-error} with $S$ in place of $j$. Let $C_0\subseteq\widehat{\mathcal B}$ be the splits whose held-out blocks satisfy Assumptions~\ref{ass:consistency}, \ref{ass:joint-latent-exchangeability}, and~\ref{ass:uniform-stability} with encoder $\widehat\phi_S$, let $N\triangleq|\widehat{\mathcal B}|$, and for $\delta\in(0,1)$ let
\[\varepsilon\triangleq\max_{S\in\widehat{\mathcal B}}\Bigl\{\sigma_S\sqrt{\frac{N}{n_{\mathrm E}\delta}}+\mathtt{err}_S\Bigr\},\qquad b\triangleq\max_{S\in C_0}\mathtt{bias}_S.\]
Suppose that
\begin{enumerate}[label=(\roman*),leftmargin=*,itemsep=0pt,topsep=2pt]
	\item the linking radius satisfies $\rho\ge b+\varepsilon$;
	\item the splits in $\widehat{\mathcal B} \setminus C_0$ fall into clusters $C_1,\ldots,C_L$, where $|\theta_{S'} - \theta_S| \leq 2(\rho-\varepsilon)$ whenever $S,S'$ are in the same cluster;
	\item values $\theta_S$ in different clusters among $C_0,\ldots,C_L$ are more than $2(\rho+\varepsilon)$ apart;
	\item $C_0$ is the largest cluster; $\Delta\triangleq(|C_0|-\max_{1\le\ell\le L}|C_\ell|)/N>0$ (or $\Delta=1$ if $L=0$, since then $C_0=\widehat{\mathcal B}$).
\end{enumerate}
Then, with probability at least $1-\delta-P(R=0)-L\,\mathbb E\{e^{-R\Delta^2/2}\}$, where $R\sim\operatorname{Hypergeometric}(T,N,m)$ is the number of the $m$ sampled splits that lie in $\widehat{\mathcal B}$, the output $\widehat\tau$ of Algorithm~\ref{alg:probe-search} satisfies
\begin{align}
|\widehat\tau-\tau|\le\max_{S\in\widehat{\mathcal B}}\Bigl\{\frac{\sigma_S\sqrt N}{\sqrt{n_{\mathrm E}\delta}}+\mathtt{err}_S\Bigr\}+\max_{S\in C_0}\mathtt{bias}_S.
\label{eq:probe-search-error-probability}
\end{align}
\end{theorem}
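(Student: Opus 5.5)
The proof combines three events: an estimation event for every screened split, a sampling event for the random subset of splits, and a deterministic clustering argument on the intersection.

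\textbf{Estimation event.} First I apply Theorem~\ref{thm:conditional-aipw-error} to each $S\in\widehat{\mathcal B}$ separately, with confidence level $\delta/N$ in place of $\delta_{\mathrm E}$. This works because $\widehat{\mathcal B}$, $\widehat\phi_S$ and the nuisances are determined by $\mathcal I_{\mathrm D}\cup\mathcal I_{\mathrm N}$, so conditional on these samples the split set is fixed. For each $S$ the Chebyshev step gives, with probability at least $1-\delta/N$,
\[|\widehat\theta_S-\theta_S|\le\sigma_S\sqrt{N/(n_{\mathrm E}\delta)}+\mathtt{err}_S\le\varepsilon.\]
For $S\in C_0$ the hypotheses of Theorem~\ref{thm:conditional-aipw-error} hold, so the second inequality there also gives $|\theta_S-\tau|\le\mathtt{bias}_S\le b$. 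A union bound over the $N$ splits yields an event $\mathcal E_1$ of probability at least $1-\delta$ on which all of these hold at once. The first bound only needs the AIPW product-remainder identity, not Assumption~\ref{ass:uniform-stability}, so it applies to every $S\in\widehat{\mathcal B}$.

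\textbf{Linking is deterministic on $\mathcal E_1$.} Within $C_0$, $|\widehat\theta_S-\widehat\theta_{S'}|\le 2b+2\varepsilon\le2\rho$ by (i). Within $C_\ell$, condition (ii) gives
\[|\widehat\theta_S-\widehat\theta_{S'}|\le2(\rho-\varepsilon)+2\varepsilon=2\rho.\]
Across clusters, (iii) gives $|\widehat\theta_S-\widehat\theta_{S'}|>2(\rho+\varepsilon)-2\varepsilon=2\rho$. Hence, restricted to any subset of $\widehat{\mathcal B}$, each cluster forms a complete graph with no cross edges. The connected components are therefore exactly the nonempty sampled parts of the clusters. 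There is one subtlety: the screen set the algorithm uses consists of sampled splits, so I read $\widehat{\mathcal B}$ as the population of splits passing the checks, and the sampled screened set is its intersection with the $m$ draws. If the largest component is the sampled part of $C_0$, its estimates all lie within $\varepsilon$ of $\theta_S\in[\tau-b,\tau+b]$. The median then lies in $[\tau-b-\varepsilon,\tau+b+\varepsilon]$, which is the bound \eqref{eq:probe-search-error-probability}.

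\textbf{Sampling event.} This is the main obstacle. Conditional on $R=r\ge1$, the sampled splits lying in $\widehat{\mathcal B}$ form a uniform draw without replacement of $r$ elements from $\widehat{\mathcal B}$. Let $R_\ell$ be the number of these drawn from $C_\ell$. I need $R_0>R_\ell$ for every $\ell\ge1$, which ensures a unique largest component, no tie, and no failure. For fixed $\ell$, write $R_0-R_\ell=\sum_{i=1}^r\xi_i$, where $\xi_i\in\{-1,0,1\}$ records whether the $i$-th draw falls in $C_0$, in $C_\ell$, or elsewhere. The mean is $r(|C_0|-|C_\ell|)/N\ge r\Delta$. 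By Hoeffding's result that sampling without replacement is dominated in convex order by sampling with replacement, the Hoeffding bound for range-$2$ variables gives
\[P(R_0-R_\ell\le0\mid R=r)\le\exp\{-2(r\Delta)^2/(r\cdot 4)\}=e^{-r\Delta^2/2}.\]
A union bound over $\ell=1,\dots,L$, averaged over $R$, gives failure probability at most $P(R=0)+L\,\mathbb E\{e^{-R\Delta^2/2}\}$. When $L=0$ only the event $R=0$ matters.

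Finally, the sampling of splits is independent of the data. A union bound over the complement of $\mathcal E_1$ and the sampling failure then gives total probability at least $1-\delta-P(R=0)-L\,\mathbb E\{e^{-R\Delta^2/2}\}$ on which the output satisfies \eqref{eq:probe-search-error-probability}. The points to verify carefully are the conditioning, namely that $\widehat{\mathcal B}$ and the clusters are fixed given $\mathcal I_{\mathrm D}\cup\mathcal I_{\mathrm N}$, and the constant in the Hoeffding step.
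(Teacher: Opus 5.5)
Your proposal is correct and follows the paper's proof essentially step for step. The common steps are the per-split AIPW bound at level $\delta/N$ with a union bound over $\widehat{\mathcal B}$, the deterministic linking argument that makes the components exactly the sampled parts of the clusters, and Hoeffding's without-replacement bound applied to $\mathbf 1\{S\in C_0\}-\mathbf 1\{S\in C_\ell\}$ with a union over the $L$ competing clusters. The paper also conditions on learning seeds $\{\omega_S\}$ preassigned to every split, via the $\sigma$-field $\mathcal G$, so that $\widehat{\mathcal B}$ and the partition stay fixed even when encoder training is randomized.
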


Eq.~\eqref{eq:probe-search-error-probability} is Eq.~\eqref{eq:end-to-end-sensitivity} applied to every retained split at $\delta_{\mathrm E}=\delta/N$: its first maximum collects the sampling and doubly robust nuisance terms over all retained splits, its second collects the bias term over $C_0$, and the search adds the failure probability $P(R=0)+L\,\mathbb E\{e^{-R\Delta^2/2}\}$. Thm.~\ref{thm:probe-search-error} quantifies the cost of examining only $m$ uniformly sampled splits rather than all of $\mathfrak S$~through two terms: $P(R=0)$, the chance that no sampled split is retained, and $L\,\mathbb E\{e^{-R\Delta^2/2}\}$, which bounds the chance that the sampled splits of $C_0$ fail to outnumber those of every other cluster.

\section{Experiments}
\label{sec:main-experiments}

This section provides empirical demonstration of the proposed theories. Appendix~\ref{sec:appendix-experiments} gives the simulation details. We estimate the average treatment effect $\tau$ through PROBE (Algo.~\ref{alg:probe-search}) and report the mean absolute error (MAE) over independently generated datasets. We compare PROBE with two groups of methods. The first group needs no proxy roles: adjustment for $X$, adjustment for $(X,W)$, CEVAE \citep{louizos2017causal}, and, for image proxies, adjustment for a representation that a convolutional network learns from all pixels, without holding out a block or checking balance. The second group needs the designated roles for each proxy: proximal two-stage least squares (2SLS) \citep{tchetgen2024ProximalCausalInference}, which requires designating each proxy as treatment-side or outcome-side (Fig.~\ref{fig:causal-structures}(c)); single-proxy control \citep{park2024single}, which requires designating one proxy of the untreated outcome $Y(0)$; and its kernel version KSPC \citep{xu2025kernel}, which requires the same designation. We run each of these methods once with the correct roles and once with wrong roles. An oracle that adjusts for $(X,U)$, including the hidden confounder, serves as a reference.

\paragraph{Synthetic Data Analysis.} In our example scenario, a hidden severity $U$ affects both the treatment and the outcome, and the true effect is $\tau=1$. The proxy has five blocks. Blocks $W_1$, $W_2$, and $W_3$ are noisy measurements of $U$ and are valid held-out blocks: each satisfies Assumptions~\ref{ass:joint-latent-exchangeability} and~\ref{ass:outcome-relevant-completeness}. Blocks $W_4$ and $W_5$ are not valid: $W_5$ violates held-out treatment independence (Assumption~\ref{ass:joint-latent-exchangeability}), so no representation can balance it, and $W_4$ can be balanced but violates outcome-relevant completeness (Assumption~\ref{ass:outcome-relevant-completeness}), so it points to a wrong effect. PROBE is not told which blocks are valid, whereas the role-requiring methods are given the correct roles in their correct-role runs. We render this structure with $(X,W)$ of dimension $8$, $88$, and $1{,}312$ (SCM-1 to SCM-3) and with six Shapes3D images in place of the six proxy coordinates of SCM-1 (SCM-4), and generate $100$ data sets of size $n=24{,}000$ for each. 

\begin{figure}[t]
\centering
\includegraphics[width=\linewidth]{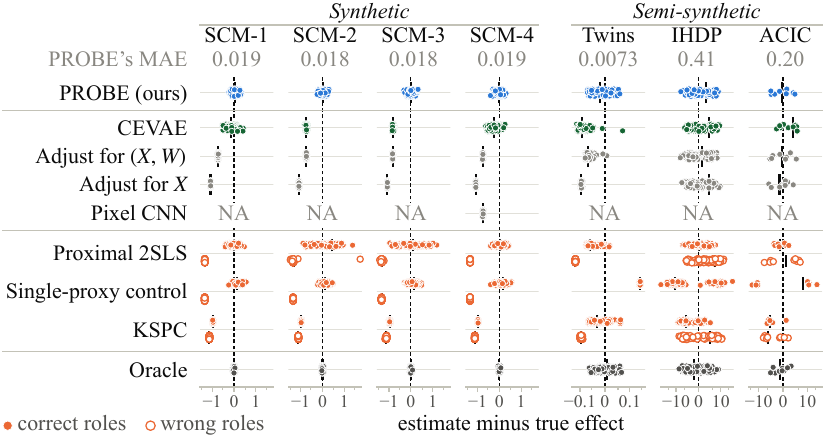}
\caption[Errors in the synthetic and semi-synthetic data.]{Estimate minus the true effect on a symmetric log scale; each dot is one data set. Filled and hollow dots show methods that need proxy roles with correct and wrong roles; on the semi-synthetic data, single-proxy control is used as an ATE estimator.}
\label{fig:experiments}\label{fig:synthetic}\label{fig:realdata}
\end{figure}
Fig.~\ref{fig:synthetic} (left) shows that PROBE, without being told the roles, stays near zero error in all four settings, while adjustment for $X$ or $(X,W)$ stays biased and CEVAE degrades as the dimension grows. Methods that need the roles are accurate only with the correct roles and fail badly with wrong ones. The invalid block $W_4$ passes the balance screen in $396$ of $400$ data sets, yet plurality aggregation over held-out blocks removes it in every one of them, as Thm.~\ref{thm:unknown-valid-block-identification} predicts.

\paragraph{Real-World Data Analysis.}\label{sec:real-world-analysis} We use four real-world data sets. Twins \citep{louizos2017causal}, IHDP \citep{hill2011bayesian}, and ACIC 2016 \citep{dorie2019automated} are semi-synthetic benchmarks built on real covariates, with $11{,}984$, $747$, and $4{,}802$ units and $42$, $24$, and $78$ covariates. In each, we hide one covariate as $U$ and generate the proxies and the treatment from it with one fixed recipe, so the effect is known exactly. The right heart catheterization (RHC) study \citep{connors1996effectiveness} is observational, with $5{,}735$ critically ill patients, days survived within $30$ as the outcome, and ten physiological measurements grouped into five proxy blocks, as analyzed by \citet{cui2024semiparametric}. No true effect is known for the RHC study.

Fig.~\ref{fig:realdata} (right) shows that PROBE, without being told the roles, is the most accurate method on Twins and among the three most accurate on IHDP and ACIC. As in the synthetic data, methods that need the roles are accurate only with the correct roles and degrade sharply with wrong ones. On RHC, where no true effect is known, PROBE's interval contains the proximal estimate that \citet{cui2024semiparametric} obtain by assigning the roles, while KSPC's estimate changes with the block it is given~(Fig.~\ref{fig:appendix-realdata}).

\section{Conclusion}
\label{sec:main-conclusion}

We introduced proximal balancing, which estimates the average treatment effect from a multi-dimensional proxy without designating proxy roles, solving an inverse problem, or modeling the hidden confounder. When a held-out block does not affect the treatment and preserves the outcome-relevant part of the confounder, adjusting for a representation that balances it identifies the effect (Thm.~\ref{thm:exact-proxy-identification}). When the valid block is unknown, a plurality vote over held-out blocks identifies the effect (Thm.~\ref{thm:unknown-valid-block-identification}); under approximate balance, the bias is bounded by a stability-weighted residual treatment discrepancy (Thm.~\ref{thm:approximate-proxy-bias-bound}); and our algorithm, PROBE (Algo.~\ref{alg:probe-search}), has finite-sample guarantees for representation learning, honest AIPW estimation, and search (Thms.~\ref{cor:representation-causal}--\ref{thm:probe-search-error}). In synthetic data with vector and image proxies, PROBE had the smallest error of all competing methods other than the oracle, and on real-world data it remained competitive without role information, while methods given wrong roles degraded sharply.

\endgroup

\newpage
\bibliographystyle{apalike}
\bibliography{reference}

\clearpage
\appendix

\begingroup
\allowdisplaybreaks
\sloppy
\section{Problem Setting}
\label{sec:setup}

We observe independent copies
\(O_i=(X_i,W_i,A_i,Y_i), \qquad i=1,\ldots,n,\)
where $X\in\mathcal X$ is a possibly high-dimensional vector of observed pretreatment covariates, $\mathcal X$ is its value space, $A\in\{0,1\}$ is treatment, and $Y$ is the observed outcome.

For each $a\in\{0,1\}$, the \emph{potential outcome} $Y(a)$ is the outcome that would be observed if treatment were set to $a$. Our target is the population average treatment effect
\(\tau\triangleq\mathbb E\{Y(1)-Y(0)\}.\)

Let $U$ denote an unobserved pretreatment variable that may affect both treatment $A$ and outcome $Y$. When $U$ affects both $A$ and $Y$, the observed-data distribution alone does not identify $\tau$ without additional structure. We therefore observe a proxy measurement $W$ that contains information about $U$. The proxy is divided into prespecified blocks,
\(W=(W_1,\ldots,W_J).\)
Each block may be scalar or vector-valued. A block can represent one repeated measurement, sensor channel, laboratory panel, or prespecified region of a larger measurement.

For each block index $j\in\{1,\ldots,J\}$, we call $W_j$ the \emph{held-out proxy block} and let $\mathcal W_j$ denote its value space. We call the combination of all other blocks the \emph{remaining blocks} (Definition~\ref{def:held-out-remaining-blocks}) and write
\(W_{-j} \triangleq (W_1,\ldots,W_{j-1},W_{j+1},\ldots,W_J)\)
for the resulting vector of remaining proxy blocks. Let $\mathcal V_j$ denote the value space of the pair $(X,W_{-j})$. Finally, define
\(V_j\triangleq(X,W_{-j})\in\mathcal V_j.\)
The variable $V_j$ is the combined vector of the observed covariates $X$ and the remaining proxy blocks $W_{-j}$. We also write $T_j\triangleq(X,W_j)$ for the observed covariates together with the held-out block.

\section{Identification}
\label{app:identification-details}

Let $d_{V_j}$ denote the coordinate dimension of $V_j$. Fix a user-specified output dimension $d_{Z_j}\le d_{V_j}$. For a measurable representation function, define
	\(\phi:\mathcal V_j\longrightarrow\mathbb R^{d_{Z_j}}, \qquad Z_j^\phi\triangleq\phi(V_j).\)
Let $\Phi_j$ denote a prespecified class of such representation functions.
When a representation function $\phi_j\in\Phi_j$ is fixed, we abbreviate $Z_j^{\phi_j}=\phi_j(V_j)$ as $Z_j$.

\subsection{Identification for a Prespecified Held-Out Block} \label{sec:fixed-held-out-block-identification}

\subsubsection{Assumptions} \label{sec:identification-assumptions}

\begin{assumption}[Consistency and integrability; restatement of Assumption~\ref{ass:consistency}]
	\label{appdetail:ass:consistency}
	For each $a\in\{0,1\}$,
	\(Y=Y(A), \qquad \mathbb E|Y(a)|<\infty.\)
\end{assumption}

\begin{assumption}[Latent exchangeability and held-out treatment independence; restatement of Assumption~\ref{ass:joint-latent-exchangeability}] \label{appdetail:ass:joint-latent-exchangeability} \label{appdetail:ass:held-out-separation}
	For each $a\in\{0,1\}$,
	\(Y(a)\perp\!\!\!\perp A\mid(U,V_j)\) and \(W_j\perp\!\!\!\perp A\mid(U,V_j).\)
\end{assumption}

The first clause says that $(U,V_j)$ is sufficient to remove hidden confounding between treatment and each potential outcome.
The second clause says that, after $(U,V_j)$ is fixed, treatment status provides no additional information about the distribution of $W_j$.

Fig.~\ref{fig:proximal-causal-structures} shows three examples compatible with proximal balancing and one representative standard proximal structure.

\begin{figure}[!t]
	\centering
	\includegraphics[width=0.86\textwidth]{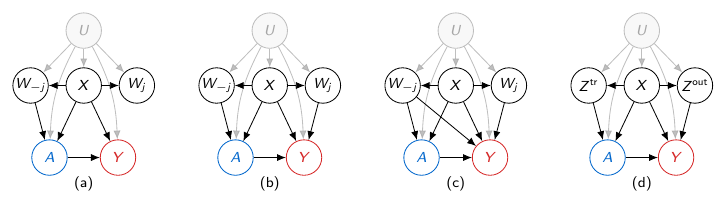}
	\caption{Causal structures compatible with proximal balancing in panels (a)--(c) and a standard two-proxy proximal structure in panel (d).}
	\label{fig:proximal-causal-structures}
\end{figure}

We then introduce the latent and proxy objects within each representation stratum.

\begin{definition}[Latent and proxy objects within a representation stratum]
\label{appdetail:def:latent-proxy-objects}
\normalfont
	Let $P_{Z_j}$ denote the distribution of $Z_j$. For $P_{Z_j}$-almost every $z$, fix versions of the following conditional distributions. Define
	\(Q_{a,j,z} \triangleq P(V_j,U\mid A=a,Z_j=z), \qquad a\in\{0,1\},\)
	and, by held-out treatment independence, define the treatment-invariant conditional distribution of the held-out block by
	\(K_j(\cdot\mid v,u) \triangleq P(W_j\mid V_j=v,U=u).\)
	For any probability distribution $Q$ on $(V_j,U)$, let $\mathcal K_jQ$ denote the induced distribution of $(X,W_j)$ obtained by first drawing $(V_j,U)=(X,W_{-j},U)\sim Q$, then drawing $W_j\sim K_j(\cdot\mid V_j,U)$, and finally retaining $(X,W_j)$.

	For each $a\in\{0,1\}$, define
	\(\mu_{a,j,z}(v,u) \triangleq \mathbb E\{Y(a)\mid V_j=v,U=u,Z_j=z\}.\)
	For $P_{Z_j}$-almost every $z$, let $\mathcal Q_{j,z}$ be the collection of probability distributions $Q$ on $(V_j,U)$ such that $Q\ll P(V_j,U\mid Z_j=z)$ and $\mathbb E_Q|\mu_{a,j,z}(V_j,U)|<\infty$ for both $a\in\{0,1\}$.
\end{definition}

\begin{assumption}[Outcome-relevant completeness; restatement of Assumption~\ref{ass:outcome-relevant-completeness}]
	\label{appdetail:ass:outcome-relevant-completeness}
	For $P_{Z_j}$-almost every $z$, every $Q,Q'\in\mathcal Q_{j,z}$, and each $a\in\{0,1\}$,
	\(\mathcal K_jQ=\mathcal K_jQ' \quad\Longrightarrow\quad \mathbb E_Q\{\mu_{a,j,z}(V_j,U)\} = \mathbb E_{Q'}\{\mu_{a,j,z}(V_j,U)\}.\)
\end{assumption}

Assumption~\ref{ass:outcome-relevant-completeness} is stated for all pairs $Q,Q'\in\mathcal Q_{j,z}$ (Definition~\ref{appdetail:def:latent-proxy-objects}) within a representation stratum. The next proposition gives concrete sufficient conditions. It treats the held-out block as a measurement of $(X,U)$ that is conditionally independent of the remaining blocks, and it asks the measurement channel to be complete.

\begin{proposition}[Sufficient conditions for outcome-relevant completeness]
\label{appdetail:prop:orc-sufficient}
\normalfont

Fix a block $j$ and a representation $\phi_j$. Suppose that
\begin{enumerate}[label=(\roman*),leftmargin=*]
\item $W_j\perp\!\!\!\perp W_{-j}\mid(X,U)$ and, for each $a\in\{0,1\}$, $Y(a)\perp\!\!\!\perp W_{-j}\mid(X,U,Z_j)$, so that $K_j(\cdot\mid v,u)$ and $\mu_{a,j,z}(v,u)$ depend on $v=(x,w_{-j})$ only through $x$;
\item for every $x$, the map $\pi\mapsto\int K_j(\cdot\mid x,u)\,\pi(du)$ from probability distributions of $U$ to distributions of $W_j$ is injective.
\end{enumerate}
Then Assumption~\ref{ass:outcome-relevant-completeness} holds. Condition (ii) holds in the following two specializations, which are not exhaustive.
\begin{enumerate}[label=(\alph*),leftmargin=*]
\item \textit{Injective decoder with additive noise.} $W_j=g(X,U)+\epsilon_j$, where $\epsilon_j$ is independent of $(X,W_{-j},U)$ and has a characteristic function without zeros, as Gaussian noise does, and $u\mapsto g(x,u)$ is injective for every $x$.
\item \textit{Count channel with an injective rate.} $W_j$ is a vector of counts whose coordinates are conditionally independent given $(X,U)$, with $W_{j,\ell}\mid(X=x,U=u)\sim\operatorname{Poisson}\{\lambda_\ell(x,u)\}$ for each coordinate $\ell$, and $u\mapsto\lambda(x,u)\triangleq(\lambda_1(x,u),\ldots,\lambda_{d}(x,u))$ is injective for every $x$.
\end{enumerate}

\end{proposition}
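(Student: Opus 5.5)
The plan is to reduce Assumption~\ref{ass:outcome-relevant-completeness} to an injectivity statement about mixtures over $U$, fiberwise in $x$. Fix $z$ and $Q,Q'\in\mathcal Q_{j,z}$ with $\mathcal K_jQ=\mathcal K_jQ'$. By condition (i), $K_j(\cdot\mid v,u)=K_j(\cdot\mid x,u)$ and $\mu_{a,j,z}(v,u)=\bar\mu_{a,z}(x,u)$ for a measurable $\bar\mu_{a,z}$. Both sides of the target equality depend on $Q$ only through its $(X,U)$-marginal. So we may replace $Q$ by its marginal $\bar Q$ on $(X,U)$, and $\mathcal K_jQ$ is then the law of $(X,W_j)$ obtained by drawing $(X,U)\sim\bar Q$ and $W_j\sim K_j(\cdot\mid X,U)$.

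\textbf{Step 1: disintegrate.} Write $\bar Q(dx,du)=\bar Q_X(dx)\,\pi_x(du)$, and similarly $\bar Q'_X(dx)\,\pi'_x(du)$. The $X$-marginal of $\mathcal K_jQ$ is $\bar Q_X$, so equality of the induced laws gives $\bar Q_X=\bar Q'_X$. Disintegrating $\mathcal K_jQ$ given $X=x$ then yields
\[
\int K_j(\cdot\mid x,u)\,\pi_x(du)=\int K_j(\cdot\mid x,u)\,\pi'_x(du)
\]
for $\bar Q_X$-a.e.\ $x$. By condition (ii), $\pi_x=\pi'_x$ for a.e.\ $x$. Hence $\bar Q=\bar Q'$, and
\[
\mathbb E_Q\mu_{a,j,z}=\int\bar\mu_{a,z}\,d\bar Q=\int\bar\mu_{a,z}\,d\bar Q'=\mathbb E_{Q'}\mu_{a,j,z},
\]
with integrability supplied by membership in $\mathcal Q_{j,z}$.

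\textbf{Main obstacle.} The delicate part is measure-theoretic. I need regular conditional distributions, so I assume standard Borel spaces. I also need the a.e.\ uniqueness of disintegrations to hold jointly in $x$, which is standard. A further subtlety is that $\mu_{a,j,z}$ conditions on $Z_j=z$, which is itself a function of $(X,W_{-j})$. Condition (i) is stated precisely so that this dependence on $w_{-j}$ drops out, and I will use it that way directly.

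\textbf{Specialization (a).} Here $K_j(\cdot\mid x,u)$ is the law of $g(x,u)+\epsilon_j$. The mixture $\int K_j\,d\pi$ is the convolution of the pushforward $g(x,\cdot)_\#\pi$ with the law of $\epsilon_j$. Its characteristic function is the product of the characteristic function of $g(x,\cdot)_\#\pi$ with that of $\epsilon_j$, which has no zeros. Dividing recovers the characteristic function of $g(x,\cdot)_\#\pi$, and hence that pushforward itself. Injectivity of $u\mapsto g(x,u)$ then recovers $\pi$: assuming measurability of the inverse on the image, for example by Lusin--Souslin on standard Borel spaces, we get $\pi=(g(x,\cdot)^{-1})_\#(g(x,\cdot)_\#\pi)$.

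\textbf{Specialization (b).} Here the mixture is a mixed product-Poisson law, and the push-forward $\lambda(x,\cdot)_\#\pi$ is a mixing measure on $[0,\infty)^d$. The probability generating function of the mixture is $\int\exp\{\sum_\ell\lambda_\ell(s_\ell-1)\}\,d(\lambda_\#\pi)$, a Laplace transform of $\lambda_\#\pi$ evaluated on $(-\infty,0]^d$-shifted arguments, and it is determined by the count law. Uniqueness of the multivariate Laplace transform on an open set recovers $\lambda_\#\pi$. Injectivity of $\lambda(x,\cdot)$ then recovers $\pi$, exactly as in (a).
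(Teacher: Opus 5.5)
Your proposal is correct and follows the same route as the paper's proof. Condition (i) reduces everything to the $(X,U)$-marginal. Disintegrating in $x$ gives equal $X$-marginals and equal mixtures $\bar Q_X$-a.e., and condition (ii) then gives equal conditionals of $U$ given $X$. For (a) you divide characteristic functions and then use injectivity of $g(x,\cdot)$.

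For (b), the garbled phrase about ``$(-\infty,0]^d$-shifted arguments'' should read as follows. The generating function at $s\in[0,1]^d$ is the Laplace transform of $\lambda(x,\cdot)_\#\pi$ evaluated at $1-s\in[0,1]^d$, a set with nonempty interior. Uniqueness of the Laplace transform there, together with injectivity of $\lambda(x,\cdot)$, finishes the case. Your measurability remarks (standard Borel spaces, and Lusin--Souslin for the inverse of the injective map) tighten a point the paper leaves implicit.
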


Condition (i) lets the remaining blocks influence the outcome only through $(X,U)$ and through components retained in $Z_j$; Thm.~\ref{thm:exact-proxy-identification}, restated below, uses the more general Assumption~\ref{ass:outcome-relevant-completeness}. The first specialization covers latent-variable generative models in which the held-out block is a decoded image or embedding of $(X,U)$ plus independent noise, and the second covers count-valued proxies such as event, word, or sequencing-read counts. When the latent state takes finitely many values and both proxy blocks are pure measurements satisfying the rank conditions of \citet{kuroki2014measurement}, their matrix restoration identifies $\tau$ directly.

\subsubsection{Exact Identification for a Fixed Block}
\label{sec:exact-proxy-identification}

The next theorem gives the central identification result. It states when adjustment for $Z_j$ identifies the population average treatment effect.

\begin{theorem}[Identification from a held-out audit; restatement of Thm.~\ref{thm:exact-proxy-identification}] \label{appdetail:thm:exact-proxy-identification}
\normalfont
Fix a prespecified block $j\in\{1,\ldots,J\}$. Suppose Assumptions~\ref{ass:consistency}--\ref{ass:outcome-relevant-completeness} hold for this block and its representation. If $Z_j=\phi_j(V_j)$ is constructed such that
	\(0<P(A=1\mid Z_j)<1 \quad\text{almost surely},\)
and
	\((X,W_j)\perp\!\!\!\perp A\mid Z_j,\)
	then
	\(\tau = \mathbb E\!\left[ \mathbb E(Y\mid A=1,Z_j) - \mathbb E(Y\mid A=0,Z_j) \right].\)
\end{theorem}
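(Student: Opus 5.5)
Within each stratum $Z_j=z$ of the representation, I will show that the arm-specific regression $\mathbb E(Y\mid A=a,Z_j=z)$ equals the stratum mean $\mathbb E\{Y(a)\mid Z_j=z\}$. Averaging over $P_{Z_j}$ then gives $\tau$ by iterated expectations. Overlap, $0<P(A=1\mid Z_j)<1$, makes both conditional expectations well defined a.s. Assumption~\ref{ass:consistency} gives integrability of every quantity involved.

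\textbf{Step 1: reduce to a latent-mixture comparison.} Fix $z$ and $a$. Since $Z_j$ is a function of $V_j$, latent exchangeability gives $Y(a)\perp\!\!\!\perp A\mid(U,V_j,Z_j)$. Together with consistency,
\[
\mathbb E(Y\mid A=a,Z_j=z)=\mathbb E\{Y(a)\mid A=a,Z_j=z\}=\mathbb E_{Q_{a,j,z}}\mu_{a,j,z}(V_j,U),
\]
where $Q_{a,j,z}=P(V_j,U\mid A=a,Z_j=z)$ as in Definition~\ref{appdetail:def:latent-proxy-objects}. Likewise $\mathbb E\{Y(a)\mid Z_j=z\}=\mathbb E_{\bar Q_z}\mu_{a,j,z}$, where $\bar Q_z=P(V_j,U\mid Z_j=z)=\sum_{a'}P(A=a'\mid z)Q_{a',j,z}$. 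By overlap, each $Q_{a,j,z}\ll\bar Q_z$ with density bounded by $1/P(A=a\mid z)$. Integrability of $\mu_{a,j,z}$ under $\bar Q_z$ comes from $\mathbb E|Y(a)|<\infty$ via conditional Jensen, and it transfers to $Q_{a,j,z}$ through this bounded density. Hence $Q_{0,j,z},Q_{1,j,z},\bar Q_z\in\mathcal Q_{j,z}$.

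\textbf{Step 2: the held-out audit forces equal $\mathcal K_j$-images.} Held-out treatment independence, $W_j\perp\!\!\!\perp A\mid(U,V_j)$, together with $Z_j$ being $V_j$-measurable, gives $P(W_j\mid V_j,U,A=a,Z_j=z)=K_j(\cdot\mid V_j,U)$. So the conditional law of $T_j=(X,W_j)$ given $A=a,Z_j=z$ equals $\mathcal K_jQ_{a,j,z}$. The balance condition $(X,W_j)\perp\!\!\!\perp A\mid Z_j$ then yields $\mathcal K_jQ_{1,j,z}=\mathcal K_jQ_{0,j,z}$ for a.e.\ $z$. By linearity of $\mathcal K_j$ in $Q$, both also equal $\mathcal K_j\bar Q_z$.

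\textbf{Step 3: apply completeness and integrate.} Assumption~\ref{ass:outcome-relevant-completeness}, applied to the pair $(Q_{a,j,z},\bar Q_z)$, gives $\mathbb E_{Q_{a,j,z}}\mu_{a,j,z}=\mathbb E_{\bar Q_z}\mu_{a,j,z}$. Combining with Step 1, $\mathbb E(Y\mid A=a,Z_j)=\mathbb E\{Y(a)\mid Z_j\}$ a.s. Taking the difference over $a$ and averaging over $Z_j$ gives Eq.~\eqref{eq:exact-identification}.

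\textbf{Main obstacle.} I expect the delicate part to be measure-theoretic rather than conceptual. It requires choosing regular conditional versions consistently on a common null-set complement, so that $\mu_{a,j,z}$ and $K_j$ are the same objects under $A=0$, $A=1$, and the pooled law. It also requires verifying the absolute-continuity and integrability requirements of $\mathcal Q_{j,z}$. I would handle this by fixing versions of $P(Y(a),W_j\mid V_j,U)$ first and deriving all arm-conditional laws from them plus $P(A\mid V_j,U)$. I would then use the bounded Radon--Nikodym derivatives $P(A=a\mid V_j,U,z)/P(A=a\mid z)$ to transfer integrability.
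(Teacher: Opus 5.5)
Your proposal is correct and follows essentially the same argument as the paper. Both proofs use the $V_j$-measurability of $Z_j$ to carry the two clauses of Assumption~\ref{ass:joint-latent-exchangeability} into each stratum, identify $P(X,W_j\mid A=b,Z_j=z)$ with $\mathcal K_jQ_{b,j,z}$, use exact balance and overlap to place the arm laws in $\mathcal Q_{j,z}$ with equal $\mathcal K_j$-images, and then invoke Assumption~\ref{ass:outcome-relevant-completeness}. The only difference is cosmetic: you compare each arm law $Q_{a,j,z}$ with the pooled stratum law $\bar Q_z$ via linearity of $\mathcal K_j$, whereas the paper compares $Q_{1,j,z}$ with $Q_{0,j,z}$ and then passes from arm-invariance of $\mathbb E\{Y(a)\mid A,Z_j=z\}$ to $\mathbb E\{Y(a)\mid Z_j=z\}$, a mixture step your version makes explicit.
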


The theorem is a conditional mean identification result. It establishes that the conditional mean of $Y(a)$ given $(A,Z_j)$ does not depend on $A$, and consistency converts this equality into the displayed adjustment formula. It does not assert the full conditional independence $Y(a)\perp\!\!\!\perp A\mid Z_j$.

\subsubsection{Comparison with the Canonical Two-Proxy Proximal Model}

Graphically and structurally, proximal balancing differs from the standard proximal causal inference framework \citep{miao2018proxy,tchetgen2024ProximalCausalInference}. The standard framework prespecifies a treatment-inducing proxy $Z^{\mathrm{tr}}$ and an outcome-inducing proxy $Z^{\mathrm{out}}$. By contrast, proximal balancing requires only the held-out block $W_j$ to satisfy $W_j\perp\!\!\!\perp A\mid(U,V_j)$. The remaining block is not subject to this restriction and may affect $A$, $Y$, both, or neither, as illustrated by panels (a)--(c).

The underlying assumptions and identification strategies are also different. A representative canonical completeness condition is, for every $(a,x)$,
\(\mathbb E\{g(U)\mid Z^{\mathrm{tr}},A=a,X=x\}=0\text{ almost surely} \quad\Longrightarrow\quad g(U)=0 \quad\text{almost surely}\)
for functions $g$ in the specified class. Identification then proceeds by seeking an outcome bridge $h$ satisfying an observed conditional-moment equation such as
	\(\mathbb E(Y\mid Z^{\mathrm{tr}},A,X) = \mathbb E\{h(Z^{\mathrm{out}},A,X)\mid Z^{\mathrm{tr}},A,X\}.\)

Proximal balancing instead combines observable balance in Thm.~\ref{thm:exact-proxy-identification} with outcome-relevant completeness in Assumption~\ref{ass:outcome-relevant-completeness}. For comparison, the canonical proximal conditions refer to proxy validity, existence of an integrable outcome bridge, and treatment-proxy completeness for a designated proxy pair. The proximal-balancing conditions refer to Assumptions~\ref{ass:consistency}--\ref{ass:outcome-relevant-completeness}, representation overlap, and exact balance for a designated proxy split. A natural question is whether either framework's sufficient conditions contain those of the other.

\begin{proposition}[Non-nesting of canonical proximal and proximal-balancing sufficient conditions]
	\label{prop:non-nesting-proximal-probe}
	\normalfont
	There exists a causal data-generating process for which the canonical proximal conditions hold for a designated proxy pair $(Z^{\mathrm{tr}},Z^{\mathrm{out}})$ but the proximal-balancing conditions fail for the split with $W_{-j}=Z^{\mathrm{tr}}$ and $W_j=Z^{\mathrm{out}}$ under every representation $\phi_j$. Conversely, there exists a causal data-generating process for which the proximal-balancing conditions hold for a designated proxy split but the canonical proximal conditions fail for the pair $(Z^{\mathrm{tr}},Z^{\mathrm{out}})=(W_{-j},W_j)$. Moreover, neither canonical treatment-proxy completeness nor Assumption~\ref{ass:outcome-relevant-completeness} implies the other.
\end{proposition}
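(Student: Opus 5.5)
Both directions will be proved with explicit finite (binary) constructions. The third claim, that the two completeness conditions do not nest, will be read off the same examples.

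\emph{Canonical holds, balancing fails.} Take $X$ empty and $U\in\{0,1\}$. Let $Z^{\mathrm{tr}}$ be a noisy copy of $U$ that affects $A$ only through $U$. Let $Z^{\mathrm{out}}$ be a noisy copy of $U$ that also depends on $A$ directly, for example $Z^{\mathrm{out}}=U\oplus A\oplus\epsilon$, and that does not enter $Y$. Because the canonical framework allows the outcome-side proxy to be unaffected by $A$, I would instead use a different failure. I keep $Z^{\mathrm{out}}\perp\!\!\!\perp A\mid U$, but let $Z^{\mathrm{out}}$ depend on $Z^{\mathrm{tr}}$ only through $U$, so that $Z^{\mathrm{out}}\perp\!\!\!\perp A\mid(U,Z^{\mathrm{tr}})$ holds. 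The failure is then placed in the balance condition \eqref{eq:exact-proxy-balance}. Since $Z^{\mathrm{tr}}$ is only a noisy copy of $U$, conditional on any function $\phi(Z^{\mathrm{tr}})$ the variable $A$ still carries information about $U$ beyond $Z^{\mathrm{tr}}$, and hence about $Z^{\mathrm{out}}$. The check is direct: $P(A=1\mid Z^{\mathrm{out}},\phi(Z^{\mathrm{tr}}))$ varies with $Z^{\mathrm{out}}$ whenever $A$ depends on $U$ and both proxies are nondegenerate. Because $Z^{\mathrm{tr}}$ is binary, every $\phi$ is either constant or injective, so only two cases need checking, and both are finite computations. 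The canonical conditions hold by standard $2\times2$ rank arguments: completeness is invertibility of $P(U\mid Z^{\mathrm{tr}},a)$, and the bridge exists because $P(Z^{\mathrm{out}}\mid U)$ is invertible. This gives the first direction. It also shows that canonical completeness holds while balance fails, but that concerns balance rather than Assumption~\ref{ass:outcome-relevant-completeness}, so the non-implication needs separate handling below.

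\emph{Balancing holds, canonical fails.} Let $W_{-j}$ equal $U$ exactly, or contain an exact copy of $U$, with $U$ taking three values, and let $A$ depend on $U$. Let $W_j$ be a binary noisy measurement of $(X,U)$ with $W_j\perp\!\!\!\perp A\mid U$. Take $\phi=\mathrm{id}$, so that $Z_j=U$. Then balance holds, because $W_j\perp\!\!\!\perp A\mid U$. Outcome-relevant completeness holds trivially: given $Z_j=z$, the law of $U$ is a point mass, so every $Q\ll P(\cdot\mid Z_j=z)$ gives the same $\mathbb E_Q\mu$. By Theorem~\ref{thm:exact-proxy-identification}, $\tau$ is identified. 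For the canonical pair $(Z^{\mathrm{tr}},Z^{\mathrm{out}})=(W_{-j},W_j)$, the bridge equation needs $\mathbb E(Y\mid U,a)=\mathbb E\{h(W_j,a)\mid U\}$. With $W_j$ binary and $U$ ternary, a generic outcome mean gives a system of three equations in two unknowns, so no bridge exists and the canonical conditions fail. This example also shows that Assumption~\ref{ass:outcome-relevant-completeness} does not imply canonical-type completeness of the outcome-proxy channel.

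\emph{Canonical completeness does not imply Assumption~\ref{ass:outcome-relevant-completeness}.} I need a process in which $Z^{\mathrm{tr}}$ is complete for $U$ but the $W_j$ channel is not injective on the outcome-relevant part of the law of $U$ within a stratum. I would take $U$ ternary, $Z^{\mathrm{tr}}$ ternary with an invertible channel, and $W_j$ binary. With $\phi$ constant, $\mathcal K_j$ maps the two-dimensional simplex of laws of $U$ into a one-dimensional family, so it has a nontrivial kernel direction. I then choose $\mu_a(u)$ not orthogonal to that direction, which violates Assumption~\ref{ass:outcome-relevant-completeness}. Assumption~\ref{ass:outcome-relevant-completeness} must fail for every admissible $\phi$. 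For non-constant $\phi$ of the ternary $Z^{\mathrm{tr}}$, some stratum contains at least two values of $U$ in its support, so I would have to choose $\mu$ and $K$ so that the failure persists in those strata as well.

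The main obstacle is this last requirement: making the failure hold uniformly over all representations while keeping the canonical conditions intact. Because $V_j$ is finite, I would handle it by enumerating the finitely many partitions induced by $\phi$ and selecting parameters generically, so that the required inequalities are nonzero polynomial conditions.
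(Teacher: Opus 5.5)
\textbf{Gap: the claim that Assumption~\ref{ass:outcome-relevant-completeness} does not imply treatment-proxy completeness is not proved.}

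Your first two directions are sound. The first is the paper's construction: two noisy binary copies of a binary $U$, with balance failing for both the constant and the injective $\phi$. The second is a legitimate variant. The paper keeps a bridge and breaks treatment-proxy completeness; you keep completeness ($Z^{\mathrm{tr}}=U$) and break bridge existence, using a binary $W_j$, a ternary $U$, and a mean $\mathbb E\{Y(a)\mid U\}$ that is not affine in $P(W_j=1\mid U)$. Either choice makes the canonical conditions fail.

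The problem is in the last claim. You must exhibit a process where Assumption~\ref{ass:outcome-relevant-completeness} holds but canonical \emph{treatment-proxy} completeness fails, that is, where $\mathbb E\{g(U)\mid Z^{\mathrm{tr}},A=a\}=0\Rightarrow g(U)=0$ fails. You cite your second example for this, but it cannot serve. There $Z^{\mathrm{tr}}=W_{-j}$ is an exact copy of $U$, so $\mathbb E\{g(U)\mid Z^{\mathrm{tr}},A\}=g(U)$ and treatment-proxy completeness holds trivially. What fails in that example is the outcome-proxy channel (bridge existence), which is not the condition named in the statement.

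\textbf{Missing idea and fix.} You need a latent coordinate that $Z^{\mathrm{tr}}$ does not reveal but that leaves the outcome mean unchanged within each stratum. The paper takes:
\begin{itemize}
\item $U=(U_1,U_2)$ with i.i.d.\ Bernoulli$(1/2)$ coordinates and $V_j=W_j=Z_j=U_1$;
\item $P(A=1\mid U)=3/20+U_1/5+2U_2/5$;
\item $Y(a)=a+U_1+(1+U_2)\xi$ with an independent Rademacher $\xi$.
\end{itemize}
On the stratum $Z_j=z$ every admissible $Q$ has $V_j=z$ and $\mu_{a,j,z}=a+z$, so Assumption~\ref{ass:outcome-relevant-completeness} holds. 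Meanwhile $g(U)=U_2-\mathbb E(U_2\mid U_1,A=1)$ is nonzero yet is annihilated by conditioning on $(Z^{\mathrm{tr}},A=1)$. Adding a similar extra coordinate to your own second example would also work.

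\textbf{The other half of the claim.} Your plan for showing that treatment-proxy completeness does not imply the assumption works, but the obstacle you flag is not real.
\begin{itemize}
\item $\mu_{a,j,z}(v,u)=\mathbb E\{Y(a)\mid V_j=v,U=u\}$ does not depend on $\phi$.
\item $\mathcal Q_{j,z}$ contains every law supported on a single $v$ in the stratum.
\item Suppose $P(Z^{\mathrm{tr}}\mid U)$ has all entries positive, $W_j\perp\!\!\!\perp V_j\mid U$, and $\mathbb E\{Y(a)\mid U\}$ is not affine in $P(W_j=1\mid U)$. Then your kernel direction violates the assumption in every stratum of every $\phi$, with no enumeration of partitions needed.
\end{itemize}
The paper's route is shorter still. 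It reuses the first construction with $E_W$ replaced by Bernoulli$(1/2)$ noise. This leaves treatment-proxy completeness untouched, since that condition involves only $(U,Z^{\mathrm{tr}},A)$, and makes $\mathcal K_jQ$ identical for all $Q$.
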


\subsection{Identification When the Valid Held-Out Block Is Unknown}
\label{sec:unknown-valid-held-out-block}

Section~\ref{sec:fixed-held-out-block-identification} fixes the held-out block in advance. We now allow its composition to be unknown. Consider every nonempty proper subset
\begin{align}
	\mathfrak S \triangleq
	\{S\subset\{1,\ldots,J\}:1\le |S|\le J-1\}.
	\label{eq:all-held-out-orientations}
\end{align}
A subset $S$ and its complement $S^c$ are distinct because they exchange the held-out and remaining roles. For each $S\in\mathfrak S$, let $W_S=(W_j)_{j\in S}$ be the held-out vector, let $W_{-S}=(W_j)_{j\notin S}$ be the remaining vector, and define
\(V_S\triangleq(X,W_{-S}), \qquad Z_S\triangleq\phi_S(V_S),\)
using the same construction as for a single held-out block.

For each $S\in\mathfrak S$, define, whenever it is well-defined,
\(\theta_S \triangleq \mathbb E\!\left[ \mathbb E(Y\mid A=1,Z_S) - \mathbb E(Y\mid A=0,Z_S) \right].\)

Thm.~\ref{thm:exact-proxy-identification} applies to each proxy split $S$ after replacing $(V_j,W_j,Z_j)$ by $(V_S,W_S,Z_S)$ throughout. Thus, if Assumption~\ref{ass:consistency} and the grouped versions of Assumptions~\ref{ass:joint-latent-exchangeability} and~\ref{ass:outcome-relevant-completeness} hold, representation overlap holds, and $(X,W_S)\perp\!\!\!\perp A\mid Z_S$, then $\theta_S=\tau$.

The balance condition is observable at the population level. Define the population balance-screen set
\begin{align}
	\mathcal B
	\triangleq
	\bigl\{S\in\mathfrak S:
	0<P(A=1\mid Z_S)<1\ \text{a.s.},
	(X,W_S)\perp\!\!\!\perp A\mid Z_S
	\bigr\}.
	\label{eq:population-balance-screen}
\end{align}
Each $S\in\mathcal B$ defines one retained proxy split and one candidate effect $\theta_S$ through Eq.~\eqref{eq:block-candidate-adjustment-value}. We assume that each such $\theta_S$ is well defined in $\mathbb R$. The next result considers the case in which the true effect $\tau$ occurs more often than any other candidate value.

\begin{theorem}[Identification with an unknown valid block; restatement of Thm.~\ref{thm:unknown-valid-block-identification}]
\label{appdetail:thm:unknown-valid-block-identification}
	\normalfont
	Suppose $\mathcal B\neq\varnothing$ and
	the target has a unique proximal plurality (Definition~\ref{def:population-plurality}). Then
	\(\{\tau\} = \underset{c\in\mathbb R}{\operatorname{arg\,max}} \bigl|\{S\in\mathcal B:\theta_S=c\}\bigr|.\)
\end{theorem}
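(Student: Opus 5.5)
The argument is a direct counting argument on the finite collection $\mathcal B$, since the statement is purely combinatorial once the candidate values $\theta_S$ are fixed real numbers. Define the count function $N(c)\triangleq|\{S\in\mathcal B:\theta_S=c\}|$ for $c\in\mathbb R$. Because $\mathcal B\subseteq\mathfrak S$ is finite and nonempty, and each $\theta_S$ is assumed well defined in $\mathbb R$, the function $N$ takes values in $\{0,1,\ldots,|\mathcal B|\}$. Only finitely many $c$ satisfy $N(c)>0$, namely the distinct values in $\{\theta_S:S\in\mathcal B\}$, so $\max_{c\in\mathbb R}N(c)$ is attained and the arg max is a well-defined nonempty set.

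The key steps are as follows.

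First, I would show that $N(\tau)>0$. The plurality inequality \eqref{eq:unique-target-plurality} says that $N(\tau)$ strictly exceeds a maximum over a set of counts, and that maximum is at least zero, including the empty-set convention. Hence $N(\tau)\ge1$.

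Second, I would take any $c\neq\tau$ and split into two cases. If $N(c)=0$, then $N(c)<N(\tau)$ by the first step. If $N(c)>0$, then $c$ is one of the indices in the maximum on the right side of \eqref{eq:unique-target-plurality}, so $N(c)\le\max_{c'\ne\tau:N(c')>0}N(c')<N(\tau)$.

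Combining the two cases, $N(\tau)>N(c)$ for every $c\neq\tau$. Therefore $\tau$ attains $\max_c N(c)$, and no other $c$ does, which gives $\{\tau\}=\arg\max_{c\in\mathbb R}N(c)$.

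No step is a genuine obstacle; the main point of care is bookkeeping. The empty-maximum convention must be handled correctly; this arises when every screened split has $\theta_S=\tau$. We also need the arg max over the uncountable set $\mathbb R$ to be well defined, which follows from the finiteness of the support of $N$. Note that Thm.~\ref{thm:exact-proxy-identification} is not needed for the statement itself. It only motivates why the valid splits in $\mathcal B$ contribute to $N(\tau)$, and this is already encoded in the plurality hypothesis.
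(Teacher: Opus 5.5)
Your counting argument is correct and follows the paper's proof: the plurality inequality makes the count at $\tau$ strictly exceed the count at every other value, and the empty-maximum convention covers the case where no non-target value occurs. Your extra steps (deriving $N(\tau)\ge1$ from the strict inequality, and noting that the arg max over $\mathbb R$ is well defined because $N$ has finite support) are only a slightly more careful write-up of the same argument.
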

The theorem states that $\tau$ can be identified even when we do not know the exact held-out proxy $W_j$, if it is the unique most frequent candidate value. The following corollary is the strict-majority special case of Thm.~\ref{thm:unknown-valid-block-identification}. It gives both the unique most frequent candidate value and the corresponding median aggregation result.

\begin{corollary}[Target-valued majority and median aggregation]
	\label{cor:target-majority-median}
	\normalfont
	Suppose $\mathcal B\neq\varnothing$ and
	\begin{align}
		\bigl|\{S\in\mathcal B:\theta_S=\tau\}\bigr|>\frac{|\mathcal B|}{2}.
		\label{eq:target-valued-majority}
	\end{align}
	That is, more than half of the retained proxy splits yield the true causal effect. Then
	\(\{\tau\} = \underset{c\in\mathbb R}{\operatorname{arg\,max}} \bigl|\{S\in\mathcal B:\theta_S=c\}\bigr|,\)
	and
	\(\operatorname{median}\bigl((\theta_S)_{S\in\mathcal B}\bigr)=\tau.\)
\end{corollary}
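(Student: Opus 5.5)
The corollary has two claims, and I will prove them in order: first the argmax identity, by reducing to Thm.~\ref{thm:unknown-valid-block-identification}, and then the median identity, by a direct order-statistics argument.

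For the argmax claim, set $N\triangleq|\mathcal B|$ and $k\triangleq|\{S\in\mathcal B:\theta_S=\tau\}|$. The hypothesis in Eq.~\eqref{eq:target-valued-majority} says $k>N/2$. Take any $c\neq\tau$. The splits with $\theta_S=c$ are disjoint from those with $\theta_S=\tau$, so their number is at most $N-k<N/2<k$. Taking the maximum over all such $c$, with the empty-set convention giving zero, yields the strict plurality inequality of Eq.~\eqref{eq:unique-target-plurality}. Hence the target has a unique proximal plurality. Since $\mathcal B\neq\varnothing$, Thm.~\ref{thm:unknown-valid-block-identification} applies directly and gives $\{\tau\}=\arg\max_c|\{S\in\mathcal B:\theta_S=c\}|$.

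For the median claim, order the finite multiset $(\theta_S)_{S\in\mathcal B}$ as $\theta_{(1)}\le\cdots\le\theta_{(N)}$. The values equal to $\tau$ occupy a contiguous block of positions $r+1,\ldots,r+k$. Here $r$ is the number of values strictly below $\tau$, and $r\le N-k<N/2$. Likewise, the number of values strictly above $\tau$, namely $N-r-k$, is less than $N/2$. The task is then to check that the middle position or positions lie inside this block.

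\begin{itemize}
\item \emph{Odd $N$.} The single middle index is $(N+1)/2$. It satisfies $(N+1)/2>r$ because $r<N/2$, and $(N+1)/2\le r+k$ because $N-r-k<N/2$ forces $N-r-k\le(N-1)/2$.
\item \emph{Even $N$.} The two middle indices are $N/2$ and $N/2+1$. Since $r$ and $N-r-k$ are integers strictly below $N/2$, we have $r\le N/2-1$ and $N-r-k\le N/2-1$. Hence $r+1\le N/2$ and $N/2+1\le r+k$, so both middle indices lie in the block.
\end{itemize}

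In either case every middle order statistic equals $\tau$. Therefore the median equals $\tau$ under any convention, whether lower, upper, or the average of the two middle values.

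The main obstacle is purely bookkeeping in the even-$N$ case. It disappears once one uses integrality to sharpen $r<N/2$ to $r\le N/2-1$, and the same for $N-r-k$.
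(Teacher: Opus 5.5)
Your proof is correct and follows essentially the same route as the paper: the count $|\{S\in\mathcal B:\theta_S=c\}|\le|\mathcal B|-N_\tau<N_\tau$ for every $c\ne\tau$ gives the unique plurality, and an integrality-sharpened count of values strictly below and strictly above $\tau$ places every middle order statistic at $\tau$. The only difference is cosmetic. You route the argmax claim through Thm.~\ref{thm:unknown-valid-block-identification}, whereas the paper concludes it directly from the same inequality.
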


Fig.~\ref{fig:majority-valid-orientations} illustrates the result. The labels $W_{1:3}$ and $W_{4:5}$ denote grouped proxy blocks, not single variables. Under Assumption~\ref{ass:consistency}, if all five singleton proxy splits pass the balance screen and proxy splits $1,2,3$ satisfy the grouped versions of Assumptions~\ref{ass:joint-latent-exchangeability} and~\ref{ass:outcome-relevant-completeness}, then three of the five singleton candidate values equal $\tau$, so their median equals $\tau$.

\FloatBarrier
	\begin{figure}[!t]
	\centering
	\includegraphics[width=0.75\textwidth]{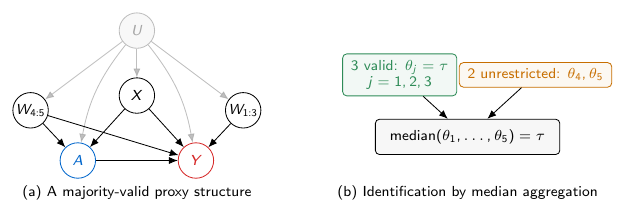}
	\caption{Median aggregation over five singleton proxy splits.}
	\label{fig:majority-valid-orientations}
\end{figure}

\subsection{Approximate Balance and Bias Bound}
\label{sec:approximate-proxy-bound}

Exact balance is an ideal endpoint.
We now consider representations that achieve only approximate, rather than exact, balance.
Fix a measurable encoder $\phi$ and write $T_j\triangleq(X,W_j)$; the notation $Z_j^\phi$ was defined at the beginning of this section.
Define
\(e_{j,\phi}(z)\triangleq P(A=1\mid Z_j^\phi=z), q_{j,\phi}(t,z)\triangleq P(A=1\mid T_j=t,Z_j^\phi=z).\)
Their difference is the treatment information that remains in the held-out variables after conditioning on the representation.

\begin{definition}[Residual treatment discrepancy; restatement of Definition~\ref{def:held-out-proxy-discrepancy}]
\label{appdetail:def:held-out-proxy-discrepancy}
\normalfont
	Define
	\(D_{\mathrm{res},j}^2(\phi) \triangleq \mathbb E\!\left[ \{q_{j,\phi}(T_j,Z_j^\phi)-e_{j,\phi}(Z_j^\phi)\}^2 \right]\) and \(D_{\mathrm{res},j}(\phi) \triangleq\{D_{\mathrm{res},j}^2(\phi)\}^{1/2}.\)
\end{definition}
The discrepancy measures the treatment information in $T_j$ that remains after conditioning on $Z_j^\phi$.
\begin{proposition}[Zero discrepancy; restatement of Prop.~\ref{prop:zero-discrepancy}]
	\label{appdetail:prop:zero-discrepancy}
	\normalfont
	\(D_{\mathrm{res},j}(\phi)=0 \quad\Longleftrightarrow\quad A\perp\!\!\!\perp T_j\mid Z_j^\phi.\)
\end{proposition}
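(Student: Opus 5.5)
The plan is to prove both directions by noting that, since $A$ is binary, conditional independence of $A$ and $T_j$ given $Z_j^\phi$ is the same as saying the conditional treatment probability does not depend on $T_j$ once $Z_j^\phi$ is fixed.

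First, I fix versions of the conditional probabilities. By definition, $q_{j,\phi}(T_j,Z_j^\phi)=P(A=1\mid T_j,Z_j^\phi)$ and $e_{j,\phi}(Z_j^\phi)=P(A=1\mid Z_j^\phi)$ almost surely. The tower property gives $\mathbb E\{q_{j,\phi}(T_j,Z_j^\phi)\mid Z_j^\phi\}=e_{j,\phi}(Z_j^\phi)$. Both functions take values in $[0,1]$, so $D_{\mathrm{res},j}^2(\phi)$ is finite, and $D_{\mathrm{res},j}(\phi)=0$ if and only if $q_{j,\phi}(T_j,Z_j^\phi)=e_{j,\phi}(Z_j^\phi)$ almost surely.

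($\Leftarrow$) Suppose $A\perp\!\!\!\perp T_j\mid Z_j^\phi$. Then $P(A=1\mid T_j,Z_j^\phi)=P(A=1\mid Z_j^\phi)$ almost surely, because conditional independence means the conditional law of $A$ given $(T_j,Z_j^\phi)$ equals its law given $Z_j^\phi$. The integrand therefore vanishes almost surely, and $D_{\mathrm{res},j}(\phi)=0$.

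($\Rightarrow$) Suppose $D_{\mathrm{res},j}(\phi)=0$, so that $P(A=1\mid T_j,Z_j^\phi)=P(A=1\mid Z_j^\phi)$ almost surely. Because $A\in\{0,1\}$, the conditional law of $A$ given $(T_j,Z_j^\phi)$ is Bernoulli with parameter $q_{j,\phi}(T_j,Z_j^\phi)$. Hence it coincides almost surely with the law of $A$ given $Z_j^\phi$. To turn this into the product form of conditional independence, take bounded measurable $f$ of $T_j$ and $a\in\{0,1\}$ and compute
\[
\mathbb E\{f(T_j)\mathbf 1(A=a)\mid Z_j^\phi\}=\mathbb E\{f(T_j)P(A=a\mid T_j,Z_j^\phi)\mid Z_j^\phi\}=P(A=a\mid Z_j^\phi)\,\mathbb E\{f(T_j)\mid Z_j^\phi\}.
\]
The first equality uses the tower property with $\sigma(T_j,Z_j^\phi)$. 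The second pulls out the $Z_j^\phi$-measurable factor $e_{j,\phi}(Z_j^\phi)$, or $1-e_{j,\phi}(Z_j^\phi)$ when $a=0$. This factorization is exactly $A\perp\!\!\!\perp T_j\mid Z_j^\phi$.

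The only delicate point is measure-theoretic: the conditional probabilities are defined only up to null sets, and $Z_j^\phi$ is a function of $V_j$, which overlaps with $T_j$ in $X$. Conditioning on $(T_j,Z_j^\phi)$ is still well defined, since we condition on the $\sigma$-field generated by the pair. I would state once that all identities are meant almost surely and that the result does not depend on which versions are chosen. No earlier result from the paper is needed.
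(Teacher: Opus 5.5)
Your proposal is correct and follows the paper's proof: both reduce $D_{\mathrm{res},j}(\phi)=0$ to $q_{j,\phi}(T_j,Z_j^\phi)=e_{j,\phi}(Z_j^\phi)$ almost surely and then use that $A$ is binary to equate this with $A\perp\!\!\!\perp T_j\mid Z_j^\phi$. Your explicit factorization $\mathbb E\{f(T_j)\mathbf 1(A=a)\mid Z_j^\phi\}=P(A=a\mid Z_j^\phi)\,\mathbb E\{f(T_j)\mid Z_j^\phi\}$ simply spells out the step the paper states in one line.
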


To relate the discrepancy with the causal bias bound, we define
\(P_{\phi,z}\triangleq P(T_j\mid Z_j^\phi=z).\)
For a signed measure $\nu$ with a density with respect to $P_{\phi,z}$, write $\|\nu\|_{\phi,z}\triangleq\|d\nu/dP_{\phi,z}\|_{L_2(P_{\phi,z})}$.
For $P_{Z_j^\phi}$-almost every $z$ and $a\in\{0,1\}$, write
$Q_{a,j,z}^\phi\triangleq P(V_j,U\mid A=a,Z_j^\phi=z)$ and
$\mu_{a,j,z}^\phi(v,u)\triangleq\mathbb E\{Y(a)\mid V_j=v,U=u,Z_j^\phi=z\}$.
Let $\mathcal Q_{j,z}^\phi$ be the collection of probability distributions $Q$ on $(V_j,U)$ such that
$Q\ll P(V_j,U\mid Z_j^\phi=z)$ and
$\mathbb E_Q|\mu_{a,j,z}^\phi(V_j,U)|<\infty$ for both $a\in\{0,1\}$.
We assume the following:
\begin{assumption}[$L_2$ outcome-relevant stability; restatement of Assumption~\ref{ass:uniform-stability}]
	\label{appdetail:ass:uniform-stability}
\normalfont
	For every representation $\phi$ under consideration,
	there is a finite scalar $\Gamma_j(\phi)>0$
	such that, for $P_{Z_j^\phi}$-almost every $z$, every $a\in\{0,1\}$, and all $Q,Q'\in\mathcal Q_{j,z}^{\phi}$,
	\(\left|\mathbb E_Q\{\mu_{a,j,z}^{\phi}\} - \mathbb E_{Q'}\{\mu_{a,j,z}^{\phi}\}\right| \le \Gamma_j(\phi) \,\|\mathcal K_jQ-\mathcal K_jQ'\|_{\phi,z}.\)
\end{assumption}

If $\mathcal K_jQ=\mathcal K_jQ'$, the right-hand side is zero. Assumption~\ref{ass:uniform-stability} therefore implies Assumption~\ref{ass:outcome-relevant-completeness} for $Z_j^\phi$.
\begin{theorem}[Approximate-balance bias; restatement of Thm.~\ref{thm:approximate-proxy-bias-bound}]
\label{appdetail:thm:approximate-proxy-bias-bound}
\normalfont
	Suppose Assumptions~\ref{ass:consistency}, \ref{ass:joint-latent-exchangeability}, and \ref{ass:uniform-stability} hold for $Z_j^\phi$.
	Suppose there is a constant $\eta\in(0,1/2]$ such that $\eta\le e_{j,\phi}(z)\le1-\eta$ for $P_{Z_j^\phi}$-almost every $z$. Then
	\(\left| \mathbb E\!\left[ \mathbb E(Y\mid A=1,Z_j^\phi) - \mathbb E(Y\mid A=0,Z_j^\phi) \right] -\tau \right| \le \frac{\Gamma_j(\phi)}{\eta(1-\eta)} D_{\mathrm{res},j}(\phi).\)
\end{theorem}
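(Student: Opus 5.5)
Fix $a\in\{0,1\}$ and work stratum by stratum in $z$. Write $Z=Z_j^\phi$, $e=e_{j,\phi}$, $q=q_{j,\phi}$. The goal is to bound
\[
\mathbb E(Y\mid A=a,Z=z)-\mathbb E\{Y(a)\mid Z=z\}
\]
pointwise in $z$, and then integrate. The integral of these differences, taken over $P_Z$ and subtracted between $a=1$ and $a=0$, is exactly $\tau_Z-\tau$. The target pointwise bound is
\[
\Gamma_j(\phi)\,\bigl\|\{q(\cdot,z)-e(z)\}/\{e(z)(1-e(z))\}\bigr\|_{L_2(P_{\phi,z})}
\]
for the difference between the two arms. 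This matches the constant $1/\{\eta(1-\eta)\}$ after overlap. The proof then finishes with Jensen/Cauchy--Schwarz, since $\mathbb E_z\|q-e\|_{L_2(P_{\phi,z})}\le D_{\mathrm{res},j}(\phi)$.

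\textbf{Step 1: reduce the observed regression to an integral against $Q^\phi_{a,j,z}$.} By consistency and the first clause of Assumption~\ref{ass:joint-latent-exchangeability}, together with the fact that $Z$ is a function of $V_j$,
\[
\mathbb E(Y\mid A=a,Z=z)=\mathbb E_{Q^\phi_{a,j,z}}\mu^\phi_{a,j,z}.
\]
Similarly,
\[
\mathbb E\{Y(a)\mid Z=z\}=\mathbb E_{Q^\phi_{z}}\mu^\phi_{a,j,z},
\]
where $Q^\phi_z=P(V_j,U\mid Z=z)=e(z)Q^\phi_{1,j,z}+(1-e(z))Q^\phi_{0,j,z}$. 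Both $Q^\phi_{a,j,z}$ and $Q^\phi_z$ lie in $\mathcal Q^\phi_{j,z}$, by overlap (which gives absolute continuity) and integrability. So Assumption~\ref{ass:uniform-stability} applies and yields
\[
|\mathbb E(Y\mid A=a,Z=z)-\mathbb E\{Y(a)\mid Z=z\}|\le\Gamma_j(\phi)\,\|\mathcal K_jQ^\phi_{a,j,z}-\mathcal K_jQ^\phi_z\|_{\phi,z}.
\]

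\textbf{Step 2: identify $\mathcal K_j Q^\phi_{a,j,z}$ as an observable law.} This uses the second clause of Assumption~\ref{ass:joint-latent-exchangeability}, namely $W_j\perp\!\!\!\perp A\mid(U,V_j)$. Drawing $(V_j,U)\sim Q^\phi_{a,j,z}$ and then $W_j\sim K_j$ reproduces the law of $(V_j,U,W_j)$ given $A=a,Z=z$. Hence $\mathcal K_jQ^\phi_{a,j,z}=P(T_j\mid A=a,Z=z)$, and by linearity $\mathcal K_jQ^\phi_z=P_{\phi,z}$. By Bayes' rule the density of $P(T_j\mid A=1,Z=z)$ with respect to $P_{\phi,z}$ is $q(t,z)/e(z)$, and for $A=0$ it is $(1-q)/(1-e)$. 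Therefore the norms in Step 1 equal $\|q-e\|_{L_2(P_{\phi,z})}/e(z)$ for $a=1$ and $\|q-e\|_{L_2(P_{\phi,z})}/(1-e(z))$ for $a=0$.

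\textbf{Step 3: combine the arms.} Sum the two bounds to get
\[
\Gamma_j(\phi)\,\|q-e\|\,\{1/e+1/(1-e)\}=\Gamma_j(\phi)\,\|q-e\|/\{e(1-e)\}.
\]
Overlap gives $e(1-e)\ge\eta(1-\eta)$ on $[\eta,1-\eta]$. Integrating over $P_Z$ and applying Jensen gives
\[
\mathbb E_Z\bigl[\|q(\cdot,Z)-e(Z)\|_{L_2(P_{\phi,Z})}\bigr]\le\Bigl(\mathbb E\bigl[\{q(T_j,Z)-e(Z)\}^2\bigr]\Bigr)^{1/2}=D_{\mathrm{res},j}(\phi),
\]
which yields the stated bound.

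\textbf{Main obstacle.} The delicate step is Step 2's identification $\mathcal K_jQ^\phi_{a,j,z}=P(T_j\mid A=a,Z=z)$. It needs care with regular conditional versions, because $Z$ is measurable in $V_j$ and the kernel $K_j$ must be the treatment-invariant version. The measurability and integrability bookkeeping also matters: we must check that $Q^\phi_{a,j,z}\in\mathcal Q^\phi_{j,z}$ for almost every $z$, using $\mathbb E|Y(a)|<\infty$ and $e\ge\eta$.
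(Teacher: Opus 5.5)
Your proposal is correct and is essentially the paper's argument. The paper applies Assumption~\ref{ass:uniform-stability} to the pair $(Q^\phi_{1,j,z},Q^\phi_{0,j,z})$ inside the decomposition $\{1-e(z)\}\delta^{\phi}_{1,j}(z)+e(z)\delta^{\phi}_{0,j}(z)$, whereas you compare each arm with the pooled law $Q^\phi_z=e(z)Q^\phi_{1,j,z}+\{1-e(z)\}Q^\phi_{0,j,z}$. Since $\mathcal K_jQ^\phi_{1,j,z}-\mathcal K_jQ^\phi_z=\{1-e(z)\}(\mathcal K_jQ^\phi_{1,j,z}-\mathcal K_jQ^\phi_{0,j,z})$ (and symmetrically for arm $0$), both routes give the same stratum bound $\Gamma_j(\phi)\|q(\cdot,z)-e(z)\|_{L_2(P_{\phi,z})}/[e(z)\{1-e(z)\}]$, and your remaining steps (the Bayes-rule density identity, overlap, and Cauchy--Schwarz in $z$) match the paper's.
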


The theorem implies the following sensitivity result.
\begin{proposition}[Population sensitivity region]
\label{prop:population-sensitivity-region}
\normalfont
Suppose Assumptions~\ref{ass:consistency}, \ref{ass:joint-latent-exchangeability}, and \ref{ass:uniform-stability} hold for $Z_j^\phi$, and suppose there is a constant $\eta\in(0,1/2]$ such that $\eta\le e_{j,\phi}(z)\le1-\eta$ for $P_{Z_j^\phi}$-almost every $z$. Define
\(\tau_{Z_j^\phi} \triangleq \mathbb E\!\left[ \mathbb E(Y\mid A=1,Z_j^\phi) - \mathbb E(Y\mid A=0,Z_j^\phi) \right].\)
Then
\(\tau\in \left[ \tau_{Z_j^\phi}- \frac{\Gamma_j(\phi)}{\eta(1-\eta)}D_{\mathrm{res},j}(\phi), \tau_{Z_j^\phi}+ \frac{\Gamma_j(\phi)}{\eta(1-\eta)}D_{\mathrm{res},j}(\phi) \right].\)
\end{proposition}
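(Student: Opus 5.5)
The population sensitivity region is a direct rewriting of Thm.~\ref{appdetail:thm:approximate-proxy-bias-bound}, so I will not redo any of the probabilistic argument. The plan is to check that the hypotheses of Prop.~\ref{prop:population-sensitivity-region} are exactly those of Thm.~\ref{appdetail:thm:approximate-proxy-bias-bound}, invoke that theorem, and then unfold the absolute-value inequality into a two-sided interval.

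\textbf{Step 1: match the hypotheses.} The proposition assumes Assumptions~\ref{ass:consistency}, \ref{ass:joint-latent-exchangeability}, and \ref{ass:uniform-stability} for $Z_j^\phi$. It also assumes the overlap bound $\eta\le e_{j,\phi}(z)\le 1-\eta$ for $P_{Z_j^\phi}$-almost every $z$. These coincide verbatim with the hypotheses of Thm.~\ref{appdetail:thm:approximate-proxy-bias-bound}. Overlap also guarantees that $\mathbb E(Y\mid A=a,Z_j^\phi)$ is defined almost surely for both arms. Integrability of $Y(a)$ from Assumption~\ref{ass:consistency} then makes $\tau_{Z_j^\phi}$ a finite real number, so the interval in the statement is well defined.

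\textbf{Step 2: invoke the bias bound.} Set $B\triangleq \Gamma_j(\phi)D_{\mathrm{res},j}(\phi)/\{\eta(1-\eta)\}$. This quantity is finite and nonnegative, because $\Gamma_j(\phi)<\infty$, $D_{\mathrm{res},j}(\phi)\le 1$ (the propensities lie in $[0,1]$), and $\eta(1-\eta)>0$. Thm.~\ref{appdetail:thm:approximate-proxy-bias-bound} gives $|\tau_{Z_j^\phi}-\tau|\le B$.

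\textbf{Step 3: unfold the inequality.} The bound $|\tau_{Z_j^\phi}-\tau|\le B$ is equivalent to $-B\le \tau-\tau_{Z_j^\phi}\le B$. Adding $\tau_{Z_j^\phi}$ throughout gives $\tau\in[\tau_{Z_j^\phi}-B,\tau_{Z_j^\phi}+B]$, which is the displayed region.

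There is no real obstacle in this argument. The only point needing care is the well-definedness remark in Step 1: $\tau_{Z_j^\phi}$ must be a genuine real number, not an undefined difference, so that the interval makes sense.
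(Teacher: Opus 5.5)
Your proposal is correct and follows the paper's proof: the paper also invokes Thm.~\ref{thm:approximate-proxy-bias-bound} to get $|\tau_{Z_j^\phi}-\tau|\le\Gamma_j(\phi)D_{\mathrm{res},j}(\phi)/\{\eta(1-\eta)\}$ and reads off the interval. Your added well-definedness checks (finiteness of $\tau_{Z_j^\phi}$ from overlap and integrability, and $D_{\mathrm{res},j}(\phi)\le1$) are valid; the paper leaves them implicit.
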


This is a population sensitivity region, not a finite-sample confidence interval. When $D_{\mathrm{res},j}(\phi)=0$, it collapses to the point-identified value in Thm.~\ref{thm:exact-proxy-identification}. The residual discrepancy is a functional of the observed distribution.
The constant $\Gamma_j(\phi)$ is structural and must be justified by scientific knowledge, additional theory, or sensitivity analysis.

\section{Representation Learning and AIPW Estimation}
\label{app:learning-details}

The identification results concern one fixed representation. We now select a representation from data and estimate the adjustment functional that it induces. The representation-learning and AIPW analysis below uses one prespecified held-out block $j$ throughout.

\subsection{Representation Learning}
\label{sec:representation-learning}

Recall that one observed record is $O_i=(X_i,W_i,A_i,Y_i)$. Partition the observed sample into three mutually disjoint subsets,
\(\mathcal I_{\mathrm D}, \qquad \mathcal I_{\mathrm N}, \qquad \mathcal I_{\mathrm E},\)
with respective sizes $n_{\mathrm D}$, $n_{\mathrm N}$, and $n_{\mathrm E}$. The discrepancy-learning sample $\mathcal I_{\mathrm D}$ selects a representation. The nuisance-learning sample $\mathcal I_{\mathrm N}$ estimates the outcome regressions and propensity used by AIPW. The evaluation sample $\mathcal I_{\mathrm E}$ calculates the final estimate. Conditional on the preceding samples, the objects evaluated on the next sample are fixed.

 Let $\mathcal F_{j,n_{\mathrm D}}\subseteq\Phi_j$ be a prespecified, possibly infinite class of deterministic encoders. For $\phi\in\mathcal F_{j,n_{\mathrm D}}$, write $Z_j^\phi=\phi(V_j)$ and retain $T_j=(X,W_j)$ from Section~\ref{sec:approximate-proxy-bound}.

Assumption~\ref{ass:library-overlap} is imposed on $\mathcal F_{j,n_{\mathrm D}}$.
For any prespecified nonempty proper subset $S\subset\{1,\ldots,J\}$, the results below also apply with $(V_j,W_j,Z_j^\phi)$ replaced by $(V_S,W_S,\phi(V_S))$ and with the same assumptions imposed on this grouped split.

The Brier-risk definition below estimates residual discrepancy by comparing squared-loss prediction of $A$ from $Z_j^\phi$ alone with prediction from $(T_j,Z_j^\phi)$ \citep{brier1950verification}.

\begin{definition}[Empirical Brier discrepancy; general form of Definition~\ref{def:empirical-held-out-proxy-discrepancy}]
\label{appdetail:def:empirical-held-out-proxy-discrepancy}
\normalfont
 Let $\mathcal H_{0,n_{\mathrm D}}$ and $\mathcal H_{1,n_{\mathrm D}}$ be prespecified classes of measurable $[0,1]$-valued predictors with inputs $Z_j^\phi$ and $(T_j,Z_j^\phi)$, respectively, fixed independently of $\mathcal I_{\mathrm D}$. Assume $\mathcal H_{1,n_{\mathrm D}}$ contains each member of $\mathcal H_{0,n_{\mathrm D}}$ lifted to ignore $T_j$. 
Define \(R_{0,\phi}(h)\triangleq\mathbb E[\{A-h(Z_j^\phi)\}^2]\) and \(\widehat R_{0,\phi}(h)\triangleq n_{\mathrm D}^{-1}\sum_{i\in\mathcal I_{\mathrm D}}\{A_i-h(Z_{j,i}^\phi)\}^2\). Likewise, define \(R_{1,\phi}(h)\triangleq\mathbb E[\{A-h(T_j,Z_j^\phi)\}^2]\) and \(\widehat R_{1,\phi}(h)\triangleq n_{\mathrm D}^{-1}\sum_{i\in\mathcal I_{\mathrm D}}\{A_i-h(T_{j,i},Z_{j,i}^\phi)\}^2\),
where $Z_{j,i}^\phi=\phi(V_{j,i})$. For each $\phi$, let $\widehat h_{b,\phi}\in\mathcal H_{b,n_{\mathrm D}}$ be an approximate empirical minimizer satisfying, for some $\varepsilon_{b,\phi}\ge0$,
\begin{align}
	\widehat R_{b,\phi}(\widehat h_{b,\phi})
	\le
	\inf_{h\in\mathcal H_{b,n_{\mathrm D}}}\widehat R_{b,\phi}(h)
	+
	\varepsilon_{b,\phi},
	\qquad b\in\{0,1\}.
	\label{eq:approximate-critic-fits}
\end{align}
Define $\varepsilon_{\mathrm{crit},j}\triangleq\sup_{\phi\in\mathcal F_{j,n_{\mathrm D}}}(\varepsilon_{0,\phi}+\varepsilon_{1,\phi})$ and
\(\widetilde D_{j,n_{\mathrm D}}^2(\phi) \triangleq \max\left\{( \widehat R_{0,\phi}(\widehat h_{0,\phi}) - \widehat R_{1,\phi}(\widehat h_{1,\phi})), 0 \right\}\).
\end{definition}

Each encoder $\phi$ and critic $h$ give a squared-loss function of one observation: $\{A-h(\phi(V_j))\}^2$ or $\{A-h(T_j,\phi(V_j))\}^2$. Varying both $\phi$ and $h$ over their prespecified classes gives the loss classes $\mathcal L_{0,j,n_{\mathrm D}}$ and $\mathcal L_{1,j,n_{\mathrm D}}$, respectively. Let $\mathfrak R_{n_{\mathrm D}}(\mathcal L)$ denote the expected Rademacher complexity of a loss class $\mathcal L$. For $\delta_{\mathrm D}\in(0,1)$, define

\(r_{j,n_{\mathrm D}}(\delta_{\mathrm D}) \triangleq 2\{\mathfrak R_{n_{\mathrm D}}(\mathcal L_{0,j,n_{\mathrm D}})+\mathfrak R_{n_{\mathrm D}}(\mathcal L_{1,j,n_{\mathrm D}})\} + 2\sqrt{\frac{\log(4/\delta_{\mathrm D})}{2n_{\mathrm D}}}.\)

This radius controls sampling fluctuation uniformly over the encoder--critic choices. The complexity terms account for the flexibility of the two loss classes, and the last term sets the failure-probability level $\delta_{\mathrm D}$.

Throughout, the loss classes are taken pointwise measurable and separable, so that the suprema below are measurable, and the selected encoders and critics are measurable functions of the training observations and the algorithmic seed; every statement otherwise holds with outer probability.

\begin{proposition}[Uniform residual-discrepancy deviation]
\label{appdetail:prop:uniform-discrepancy-deviation}
\normalfont
For each $\phi\in\mathcal F_{j,n_{\mathrm D}}$, define
\begin{align}
\begin{aligned}
	a_{0,n_{\mathrm D}}(\phi)
	&\triangleq
	\inf_{h\in\mathcal H_{0,n_{\mathrm D}}}R_{0,\phi}(h)
	-
	\mathbb E[\{A-e_{j,\phi}(Z_j^\phi)\}^2],\\
	a_{1,n_{\mathrm D}}(\phi)
	&\triangleq
	\inf_{h\in\mathcal H_{1,n_{\mathrm D}}}R_{1,\phi}(h)
	-
	\mathbb E[\{A-q_{j,\phi}(T_j,Z_j^\phi)\}^2],\\
	a_{j,n_{\mathrm D}}
	&\triangleq
	\sup_{\phi\in\mathcal F_{j,n_{\mathrm D}}}
	\{a_{0,n_{\mathrm D}}(\phi)+a_{1,n_{\mathrm D}}(\phi)\}.
\end{aligned}
\label{eq:critic-approximation-error}
\end{align}

With probability at least $1-\delta_{\mathrm D}$,
\begin{align}
\sup_{\phi\in\mathcal F_{j,n_{\mathrm D}}} |\widetilde D_{j,n_{\mathrm D}}^2(\phi)-D_{\mathrm{res},j}^2(\phi)| &\le r_{j,n_{\mathrm D}}(\delta_{\mathrm D})+a_{j,n_{\mathrm D}}+\varepsilon_{\mathrm{crit},j}.
\label{eq:uniform-discrepancy-deviation}
\end{align}
\end{proposition}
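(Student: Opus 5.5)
The plan is a standard decomposition of the excess-risk difference into an estimation part and an approximation part. The starting point is the observation that, for squared loss and binary $A$, $\mathbb E[\{A-e_{j,\phi}(Z_j^\phi)\}^2]-\mathbb E[\{A-q_{j,\phi}(T_j,Z_j^\phi)\}^2]=D_{\mathrm{res},j}^2(\phi)$. This follows from the Pythagorean identity: $q_{j,\phi}$ is the $L_2$ projection of $A$ onto functions of $(T_j,Z_j^\phi)$, and $e_{j,\phi}$ is a function of $Z_j^\phi$, so the cross term vanishes. Writing $R_b^\star(\phi)\triangleq\inf_{h\in\mathcal H_{b,n_{\mathrm D}}}R_{b,\phi}(h)$, the definition of $a_{b,n_{\mathrm D}}(\phi)$ gives
\[R_0^\star(\phi)-R_1^\star(\phi)=D_{\mathrm{res},j}^2(\phi)+a_{0,n_{\mathrm D}}(\phi)-a_{1,n_{\mathrm D}}(\phi).\]

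The uniform deviation step controls $\Delta_b\triangleq\sup_{\phi,h}|\widehat R_{b,\phi}(h)-R_{b,\phi}(h)|$, where the supremum runs over the loss class $\mathcal L_{b,j,n_{\mathrm D}}$. Each loss lies in $[0,1]$ because $A$ and $h$ are $[0,1]$-valued. Symmetrization together with McDiarmid's inequality at level $\delta_{\mathrm D}/4$ for each one-sided supremum gives, after a union bound over $b\in\{0,1\}$ and both signs, $\Delta_b\le 2\mathfrak R_{n_{\mathrm D}}(\mathcal L_{b,j,n_{\mathrm D}})+\sqrt{\log(4/\delta_{\mathrm D})/(2n_{\mathrm D})}$ simultaneously with probability at least $1-\delta_{\mathrm D}$. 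Summing over $b$ gives $\Delta_0+\Delta_1\le r_{j,n_{\mathrm D}}(\delta_{\mathrm D})$. I would match the constants to $r$ here. Measurability is covered by the pointwise-measurability convention.

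On that event, fix $\phi$. The empirical infimum lies within $\Delta_b$ of $R_b^\star(\phi)$, since an infimum of functions that are uniformly $\Delta_b$-close is itself $\Delta_b$-close. By \eqref{eq:approximate-critic-fits}, $\widehat R_{b,\phi}(\widehat h_{b,\phi})\in[R_b^\star(\phi)-\Delta_b,\;R_b^\star(\phi)+\Delta_b+\varepsilon_{b,\phi}]$. Taking the difference and using the identity above,
\[\bigl|\{\widehat R_{0,\phi}(\widehat h_{0,\phi})-\widehat R_{1,\phi}(\widehat h_{1,\phi})\}-D_{\mathrm{res},j}^2(\phi)\bigr|\le \Delta_0+\Delta_1+\varepsilon_{0,\phi}+\varepsilon_{1,\phi}+|a_{0,n_{\mathrm D}}(\phi)-a_{1,n_{\mathrm D}}(\phi)|.\]
Both $a_{b,n_{\mathrm D}}(\phi)$ are nonnegative because the Bayes risk is minimal over all measurable predictors. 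Hence $|a_0-a_1|\le a_0+a_1\le a_{j,n_{\mathrm D}}$.

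The last step removes the positive part. Since $D_{\mathrm{res},j}^2(\phi)\ge0$ and $x\mapsto\max\{x,0\}$ is $1$-Lipschitz and fixes nonnegative numbers, $|\widetilde D_{j,n_{\mathrm D}}^2(\phi)-D_{\mathrm{res},j}^2(\phi)|$ is bounded by the same quantity. Taking the supremum over $\phi$ yields \eqref{eq:uniform-discrepancy-deviation}.

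The main obstacle is the bookkeeping in the concentration step. The confidence term must appear once, with the $\log(4/\delta_{\mathrm D})$ split across four one-sided events and the factor $2$ in front, so that the result is exactly $r_{j,n_{\mathrm D}}$. The supremum is taken jointly over encoder and critic, so the complexity is that of the composed loss class, not of the critics alone.
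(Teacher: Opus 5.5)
Your proposal is correct and follows essentially the same route as the paper. The shared steps are: uniform symmetrization/McDiarmid control of both Brier risks over the joint encoder--critic loss classes; transfer of that control to the infima and to the approximate critic fits; the Brier (Pythagorean) identity combined with the approximation terms $a_{b,n_{\mathrm D}}(\phi)$; and the $1$-Lipschitz positive part. You also make explicit two steps the paper leaves implicit: the four one-sided events at level $\delta_{\mathrm D}/4$ that produce the $\log(4/\delta_{\mathrm D})$ constant in $r_{j,n_{\mathrm D}}(\delta_{\mathrm D})$, and the nonnegativity of $a_{b,n_{\mathrm D}}(\phi)$ that justifies $|a_{0,n_{\mathrm D}}(\phi)-a_{1,n_{\mathrm D}}(\phi)|\le a_{0,n_{\mathrm D}}(\phi)+a_{1,n_{\mathrm D}}(\phi)$.
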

At any fixed confidence level $1-\delta_{\mathrm D}$, this error bound is $O(n_{\mathrm D}^{-1/2})$ if both Rademacher complexities, $a_{j,n_{\mathrm D}}$, and $\varepsilon_{\mathrm{crit},j}$ are each $O(n_{\mathrm D}^{-1/2})$.

\begin{definition}[Approximate empirical minimizer]
\label{def:approximate-empirical-minimizer}
\normalfont
 For a prescribed tolerance $\varepsilon_{\mathrm{enc},j}\ge0$, an approximate empirical minimizer is an encoder $\widehat\phi_j\in\mathcal F_{j,n_{\mathrm D}}$ selected measurably from the training observations and algorithmic seed such that
\begin{align}
\widetilde D_{j,n_{\mathrm D}}^2(\widehat\phi_j)
\le
\inf_{\phi\in\mathcal F_{j,n_{\mathrm D}}}\widetilde D_{j,n_{\mathrm D}}^2(\phi)
+
\varepsilon_{\mathrm{enc},j}.
\label{eq:approximate-erm}
\end{align}
An exact empirical minimizer has $\varepsilon_{\mathrm{enc},j}=0$.

\end{definition}

\begin{proposition}[Residual-discrepancy oracle inequality]
\label{appdetail:thm:representation-oracle}
\normalfont
 Let $\delta_{\mathrm D}\in(0,1)$. Suppose the prespecified encoder and $[0,1]$-valued critic classes, approximate critic fits in Eq.~\eqref{eq:approximate-critic-fits} of Definition~\ref{appdetail:def:empirical-held-out-proxy-discrepancy} hold, and $\widehat\phi_j$ satisfies Eq.~\eqref{eq:approximate-erm}. Then, with probability at least $1-\delta_{\mathrm D}$,
\begin{align}
D_{\mathrm{res},j}^2(\widehat\phi_j) &\le \inf_{\phi\in\mathcal F_{j,n_{\mathrm D}}}D_{\mathrm{res},j}^2(\phi) +2\{r_{j,n_{\mathrm D}}(\delta_{\mathrm D})+a_{j,n_{\mathrm D}}+\varepsilon_{\mathrm{crit},j}\} +\varepsilon_{\mathrm{enc},j}.
\label{eq:representation-oracle-discrepancy}
\end{align}
\end{proposition}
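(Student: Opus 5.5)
The plan is the standard three-step oracle argument: a uniform deviation bound, applied at the selected encoder and at a near-optimal comparator. Work on the event $\mathcal E$ from Prop.~\ref{appdetail:prop:uniform-discrepancy-deviation}, which has probability at least $1-\delta_{\mathrm D}$. On $\mathcal E$, with $\kappa\triangleq r_{j,n_{\mathrm D}}(\delta_{\mathrm D})+a_{j,n_{\mathrm D}}+\varepsilon_{\mathrm{crit},j}$,
\[\sup_{\phi\in\mathcal F_{j,n_{\mathrm D}}}|\widetilde D_{j,n_{\mathrm D}}^2(\phi)-D_{\mathrm{res},j}^2(\phi)|\le\kappa.\]
Since $\widehat\phi_j$ lies in $\mathcal F_{j,n_{\mathrm D}}$, this uniform bound covers the data-dependent selection, so no separate argument is needed for the random choice of encoder.

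Fix an arbitrary comparator $\phi\in\mathcal F_{j,n_{\mathrm D}}$. On $\mathcal E$, chain three inequalities:
\[D_{\mathrm{res},j}^2(\widehat\phi_j)\le\widetilde D_{j,n_{\mathrm D}}^2(\widehat\phi_j)+\kappa\le\inf_{\phi'}\widetilde D_{j,n_{\mathrm D}}^2(\phi')+\varepsilon_{\mathrm{enc},j}+\kappa\le\widetilde D_{j,n_{\mathrm D}}^2(\phi)+\varepsilon_{\mathrm{enc},j}+\kappa\le D_{\mathrm{res},j}^2(\phi)+2\kappa+\varepsilon_{\mathrm{enc},j}.\]
The first and last steps use the uniform deviation bound. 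The second step is the approximate-minimizer property, Eq.~\eqref{eq:approximate-erm}. The third is the trivial inequality $\inf\le$ value at $\phi$.

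The comparator $\phi$ is arbitrary and the event $\mathcal E$ does not depend on $\phi$. Taking the infimum over $\phi\in\mathcal F_{j,n_{\mathrm D}}$ therefore gives Eq.~\eqref{eq:representation-oracle-discrepancy} on $\mathcal E$. If the infimum is not attained, the bound holds for every $\phi$ and passes to the infimum by taking a minimizing sequence.

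The main point to check is that Prop.~\ref{appdetail:prop:uniform-discrepancy-deviation} is stated uniformly over the whole class, which it is. A secondary point is measurability of $\widehat\phi_j$, which the standing measurable-selection assumption provides (otherwise one works with outer probability). The remaining care is bookkeeping: the critic-approximation term $a_{j,n_{\mathrm D}}$ and the critic-optimization term $\varepsilon_{\mathrm{crit},j}$ are already inside $\kappa$, so each picks up the factor $2$. The encoder tolerance $\varepsilon_{\mathrm{enc},j}$ enters only once.
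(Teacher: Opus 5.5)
Your proof is correct and is essentially the paper's argument: on the uniform-deviation event of Prop.~\ref{appdetail:prop:uniform-discrepancy-deviation}, you chain the deviation bound at $\widehat\phi_j$, the approximate-minimizer property, and the deviation bound at a comparator, then take the infimum over $\phi$. The paper phrases the last step with a near-minimizer of $D_{\mathrm{res},j}^2$ rather than an arbitrary comparator, and the two are equivalent.
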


Condition on the training observations $\{O_i:i\in\mathcal I_{\mathrm D}\}$ and any algorithmic seed used in training, which fixes $\widehat\phi_j$. Here and below, population expectations involving this learned encoder use a fresh observation $O=(X,W,A,Y)$ independent of the training observations and seed, with this conditioning understood.
The learned representation and its population adjustment functional are

\begin{align}
Z_j^{\widehat\phi_j}
&\triangleq\widehat\phi_j(V_j),\qquad\widehat Z_j\triangleq Z_j^{\widehat\phi_j},\nonumber\\
\tau_{\widehat Z_j}
&\triangleq
\mathbb E\!\left[
\mathbb E(Y\mid A=1,\widehat Z_j)
-
\mathbb E(Y\mid A=0,\widehat Z_j)
\right].
\label{eq:learned-adjustment-functional}
\end{align}

Thus $\tau_{\widehat Z_j}=\tau_{Z_j^\phi}\big|_{\phi=\widehat\phi_j}$, where $\tau_{Z_j^\phi}$ is the population adjustment functional defined in Thm.~\ref{thm:approximate-proxy-bias-bound}.
The adjustment functional is used whenever its two conditional means are absolutely integrable under the distribution of $\widehat Z_j$.

\begin{theorem}[Learned-representation error with an explicit radius]
\label{appdetail:cor:representation-causal}
\normalfont

Let $\delta_{\mathrm D}\in(0,1)$. Suppose the prespecified encoder and $[0,1]$-valued critic classes, approximate critic fits in Eq.~\eqref{eq:approximate-critic-fits} of Definition~\ref{appdetail:def:empirical-held-out-proxy-discrepancy} hold, and $\widehat\phi_j$ satisfies Eq.~\eqref{eq:approximate-erm}. Suppose Assumptions~\ref{ass:consistency}, \ref{ass:joint-latent-exchangeability}, and \ref{ass:uniform-stability} hold for every $Z_j^\phi$ with $\phi\in\mathcal F_{j,n_{\mathrm D}}$, and there is a constant $\eta\in(0,1/2]$ such that $\eta\le e_{j,\phi}(Z_j^\phi)\le1-\eta$ almost surely for every such $\phi$. Then, with probability at least $1-\delta_{\mathrm D}$,
\(|\tau_{\widehat Z_j}-\tau| \le \frac{\Gamma_j(\widehat\phi_j)}{\eta(1-\eta)} \left[ \inf_{\phi\in\mathcal F_{j,n_{\mathrm D}}}D_{\mathrm{res},j}^2(\phi) +2\{r_{j,n_{\mathrm D}}(\delta_{\mathrm D})+a_{j,n_{\mathrm D}}+\varepsilon_{\mathrm{crit},j}\} +\varepsilon_{\mathrm{enc},j} \right]^{1/2}.\)
\end{theorem}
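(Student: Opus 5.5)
This is a composition of two earlier results: Proposition~\ref{appdetail:thm:representation-oracle}, which bounds the discrepancy of the learned encoder, and the approximate-balance bias bound of Thm.~\ref{thm:approximate-proxy-bias-bound}, which converts discrepancy into causal bias. The plan is to work on the training event, which depends only on $\mathcal I_{\mathrm D}$ and the seed, and then apply the deterministic bias bound with $\phi$ fixed at $\widehat\phi_j$.

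\textbf{Step 1 (discrepancy oracle inequality).} Let $E$ be the event of probability at least $1-\delta_{\mathrm D}$ from Proposition~\ref{appdetail:prop:uniform-discrepancy-deviation}. On $E$, Proposition~\ref{appdetail:thm:representation-oracle} gives
\[
D_{\mathrm{res},j}^2(\widehat\phi_j)\le\inf_{\phi\in\mathcal F_{j,n_{\mathrm D}}}D_{\mathrm{res},j}^2(\phi)+2\{r_{j,n_{\mathrm D}}(\delta_{\mathrm D})+a_{j,n_{\mathrm D}}+\varepsilon_{\mathrm{crit},j}\}+\varepsilon_{\mathrm{enc},j}.
\]
If that proposition were not available, the same bound follows from a standard ERM argument. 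With $\xi$ denoting the uniform deviation radius, we have
\[
D^2(\widehat\phi)\le\widetilde D^2(\widehat\phi)+\xi\le\widetilde D^2(\phi)+\varepsilon_{\mathrm{enc}}+\xi\le D^2(\phi)+2\xi+\varepsilon_{\mathrm{enc}}
\]
for every $\phi$ in the class. Taking the infimum over $\phi$ gives the display.

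\textbf{Step 2 (bias bound at the learned encoder).} Condition on $\mathcal I_{\mathrm D}$ and the seed, so that $\widehat\phi_j$ is a fixed deterministic member of $\mathcal F_{j,n_{\mathrm D}}$. The hypotheses were imposed for every $\phi$ in the class, so they hold at $\widehat\phi_j$: Assumptions~\ref{ass:consistency}, \ref{ass:joint-latent-exchangeability}, and~\ref{ass:uniform-stability}, together with overlap $\eta\le e_{j,\widehat\phi_j}\le1-\eta$. Population quantities are computed for a fresh observation independent of the training data. Thm.~\ref{appdetail:thm:approximate-proxy-bias-bound} therefore gives
\[
|\tau_{\widehat Z_j}-\tau|\le\frac{\Gamma_j(\widehat\phi_j)}{\eta(1-\eta)}D_{\mathrm{res},j}(\widehat\phi_j).
\]

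\textbf{Step 3 (combine).} On $E$, substitute the square root of the Step 1 bound into the Step 2 inequality. Since $\sqrt{\cdot}$ is monotone and the right-hand side of Step 1 is nonnegative, this yields the claimed bound.

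\textbf{Main obstacle.} The delicate point is measurability and conditioning. $\widehat\phi_j$ is data-dependent, and Thm.~\ref{appdetail:thm:approximate-proxy-bias-bound} is a statement about a fixed encoder. I would handle this as follows.
\begin{itemize}
\item Use the standing convention that $\widehat\phi_j$ is a measurable selection, so that $\Gamma_j(\widehat\phi_j)$ and $D_{\mathrm{res},j}(\widehat\phi_j)$ are random variables.
\item Apply the fixed-$\phi$ bound pointwise for each realization of the training data. This is valid because the bound holds for every $\phi$ in the class, so no uniform-in-$\phi$ probabilistic statement is needed.
\item Note that the only probabilistic event is $E$, which is why the overall confidence level is exactly $1-\delta_{\mathrm D}$.
\end{itemize}
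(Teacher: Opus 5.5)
Your proposal is correct and follows the paper's own proof: condition on $\mathcal I_{\mathrm D}$ and the seed to fix $\widehat\phi_j$, apply the approximate-balance bias bound at $\widehat\phi_j$, and on the uniform-deviation event substitute the square root of the residual-discrepancy oracle inequality (Proposition~\ref{appdetail:thm:representation-oracle}). Your fallback ERM chain is the same argument the paper uses to prove that oracle inequality.
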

At fixed confidence and fixed overlap constant $\eta$, the right-hand side is $O_p(n_{\mathrm D}^{-1/4})$ if $\inf_{\phi\in\mathcal F_{j,n_{\mathrm D}}}D_{\mathrm{res},j}^2(\phi)=0$, each of $r_{j,n_{\mathrm D}}(\delta_{\mathrm D})$, $a_{j,n_{\mathrm D}}$, $\varepsilon_{\mathrm{crit},j}$, and $\varepsilon_{\mathrm{enc},j}$ is $O(n_{\mathrm D}^{-1/2})$, and $\Gamma_j(\widehat\phi_j)=O_p(1)$. This is a conditional upper-bound rate; it is not a universal or minimax rate.

\subsection{AIPW Estimation}
\label{sec:aipw-estimation}
This section estimates the adjustment functional $\tau_{\widehat Z_j}$ of the learned representation with an augmented inverse probability weighted (AIPW) estimator \citep{robins1994estimation,tsiatis2006semiparametric} computed on the evaluation sample.
 Define the conditional outcome means and representation propensity
\begin{align}
\begin{aligned}
	m_{a,j}(z) &\triangleq \mathbb E(Y\mid A=a,\widehat Z_j=z),\\
	e_j(z) &\triangleq P(A=1\mid\widehat Z_j=z).
\end{aligned}
\label{eq:aipw-nuisances}
\end{align}
For the observed adjustment functional $\tau_{\widehat Z_j}$, the following is the AIPW score:
\(\mathtt{IF}_j(O) \triangleq m_{1,j}(\widehat Z_j)-m_{0,j}(\widehat Z_j) + \frac{A\{Y-m_{1,j}(\widehat Z_j)\}}{e_j(\widehat Z_j)} - \frac{(1-A)\{Y-m_{0,j}(\widehat Z_j)\}}{1-e_j(\widehat Z_j)} - \tau_{\widehat Z_j}.\)
When this score is square-integrable, it is the \textit{efficient influence function} in the nonparametric model for $(\widehat Z_j,A,Y)$. Bounded outcomes together with uniform representation overlap provide one sufficient condition for square integrability. Use the nuisance-learning sample, with $\widehat Z_{j,i}=\widehat\phi_j(V_{j,i})$, to estimate the nuisance functions. Denote the estimates by $\widehat m_{0,j}$, $\widehat m_{1,j}$, and $\widehat e_j$.

\begin{definition}[Honest AIPW estimator]
\label{appdetail:def:honest-aipw}
\normalfont
For $i\in\mathcal I_{\mathrm E}$, define $\widehat Z_{j,i}=\widehat\phi_j(V_{j,i})$ and
	\(\mathtt{UIF}_j(O_i) \triangleq{} \widehat m_{1,j}(\widehat Z_{j,i}) - \widehat m_{0,j}(\widehat Z_{j,i}) + \frac{A_i}{\widehat e_j(\widehat Z_{j,i})} \{Y_i-\widehat m_{1,j}(\widehat Z_{j,i})\} - \frac{1-A_i}{1-\widehat e_j(\widehat Z_{j,i})} \{Y_i-\widehat m_{0,j}(\widehat Z_{j,i})\}.\)
The score $\mathtt{UIF}_j$ is the uncentered plug-in version of $\mathtt{IF}_j$: it replaces the nuisance functions by their estimates and omits the centering term $\tau_{\widehat Z_j}$.
The honest AIPW estimator is
\(\widehat\tau_j^{\mathrm{AIPW}} \triangleq \frac{1}{n_{\mathrm E}} \sum_{i\in\mathcal I_{\mathrm E}}\mathtt{UIF}_j(O_i).\)
\end{definition}

The plug-in score has the doubly robust remainder structure of augmented inverse probability weighting \citep{chernozhukov2017double,kennedy2022semiparametric}: its conditional bias is a product of the propensity error and the outcome-regression errors, as the next proposition states exactly. For $a\in\{0,1\}$, write
\(r_{a,j}(z) \triangleq \frac{\widehat m_{a,j}(z)-m_{a,j}(z)} {a\widehat e_j(z)+(1-a)\{1-\widehat e_j(z)\}}.\)
\begin{proposition}[Exact signed nuisance remainder]
\label{appdetail:prop:aipw-remainder}
\normalfont
Conditional on $\mathcal I_{\mathrm D}$ and $\mathcal I_{\mathrm N}$,
\begin{align}
\mathbb E\{\mathtt{UIF}_j(O)\}-\tau_{\widehat Z_j} &= \mathbb E\Big[ \{\widehat e_j(\widehat Z_j)-e_j(\widehat Z_j)\} \{r_{1,j}(\widehat Z_j)+r_{0,j}(\widehat Z_j)\} \Big].
\label{eq:aipw-remainder}
\end{align}
\end{proposition}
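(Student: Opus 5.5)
The identity in Prop.~\ref{appdetail:prop:aipw-remainder} is a purely algebraic consequence of iterated expectations, and I would prove it by computing $\mathbb E\{\mathtt{UIF}_j(O)\}$ conditionally on $\widehat Z_j$. Throughout I condition on $\mathcal I_{\mathrm D}$ and $\mathcal I_{\mathrm N}$. This fixes $\widehat\phi_j$, $\widehat m_{a,j}$ and $\widehat e_j$ as deterministic functions, and $O$ is a fresh draw. Write $Z=\widehat Z_j$, $\widehat e=\widehat e_j(Z)$, $e=e_j(Z)$, $\widehat m_a=\widehat m_{a,j}(Z)$, and $m_a=m_{a,j}(Z)$. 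I assume integrability of $Y$ and of the nuisance estimates so that all conditional expectations exist; clipping $\widehat e$ to $[\eta,1-\eta]$ keeps the denominators bounded away from zero.

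\textbf{Step 1: conditional means of the weighted residuals.} Condition on $Z$. Since $A$ is binary and $m_1(Z)=\mathbb E(Y\mid A=1,Z)$,
\[
\mathbb E[A\{Y-\widehat m_1\}\mid Z]=P(A=1\mid Z)\,\{\mathbb E(Y\mid A=1,Z)-\widehat m_1\}=e\,(m_1-\widehat m_1).
\]
Because $\widehat e$ is a function of $Z$ alone, it can be pulled out of the conditional expectation. By the same argument for the control arm,
\[
\mathbb E[(1-A)\{Y-\widehat m_0\}\mid Z]=(1-e)(m_0-\widehat m_0).
\]

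\textbf{Step 2: combine the terms.} Substituting Step 1 into the score gives
\[
\mathbb E\{\mathtt{UIF}_j(O)\mid Z\}=\widehat m_1-\widehat m_0+\frac{e(m_1-\widehat m_1)}{\widehat e}-\frac{(1-e)(m_0-\widehat m_0)}{1-\widehat e}.
\]
Subtract $m_1-m_0$ and group the treated-arm terms:
\[
(\widehat m_1-m_1)\Bigl(1-\frac{e}{\widehat e}\Bigr)=\frac{(\widehat e-e)(\widehat m_1-m_1)}{\widehat e}=(\widehat e-e)\,r_{1,j}(Z).
\]
The control-arm terms group the same way:
\[
-(\widehat m_0-m_0)\Bigl(1-\frac{1-e}{1-\widehat e}\Bigr)=-(\widehat m_0-m_0)\frac{e-\widehat e}{1-\widehat e}=(\widehat e-e)\,r_{0,j}(Z).
\]
Hence
\[
\mathbb E\{\mathtt{UIF}_j(O)\mid Z\}-(m_1-m_0)=(\widehat e-e)\{r_{1,j}(Z)+r_{0,j}(Z)\}.
\]

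\textbf{Step 3: integrate over $Z$.} Taking the expectation over $Z$ and using $\mathbb E\{m_1(Z)-m_0(Z)\}=\tau_{\widehat Z_j}$ from Eq.~\eqref{eq:learned-adjustment-functional} yields Eq.~\eqref{eq:aipw-remainder}.

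The only point that needs care is the sign bookkeeping in the control-arm term, which is where the apparent asymmetry between $r_{1,j}$ and $r_{0,j}$ cancels. Integrability also deserves a remark. It holds because $\widehat e\in[\eta,1-\eta]$ and $Y$ and $\widehat m_a$ are integrable, so splitting the expectation into the separate terms above is legitimate.
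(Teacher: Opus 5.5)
Your proposal is correct and is essentially the paper's proof. You condition on $\widehat Z_j=z$, use $\mathbb E[A\{Y-\widehat m_{1,j}(z)\}\mid\widehat Z_j=z]=e_j(z)\{m_{1,j}(z)-\widehat m_{1,j}(z)\}$ and its control-arm analogue, subtract $m_{1,j}(z)-m_{0,j}(z)$, factor out $\widehat e_j(z)-e_j(z)$ arm by arm, and average over $\widehat Z_j$ using the definition of $\tau_{\widehat Z_j}$; your explicit control-arm sign bookkeeping, which the paper compresses into ``collect the two nuisance errors,'' checks out.
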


The remainder is zero if the propensity estimator is correct. It is also zero if both conditional outcome estimators are correct.

Clip the propensity estimate so that $\eta\le\widehat e_j(z)\le1-\eta$. Conditional on the two learning samples, that is, on their observations and the seeds used to fit the encoder and the nuisance functions, define for a function $g$ of $\widehat Z_j$ the norm $\|g\|_{2,\widehat Z_j}\triangleq[\mathbb E\{g(\widehat Z_j)^2\mid\mathcal I_{\mathrm D},\mathcal I_{\mathrm N}\}]^{1/2}$ and the realized nuisance errors
\(q_{e,j}\triangleq\|\widehat e_j-e_j\|_{2,\widehat Z_j}, \qquad q_{a,j}\triangleq\|\widehat m_{a,j}-m_{a,j}\|_{2,\widehat Z_j}, \qquad a\in\{0,1\}.\)
Let $\sigma_j^2\triangleq\mathrm{Var}\{\mathtt{UIF}_j(O)\mid\mathcal I_{\mathrm D},\mathcal I_{\mathrm N}\}$ denote the conditional variance of the plug-in score; it is finite whenever $\mathbb E(Y^2)<\infty$ and the outcome estimates are square-integrable. For $\delta_{\mathrm E}\in(0,1)$, define
\begin{align}
\varepsilon_{\mathrm{AIPW},j}
\triangleq
\frac{\sigma_j}{\sqrt{n_{\mathrm E}\delta_{\mathrm E}}}
+
\frac{q_{e,j}(q_{0,j}+q_{1,j})}{\eta}.
\label{eq:aipw-error-radius}
\end{align}
In the notation of Thm.~\ref{thm:conditional-aipw-error}, $\|\cdot\|_{2,\widehat Z_j}=\|\cdot\|_{2,j}$ and $q_{e,j}(q_{0,j}+q_{1,j})/\eta=\mathtt{err}_j$, so $\varepsilon_{\mathrm{AIPW},j}=\sigma_j/\sqrt{n_{\mathrm E}\delta_{\mathrm E}}+\mathtt{err}_j$. The last term of the second bound of Thm.~\ref{appdetail:thm:conditional-aipw-error} below is $\mathtt{bias}_j$.
The next theorem is the final guarantee of this section: the causal error of the honest AIPW estimator is at most the estimation radius $\varepsilon_{\mathrm{AIPW},j}$ plus the stability-weighted residual discrepancy of the learned representation.

\begin{theorem}[Honest AIPW error; general form of Thm.~\ref{thm:conditional-aipw-error}]
\label{appdetail:thm:conditional-aipw-error}
\normalfont

Suppose $\sigma_j<\infty$. Conditional on the two learning samples, with probability at least $1-\delta_{\mathrm E}$ over $\mathcal I_{\mathrm E}$,
\begin{align}
|\widehat\tau_j^{\mathrm{AIPW}}-\tau_{\widehat Z_j}|
\le
\varepsilon_{\mathrm{AIPW},j}.
\label{eq:conditional-aipw-error}
\end{align}
If, in addition, Assumptions~\ref{ass:consistency}, \ref{ass:joint-latent-exchangeability}, and \ref{ass:uniform-stability} hold for $Z_j^{\widehat\phi_j}$ and $\eta\le e_j(\widehat Z_j)\le1-\eta$ almost surely, then on the same event
\(|\widehat\tau_j^{\mathrm{AIPW}}-\tau| \le \varepsilon_{\mathrm{AIPW},j} + \frac{\Gamma_j(\widehat\phi_j)}{\eta(1-\eta)} D_{\mathrm{res},j}(\widehat\phi_j).\)

\end{theorem}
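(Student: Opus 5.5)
The first bound splits into a mean-zero sampling term and a bias term. Conditional on $\mathcal I_{\mathrm D}$ and $\mathcal I_{\mathrm N}$, the functions $\widehat\phi_j,\widehat m_{a,j},\widehat e_j$ are fixed. The scores $\mathtt{UIF}_j(O_i)$, $i\in\mathcal I_{\mathrm E}$, are therefore i.i.d. with mean $\mathbb E\{\mathtt{UIF}_j(O)\mid\mathcal I_{\mathrm D},\mathcal I_{\mathrm N}\}$ and variance $\sigma_j^2$. I would write
\[
\widehat\tau_j^{\mathrm{AIPW}}-\tau_{\widehat Z_j}
=\Bigl[\widehat\tau_j^{\mathrm{AIPW}}-\mathbb E\{\mathtt{UIF}_j(O)\mid\cdot\}\Bigr]
+\Bigl[\mathbb E\{\mathtt{UIF}_j(O)\mid\cdot\}-\tau_{\widehat Z_j}\Bigr].
\]
The first bracket has conditional variance $\sigma_j^2/n_{\mathrm E}$. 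Chebyshev's inequality then gives, with conditional probability at least $1-\delta_{\mathrm E}$, that it is at most $\sigma_j/\sqrt{n_{\mathrm E}\delta_{\mathrm E}}$. This is exactly the first term of $\varepsilon_{\mathrm{AIPW},j}$ and needs only $\sigma_j<\infty$.

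The second bracket is the nuisance bias, which Prop.~\ref{appdetail:prop:aipw-remainder} gives exactly as $\mathbb E[(\widehat e_j-e_j)(r_{1,j}+r_{0,j})]$. To derive it, condition on $\widehat Z_j$ and use $\mathbb E(A\mid\widehat Z_j)=e_j$ and $\mathbb E(AY\mid\widehat Z_j)=e_jm_{1,j}$. The treated term becomes $\widehat m_{1,j}-m_{1,j}+e_j(m_{1,j}-\widehat m_{1,j})/\widehat e_j=(\widehat e_j-e_j)(\widehat m_{1,j}-m_{1,j})/\widehat e_j$, and the control term is symmetric. Because $\widehat e_j$ is clipped to $[\eta,1-\eta]$, both denominators are at least $\eta$, so $|r_{a,j}|\le|\widehat m_{a,j}-m_{a,j}|/\eta$. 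Cauchy--Schwarz then gives
\[
\bigl|\mathbb E\{\mathtt{UIF}_j(O)\mid\cdot\}-\tau_{\widehat Z_j}\bigr|\le q_{e,j}(q_{0,j}+q_{1,j})/\eta=\mathtt{err}_j.
\]
The triangle inequality combines the two brackets into the first display.

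For the second display, on the same event I would apply the triangle inequality $|\widehat\tau_j^{\mathrm{AIPW}}-\tau|\le|\widehat\tau_j^{\mathrm{AIPW}}-\tau_{\widehat Z_j}|+|\tau_{\widehat Z_j}-\tau|$. I would bound the last term by Thm.~\ref{thm:approximate-proxy-bias-bound} applied to the fixed encoder $\phi=\widehat\phi_j$, which is legitimate conditional on $\mathcal I_{\mathrm D}$. Its hypotheses hold by assumption: Assumptions~\ref{ass:consistency}, \ref{ass:joint-latent-exchangeability}, \ref{ass:uniform-stability} for $Z_j^{\widehat\phi_j}$, and the overlap $\eta\le e_j\le1-\eta$. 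This gives $\mathtt{bias}_j$. The last term is deterministic given $\mathcal I_{\mathrm D}$, so no additional probability is lost.

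The main obstacle is measure-theoretic bookkeeping, not estimation. One must ensure that conditioning on the learning samples makes the learned objects genuinely fixed, so that population statements apply with $\widehat\phi_j$ plugged in. This relies on sample independence and on the measurable selection assumed for the encoder. One must also ensure that the integrability needed for $m_{a,j}$, $\tau_{\widehat Z_j}$, and the remainder identity holds. I would handle this by invoking the convention stated before Eq.~\eqref{eq:learned-adjustment-functional}: a fresh observation independent of training data, with integrability of the conditional means. Finiteness of $\sigma_j$ then supplies the square-integrability used in Cauchy--Schwarz.
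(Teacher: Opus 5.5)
Your proposal is correct and follows the paper's proof essentially step for step: Chebyshev's inequality for the evaluation average, the exact signed remainder together with propensity clipping and Cauchy--Schwarz for the nuisance term, and then the triangle inequality with the approximate-balance bias bound applied to the fixed encoder $\widehat\phi_j$ for the second display. The only difference is that you rederive the remainder identity inline, whereas the paper cites Prop.~\ref{appdetail:prop:aipw-remainder}.
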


\begin{corollary}[Consistency]
\label{cor:learned-aipw-consistency}
\normalfont

Consider a sequence of encoder and critic classes, sample splits, nuisance estimators, optimization tolerances, and selected encoders $\widehat\phi_{j,n}$ indexed by the total sample size $n$, with a common overlap constant $\eta\in(0,1/2]$, such that Thm.~\ref{thm:conditional-aipw-error} applies at every $n$. Suppose the following three conditions hold.
\begin{enumerate}[label=(C\arabic*),leftmargin=*]
\item \textit{Vanishing evaluation error.} $\delta_{\mathrm E,n}\to0$ and $\sigma_{j,n}^2/(n_{\mathrm E}\delta_{\mathrm E,n})\to0$. The evaluation sample grows faster than the confidence level shrinks.
\item \textit{Vanishing nuisance product.} $q_{e,j,n}(q_{0,j,n}+q_{1,j,n})\xrightarrow{p}0$. The propensity and outcome-regression errors need not vanish individually; their product vanishes, for example, when each is $o_p(n^{-1/4})$.
\item \textit{Vanishing learned imbalance.} $\Gamma_j(\widehat\phi_{j,n})D_{\mathrm{res},j}(\widehat\phi_{j,n})\xrightarrow{p}0$. By Prop.~\ref{appdetail:thm:representation-oracle}, this holds when $\inf_{\phi\in\mathcal F_{j,n_{\mathrm D}}}D_{\mathrm{res},j}^2(\phi)$, the two Rademacher complexities, $a_{j,n_{\mathrm D}}$, $\varepsilon_{\mathrm{crit},j,n}$, $\varepsilon_{\mathrm{enc},j,n}$, $\delta_{\mathrm D,n}$, and $\log(1/\delta_{\mathrm D,n})/n_{\mathrm D}$ all converge to zero and $\Gamma_j(\widehat\phi_{j,n})=O_p(1)$.
\end{enumerate}
Then $\widehat\tau_{j,n}^{\mathrm{AIPW}}\xrightarrow{p}\tau$.

\end{corollary}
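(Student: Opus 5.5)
The plan is to decompose $\widehat\tau_{j,n}^{\mathrm{AIPW}}-\tau$ with the second bound of Thm.~\ref{appdetail:thm:conditional-aipw-error}, applied at each $n$. Conditional on the two learning samples, that bound gives, with probability at least $1-\delta_{\mathrm E,n}$ over $\mathcal I_{\mathrm E}$,
\[
|\widehat\tau_{j,n}^{\mathrm{AIPW}}-\tau|\le \frac{\sigma_{j,n}}{\sqrt{n_{\mathrm E}\delta_{\mathrm E,n}}}+\frac{q_{e,j,n}(q_{0,j,n}+q_{1,j,n})}{\eta}+\frac{\Gamma_j(\widehat\phi_{j,n})}{\eta(1-\eta)}D_{\mathrm{res},j}(\widehat\phi_{j,n}).
\]
Call the right-hand side $B_n$. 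Taking expectations over the learning samples removes the conditioning. This gives the unconditional statement $P\{|\widehat\tau_{j,n}^{\mathrm{AIPW}}-\tau|>B_n\}\le\delta_{\mathrm E,n}$, because the conditional failure probability is at most $\delta_{\mathrm E,n}$ on every realization of the learning samples.

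Next I show $B_n\xrightarrow{p}0$ by treating its three terms separately. Since $\eta$ is fixed, the first term tends to zero by (C1); if $\sigma_{j,n}$ is random (it depends on the learning samples), I read (C1) as holding in probability. The second term is $o_p(1)$ by (C2). The third term is $o_p(1)$ by (C3). A sum of $o_p(1)$ terms is $o_p(1)$.

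Finally, for any $\epsilon>0$,
\[
P\{|\widehat\tau_{j,n}^{\mathrm{AIPW}}-\tau|>\epsilon\}\le P\{|\widehat\tau_{j,n}^{\mathrm{AIPW}}-\tau|>B_n\}+P(B_n\ge\epsilon)\le\delta_{\mathrm E,n}+P(B_n\ge\epsilon)\to0.
\]
The first term vanishes because $\delta_{\mathrm E,n}\to0$ by (C1), and the second because $B_n\xrightarrow{p}0$.

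The main point needing care is the remark inside (C3) that the listed rate conditions suffice. I would derive it from Prop.~\ref{appdetail:thm:representation-oracle}. On an event of probability at least $1-\delta_{\mathrm D,n}$, $D^2_{\mathrm{res},j}(\widehat\phi_{j,n})$ is at most the infimum plus $2(r+a+\varepsilon_{\mathrm{crit}})+\varepsilon_{\mathrm{enc}}$. The radius $r$ vanishes when the Rademacher complexities and $\log(1/\delta_{\mathrm D,n})/n_{\mathrm D}$ vanish, so $D_{\mathrm{res},j}(\widehat\phi_{j,n})=o_p(1)$ because $\delta_{\mathrm D,n}\to0$. Multiplying by $\Gamma_j(\widehat\phi_{j,n})=O_p(1)$ gives (C3). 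The remaining bookkeeping is measurability of the learned objects, which is already assumed in the setup.
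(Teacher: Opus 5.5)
Your proposal is correct and follows the paper's proof. You bound $|\widehat\tau_{j,n}^{\mathrm{AIPW}}-\tau|$ by the second inequality of Thm.~\ref{appdetail:thm:conditional-aipw-error}, send its three terms to zero via (C1)--(C3), and absorb the failure probability using $\delta_{\mathrm E,n}\to0$. Your de-conditioning step, the $\epsilon$-splitting inequality, and the derivation of the (C3) sufficient condition from Prop.~\ref{appdetail:thm:representation-oracle} make explicit what the paper states in one line. In that derivation, $\delta_{\mathrm D,n}\to0$ together with $\log(1/\delta_{\mathrm D,n})/n_{\mathrm D}\to0$ forces $n_{\mathrm D}\to\infty$, which is why the radius vanishes.
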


Eq.~\eqref{eq:end-to-end-sensitivity} separates the final error into an AIPW estimation term and a hidden-confounding sensitivity term. The selected stability constant $\Gamma_j(\widehat\phi_j)$, critic approximation error $a_{j,n_{\mathrm D}}$, optimization tolerances, and nuisance errors are not generally observed. The result becomes an operational numerical certificate after these quantities are bounded by theory, computation, external information, or a declared sensitivity analysis.

\subsection{Searching Across Proxy Splits}
\label{sec:random-orientation-search}
Exhaustive evaluation of all proxy splits in $\mathfrak S$, defined in Eq.~\eqref{eq:all-held-out-orientations}, may be computationally expensive. The number of splits, $|\mathfrak S|=2^J-2$, grows exponentially in the number of blocks $J$, and each split requires its own representation and nuisance fits.

 Algorithm~\ref{appdetail:alg:probe-search} therefore samples $m$ proxy splits instead of evaluating all of them, and then works in four stages. The sampling stage draws the splits $S$ (line~\ref{ln:sample}). The screening stage keeps the splits that pass the empirical balance screen (lines~\ref{ln:screen-start}--\ref{ln:screen-end}). The estimation stage computes one honest AIPW estimate per retained split (lines~\ref{ln:estimate-start}--\ref{ln:estimate-end}). The aggregation stage links estimates within distance $2\rho$ and returns the median of the largest component (lines~\ref{ln:link}--\ref{ln:return}). The theorem below gives conditions under which those empirical components recover a fixed oracle partition of the population candidates.

 Three questions decide whether the output is accurate: whether enough screened splits are sampled, whether the designated target cell remains the unique plurality, and whether the estimated graph recovers the fixed oracle cells.

\begin{algorithm}[t]
\caption{PROBE split search, detailed form}
\label{appdetail:alg:probe-search}
	\KwIn{a block partition $W=(W_1,\ldots,W_J)$; observations split into $\mathcal I_{\mathrm D}$, $\mathcal I_{\mathrm N}$, $\mathcal I_{\mathrm E}$; a budget $m$; a threshold $t\ge0$; an overlap constant $\eta$; a linking radius $\rho>0$.}
	\KwOut{an estimate $\widehat\tau$ of $\tau$, or failure.}
	Draw $m$ proxy splits uniformly without replacement from $\mathfrak S$\label{ln:sample}\;
	\ForEach{sampled split $S$\label{ln:screen-start}}{
		learn $\widehat\phi_S$ on $\mathcal I_{\mathrm D}$ by minimizing $\widetilde D_{S,n_{\mathrm D}}^2$\;
		 fit an unclipped propensity $\widehat e_S^{\mathrm{raw}}$ on $\mathcal I_{\mathrm N}$; retain $S$ if $\widetilde D_{S,n_{\mathrm D}}^2(\widehat\phi_S)\le t$ and $\widehat e_S^{\mathrm{raw}}$ passes a prespecified empirical overlap check at level $\eta$.\;
	}
	Let $\mathcal R_m$ be the set of retained splits\label{ln:screen-end}\;
	\ForEach{$S\in\mathcal R_m$\label{ln:estimate-start}}{
		 fit outcome regressions on $\mathcal I_{\mathrm N}$, set $\widehat e_S=\operatorname{clip}(\widehat e_S^{\mathrm{raw}},\eta,1-\eta)$, and compute the honest AIPW estimate $\widehat\theta_S$ from Eq.~\eqref{eq:honest-aipw-score} on $\mathcal I_{\mathrm E}$.\label{ln:estimate-end}\;
	}
	Link $S$ and $S'$ whenever $|\widehat\theta_S-\widehat\theta_{S'}|\le2\rho$, and form the connected components\label{ln:link}\;
	\Return the median of $\{\widehat\theta_S\}$ over the unique largest component; declare failure if several components share the largest size\label{ln:return}\;
\end{algorithm}
\FloatBarrier

 For the analysis only, preassign a learning seed $\omega_S$ to every $S\in\mathfrak S$ before the split draw, let $\mathcal G\triangleq\sigma(\mathcal I_{\mathrm D},\mathcal I_{\mathrm N},\{\omega_S:S\in\mathfrak S\})$, and condition on $\mathcal G$ throughout this subsection. Counterfactually applying the fixed learning-and-screening rule to every $S\in\mathfrak S$ defines $\widehat{\mathcal B}$, although Algorithm~\ref{alg:probe-search} evaluates only sampled splits. Thus, for every $S\in\mathfrak S$, the learned encoder $\widehat\phi_S$, the nuisance estimates, the screen outcome, and the set $\widehat{\mathcal B}$ are fixed. For $S\in\widehat{\mathcal B}$, let $\widehat Z_S\triangleq\widehat\phi_S(V_S)$ and let $\theta_S\triangleq\tau_{\widehat Z_S}$ be its population adjustment value.

Let $\Pi=\{C_0,\ldots,C_L\}$ be a partition of $\widehat{\mathcal B}$ that is fixed given $\mathcal G$ and hence chosen before the random split draw and $\mathcal I_{\mathrm E}$. Because the population values $\theta_S$ are generally unknown, $\Pi$ is an oracle device for analysis rather than an object computed by Algorithm~\ref{alg:probe-search}.

Let $T\triangleq|\mathfrak S|=2^J-2$ and $N\triangleq|\widehat{\mathcal B}|\ge1$. Fix $m\in\{1,\ldots,T\}$. Line~\ref{ln:sample} of Algorithm~\ref{appdetail:alg:probe-search} draws $m$ proxy splits uniformly without replacement from $\mathfrak S$, independently of $\mathcal G$. Let $\mathcal R_m$ be the sampled splits in $\widehat{\mathcal B}$, let $R\triangleq|\mathcal R_m|$, and let $R_\ell\triangleq|\mathcal R_m\cap C_\ell|$. Then
\begin{align}
	R\sim\operatorname{Hypergeometric}(T,N,m),
\label{eq:retained-orientation-count}
\end{align}
since exactly $N$ of the $T$ candidates belong to $\widehat{\mathcal B}$ and the $m$ draws are made without replacement.

 For the oracle cells, define
\begin{align}\label{eq:population-orientation-count}
	n_\ell\triangleq|C_\ell|,\qquad \ell=0,\ldots,L.
\end{align}
Adopt the convention that a maximum over an empty collection equals zero and define
\begin{align}\label{eq:population-plurality-gap}
	\Delta_\Pi\triangleq\frac{n_0}{N}-\max_{1\le\ell\le L}\frac{n_\ell}{N}.
\end{align}
Thus $\Delta_\Pi>0$ means that the target cell $C_0$ is larger than every competing oracle cell. If $L=0$, the maximum in Eq.~\eqref{eq:population-plurality-gap} and the retained-cell maximum in Lemma~\ref{lem:random-subsampling-plurality} are both defined as zero.

Two quantities of the distribution in Eq.~\eqref{eq:retained-orientation-count} enter the guarantee:
\begin{align}
\begin{aligned}
P(R=0)&=\binom{T-N}{m}\Big/\binom{T}{m},\\
\mathbb E\{e^{-R\Delta_\Pi^2/2}\}&=\sum_{r=0}^{\min(m,N)}\binom Nr\binom{T-N}{m-r}e^{-r\Delta_\Pi^2/2}\Big/\binom Tm.
\end{aligned}
\label{eq:sampling-failure-terms}
\end{align}
Eq.~\eqref{eq:sampling-failure-terms} gives the empty-draw probability and the plurality-reversal term; both decrease as the budget $m$ grows.

\begin{lemma}[Cluster plurality under random subsampling]
\label{lem:random-subsampling-plurality}
\normalfont
Conditional on $\mathcal G$, $R\sim\operatorname{Hypergeometric}(T,N,m)$. If $\Delta_\Pi>0$, then with probability at least $1-P(R=0)-L\,\mathbb E\{e^{-R\Delta_\Pi^2/2}\}$, $R\ge1$ and $R_0>\max_{1\le\ell\le L}R_\ell$, with an empty maximum equal to zero.
\end{lemma}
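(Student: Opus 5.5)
The first claim is immediate. Conditional on $\mathcal G$, the set $\widehat{\mathcal B}$ is fixed with $N$ elements, and the $m$ splits are drawn uniformly without replacement from the $T$ elements of $\mathfrak S$, independently of $\mathcal G$. So $R=|\mathcal R_m|$ counts the sampled elements that land in a fixed $N$-subset. That is exactly $\operatorname{Hypergeometric}(T,N,m)$, as recorded in Eq.~\eqref{eq:retained-orientation-count}.

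For the plurality claim, the plan is to condition further on $R=r$. Given $R=r\ge1$, the exchangeability of uniform sampling without replacement implies that $\mathcal R_m$ is a uniformly random $r$-subset of $\widehat{\mathcal B}$. Consequently $(R_0,\ldots,R_L)$ is multivariate hypergeometric with population counts $(n_0,\ldots,n_L)$.

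Fix $\ell\in\{1,\ldots,L\}$ and consider
\[
D_\ell\triangleq R_0-R_\ell=\sum_{k=1}^r\xi_k,
\]
where the $k$-th draw contributes $\xi_k=+1$ if it lies in $C_0$, $-1$ if it lies in $C_\ell$, and $0$ otherwise. Each $\xi_k\in[-1,1]$, and $\mathbb E\xi_k=(n_0-n_\ell)/N\ge\Delta_\Pi$. Since the draws are without replacement, I will invoke Hoeffding's (1963) result that sums of draws without replacement satisfy the same exponential bounds as i.i.d.\ sums, because $\mathbb E f(\sum\xi_k)$ is dominated by its with-replacement counterpart for convex $f$. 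Hoeffding's inequality for a range of length $2$ then gives
\[
P(D_\ell\le 0\mid R=r)\le\exp\{-2(r\Delta_\Pi)^2/(r\cdot 4)\}=e^{-r\Delta_\Pi^2/2}.
\]
This step is where the exponent $\Delta_\Pi^2/2$ in the statement comes from. I expect it to be the main technical point: one must make sure the without-replacement comparison applies to the signed indicator $\xi_k$. It does, since $\xi_k$ is a fixed function of the drawn element, so the values form a fixed finite population of $N$ numbers sampled without replacement.

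The final step is a union bound. The failure event is contained in $\{R=0\}\cup\bigcup_{\ell=1}^L\{R\ge1,\ R_0\le R_\ell\}$. On $\{R=r\}$ with $r\ge1$, the union over $\ell$ has conditional probability at most $L e^{-r\Delta_\Pi^2/2}$. Averaging over the hypergeometric law of $R$ and bounding by $L\,\mathbb E\{e^{-R\Delta_\Pi^2/2}\}$, which only adds the nonnegative $r=0$ term, gives failure probability at most $P(R=0)+L\,\mathbb E\{e^{-R\Delta_\Pi^2/2}\}$.

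When $L=0$, the empty maximum is zero, so $R_0=R\ge1>0$ holds on $\{R\ge1\}$. The union-bound term vanishes and the bound reduces to $1-P(R=0)$, consistent with the statement.
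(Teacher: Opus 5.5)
Your proposal is correct and follows the paper's proof essentially step for step. Like the paper, you condition on $R=r$, use that the retained splits form a uniform $r$-subset of $\widehat{\mathcal B}$, and apply Hoeffding's without-replacement inequality to the signed indicator $\mathbf 1\{S\in C_0\}-\mathbf 1\{S\in C_\ell\}$, whose mean is at least $\Delta_\Pi$, to get $e^{-r\Delta_\Pi^2/2}$; you then union-bound over the $L$ competing cells, average over $R$ counting $R=0$ as failure, and treat the $L=0$ case separately, exactly as the paper does.
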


Fix $\delta\in(0,1)$. We call $\Pi$ $(b,\varepsilon,\rho,\delta)$-recoverable if $b,\varepsilon\ge0$, $b+\varepsilon\le\rho$, $|\theta_S-\tau|\le b$ for every $S\in C_0$, within-cell distances are at most $2(\rho-\varepsilon)$, between-cell distances exceed $2(\rho+\varepsilon)$, and, conditional on $\mathcal G$, $|\widehat\theta_S-\theta_S|\le\varepsilon$ for every $S\in\widehat{\mathcal B}$ with probability at least $1-\delta$ over $\mathcal I_{\mathrm E}$.

The following theorem bounds the error of the output of Algorithm~\ref{alg:probe-search}. Its probability is over the evaluation sample and the random draw of splits, conditional on the learning samples.

\begin{theorem}[Finite-sample PROBE cluster recovery; general form of Thm.~\ref{thm:probe-search-error}]
\label{appdetail:thm:probe-search-error}
\normalfont
Conditional on $\mathcal G$, suppose $\Delta_\Pi>0$ and $\Pi$ is $(b,\varepsilon,\rho,\delta)$-recoverable. Then, over the independent split draw and $\mathcal I_{\mathrm E}$, the output $\widehat\tau$ of Algorithm~\ref{alg:probe-search} run with linking radius $\rho$ satisfies
\(P\bigl(|\widehat\tau-\tau|\le b+\varepsilon\bigr) \ \ge\ 1-\delta-P(R=0)-L\,\mathbb E\bigl\{e^{-R\Delta_\Pi^2/2}\bigr\},\)
where $R\sim\operatorname{Hypergeometric}(T,N,m)$ as in Eq.~\eqref{eq:retained-orientation-count}, the two sampling terms are given in Eq.~\eqref{eq:sampling-failure-terms}, and a declared failure counts as $|\widehat\tau-\tau|>b+\varepsilon$.

\end{theorem}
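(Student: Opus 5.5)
The proof intersects two good events and shows that on their intersection the algorithm outputs a value within $b+\varepsilon$ of $\tau$. Everything is conditional on $\mathcal G$, so $\widehat{\mathcal B}$, the partition $\Pi$, the values $\theta_S$, and the fitted nuisances are fixed. The split draw is independent of $\mathcal I_{\mathrm E}$.

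The first event is the sampling event $E_1$: $R\ge1$ and $R_0>\max_{\ell\ge1}R_\ell$. By Lemma~\ref{lem:random-subsampling-plurality}, $P(E_1^c)\le P(R=0)+L\,\mathbb E\{e^{-R\Delta_\Pi^2/2}\}$.

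The second is the estimation event $E_2$: $|\widehat\theta_S-\theta_S|\le\varepsilon$ for every $S\in\widehat{\mathcal B}$. By $(b,\varepsilon,\rho,\delta)$-recoverability, $P(E_2^c)\le\delta$. This event concerns all of $\widehat{\mathcal B}$, not only the sampled splits. Since each $\widehat\theta_S$ is computed by the fixed rule on $\mathcal I_{\mathrm E}$, the sampled estimates are restrictions of these counterfactual ones, and $E_2$ controls them regardless of the draw.

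A union bound then gives $P(E_1\cap E_2)\ge 1-\delta-P(R=0)-L\,\mathbb E\{e^{-R\Delta_\Pi^2/2}\}$.

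The deterministic step works on $E_1\cap E_2$ and shows that the connected components of the linking graph on $\mathcal R_m$ are exactly the nonempty sets $\mathcal R_m\cap C_\ell$.
\begin{itemize}
\item \emph{Within a cell.} For $S,S'$ in the same cell, $|\widehat\theta_S-\widehat\theta_{S'}|\le|\theta_S-\theta_{S'}|+2\varepsilon\le2(\rho-\varepsilon)+2\varepsilon=2\rho$. So every pair within a cell is linked directly, and each nonempty $\mathcal R_m\cap C_\ell$ is connected.
\item \emph{Across cells.} For $S,S'$ in different cells, $|\widehat\theta_S-\widehat\theta_{S'}|\ge|\theta_S-\theta_{S'}|-2\varepsilon>2(\rho+\varepsilon)-2\varepsilon=2\rho$. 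So no edge crosses cells.
\end{itemize}

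Hence the component sizes are the $R_\ell$, and by $E_1$ the unique largest component is $\mathcal R_m\cap C_0$, which is nonempty since $R_0>\max_{\ell\ge1} R_\ell\ge0$. The algorithm therefore does not declare failure and returns the median of $\{\widehat\theta_S:S\in\mathcal R_m\cap C_0\}$.

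For each such $S$, $|\widehat\theta_S-\tau|\le|\widehat\theta_S-\theta_S|+|\theta_S-\tau|\le\varepsilon+b$. All these values lie in the interval $[\tau-b-\varepsilon,\tau+b+\varepsilon]$, so their median does too, under any median convention including averaging the two middle values.

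The main obstacle is this graph-component step, in particular making sure that "no edge across cells" is strict enough to rule out chaining through intermediate nodes. Since every cross-cell pair is non-adjacent and every within-cell pair is adjacent, components cannot merge, so chaining is harmless. Another easily overlooked point is handling $L=0$, where $\Delta_\Pi=1$ and only $R\ge1$ is needed.
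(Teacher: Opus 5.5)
Your proof is correct and follows essentially the same route as the paper's: a union bound over the sampling event from Lemma~\ref{lem:random-subsampling-plurality} and the uniform evaluation event, then the within-cell and between-cell triangle-inequality argument showing that the linking components are exactly the nonempty sets $\mathcal R_m\cap C_\ell$, so the algorithm returns the median of the $C_0$ estimates, all of which lie within $b+\varepsilon$ of $\tau$. The only cosmetic difference is that you take the evaluation event and the within-cell bound on $C_0$ directly from recoverability. The paper's written proof, which targets the main-text version, instead derives the evaluation event from the AIPW bound at level $\delta/N$ with a union bound over $\widehat{\mathcal B}$, and the $C_0$ within-cell bound from $2b\le2(\rho-\varepsilon)$.
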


The subsampling lemma controls whether $C_0$ remains the unique largest retained cell. The within-cell and between-cell conditions then make the graph components in line~\ref{ln:link} of Algorithm~\ref{appdetail:alg:probe-search} equal the retained restrictions of the oracle cells. The target-band and evaluation-error conditions place every estimate in the selected target component within $b+\varepsilon$ of $\tau$. Eq.~\eqref{eq:conditional-aipw-error} at $\delta_{\mathrm E}=\delta/N$ and a union bound over $\widehat{\mathcal B}$ supply $\varepsilon=\max_{S\in\widehat{\mathcal B}}\varepsilon_{\mathrm{AIPW},S}$, and Eq.~\eqref{eq:uniform-approximate-proxy-bias-bound} supplies $b=\max_{S\in C_0}\Gamma_S(\widehat\phi_S)D_{\mathrm{res},S}(\widehat\phi_S)/\{\eta(1-\eta)\}$ when every $S\in C_0$ satisfies Assumptions~\ref{ass:consistency}, \ref{ass:joint-latent-exchangeability}, and~\ref{ass:uniform-stability}; this gives Thm.~\ref{thm:probe-search-error} of the main text, with $\Pi=\{C_0,\ldots,C_L\}$ and $\Delta=\Delta_\Pi$. The partition and its gap remain oracle quantities and are not certified by the empirical graph. When $b=0$ and $C_0=\{S:\theta_S=\tau\}$, the target-band premise uses the exact population equality notion of Section~\ref{sec:unknown-blocks}; the finite-sample theorem additionally involves empirical screening, random subsampling, and estimation error.

The evaluation failure $\delta$ is paid with the uniform radius $\varepsilon$, while the two sampling terms are paid with the budget $m$. The fraction $p\triangleq N/T$, the oracle margin $\Delta_\Pi$, and the number $L$ of competing cells are fixed after conditioning. The next corollary converts a sampling-failure tolerance $\alpha$ into a budget.

\begin{corollary}[Budget for a target failure level]
\label{cor:budget-rule}
\normalfont

Condition on $\mathcal I_{\mathrm D}$, $\mathcal I_{\mathrm N}$, and the learning seeds. Let $T\triangleq|\mathfrak S|$, let $\widehat{\mathcal B}$ be a fixed screened set of size $N\ge1$, and define $\theta_S\triangleq\tau_{\widehat\phi_S(V_S)}$. Before the split draw and $\mathcal I_{\mathrm E}$, fix a partition $\Pi=\{C_0,\ldots,C_L\}$ of $\widehat{\mathcal B}$ that is fixed given $\mathcal G$. Set $n_\ell\triangleq|C_\ell|$, $\Delta_\Pi\triangleq n_0/N-\max_{1\le\ell\le L}n_\ell/N>0$, and $p\triangleq N/T$, with an empty maximum equal to zero. Draw $m\in\{1,\ldots,T\}$ splits uniformly without replacement, let $R$ be the number drawn from $\widehat{\mathcal B}$, and fix $\alpha\in(0,1)$. Then
\begin{align}\label{eq:sampling-failure-closed-form}
	P(R=0)+L\,\mathbb E\bigl\{e^{-R\Delta_\Pi^2/2}\bigr\}
	\ \le\
	(L+1)\exp\bigl\{-m\,p\,(1-e^{-\Delta_\Pi^2/2})\bigr\}.
\end{align}
Suppose additionally that $b,\varepsilon\ge0$, $b+\varepsilon\le\rho$, $|\theta_S-\tau|\le b$ on $C_0$, within-cell distances are at most $2(\rho-\varepsilon)$, between-cell distances exceed $2(\rho+\varepsilon)$, and $|\widehat\theta_S-\theta_S|\le\varepsilon$ simultaneously with probability at least $1-\delta$ over $\mathcal I_{\mathrm E}$. With every largest-component tie counted as failure, Algorithm~\ref{alg:probe-search} then satisfies $P(|\widehat\tau-\tau|\le b+\varepsilon)\ge1-\delta-\alpha$ whenever
\(m\ \ge\ \frac{\log\{(L+1)/\alpha\}}{p\,(1-e^{-\Delta_\Pi^2/2})},\)
and the simpler budget $m\ge3\log\{(L+1)/\alpha\}/(p\Delta_\Pi^2)$ is sufficient. If $L=0$, the budget $m\ge\log(1/\alpha)/p$ suffices.

\end{corollary}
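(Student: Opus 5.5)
The plan is to bound each of the two sampling-failure terms of Thm.~\ref{appdetail:thm:probe-search-error} by the same exponential. Solving that exponential for $m$ then gives the budget, and Thm.~\ref{appdetail:thm:probe-search-error} is invoked as a black box for the accuracy statement. Throughout, condition on $\mathcal G$, so that $N$, $T$, $p=N/T$, $L$ and $\Delta_\Pi$ are constants and $R\sim\operatorname{Hypergeometric}(T,N,m)$ by Eq.~\eqref{eq:retained-orientation-count}.

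\textbf{Step 1 (plurality term).} For $s\ge0$ the map $r\mapsto e^{-sr}$ is convex. Hoeffding's (1963) comparison says that sampling without replacement is dominated in convex order by sampling with replacement. Hence
\[
\mathbb E e^{-sR}\le \mathbb E e^{-s\,\mathrm{Bin}(m,p)}=\{1-p(1-e^{-s})\}^m\le \exp\{-mp(1-e^{-s})\},
\]
using $1-x\le e^{-x}$. Taking $s=\Delta_\Pi^2/2$ bounds $L\,\mathbb E\{e^{-R\Delta_\Pi^2/2}\}$ by $L\exp\{-mp(1-e^{-\Delta_\Pi^2/2})\}$.

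\textbf{Step 2 (empty-draw term).} Starting from the closed form in Eq.~\eqref{eq:sampling-failure-terms},
\[
P(R=0)=\prod_{i=0}^{m-1}\frac{T-N-i}{T-i}\le(1-p)^m\le e^{-mp}.
\]
Since $1-e^{-\Delta_\Pi^2/2}\le1$, this is at most $\exp\{-mp(1-e^{-\Delta_\Pi^2/2})\}$. Adding the bounds from Steps 1 and 2 gives Eq.~\eqref{eq:sampling-failure-closed-form}.

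\textbf{Step 3 (budget).} The right side of Eq.~\eqref{eq:sampling-failure-closed-form} is at most $\alpha$ exactly when $m\ge\log\{(L+1)/\alpha\}/\{p(1-e^{-\Delta_\Pi^2/2})\}$.

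Under the stated recoverability hypotheses (the target band, within-cell and between-cell separation, and the simultaneous $\varepsilon$-accuracy event of probability $1-\delta$), $\Pi$ is $(b,\varepsilon,\rho,\delta)$-recoverable with $\Delta_\Pi>0$. Thm.~\ref{appdetail:thm:probe-search-error} then yields
\[
P(|\widehat\tau-\tau|\le b+\varepsilon)\ge 1-\delta-P(R=0)-L\,\mathbb E\{e^{-R\Delta_\Pi^2/2}\}\ge 1-\delta-\alpha,
\]
with ties counted as failures.

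\textbf{Step 4 (simpler budget).} Since $\Delta_\Pi\le1$, we have $x=\Delta_\Pi^2/2\le1/2$. Then
\[
1-e^{-x}\ge x-x^2/2\ge \tfrac34x=\tfrac38\Delta_\Pi^2.
\]
So it suffices that $m\ge 8\log\{(L+1)/\alpha\}/(3p\Delta_\Pi^2)$, and the stated $3\log\{(L+1)/\alpha\}/(p\Delta_\Pi^2)$ is larger.

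When $L=0$ the plurality term vanishes, and Step 2 alone gives $P(R=0)\le e^{-mp}\le\alpha$ once $m\ge\log(1/\alpha)/p$.

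The only nonelementary step is the convex-order comparison in Step 1. If one prefers to avoid citing it, the alternative is to write $\mathbb E e^{-sR}$ through the product form of the hypergeometric probability generating function and bound it factor by factor as in Step 2. Everything else is algebra together with the previously established Thm.~\ref{appdetail:thm:probe-search-error}.
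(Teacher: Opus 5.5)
Your proposal is correct and follows the paper's proof step for step. It uses the same product bound $P(R=0)\le(1-p)^m\le e^{-mp}$, the same appeal to Hoeffding's comparison with sampling with replacement for the convex map $r\mapsto e^{-r\Delta_\Pi^2/2}$, the same inversion for $m$, and Thm.~\ref{appdetail:thm:probe-search-error} as a black box. The only difference is the elementary inequality behind the simpler budget: your $1-e^{-x}\ge x-x^2/2\ge 3x/4$ gives the slightly sharper constant $8/3$, while the paper's $1-e^{-c}\ge c/(1+c)\ge\Delta_\Pi^2/3$ gives exactly $3$. Your closing aside about bounding the hypergeometric generating function factor by factor would need the nontrivial real-rootedness of that generating function, so the Hoeffding comparison you actually use is the cleaner tool.
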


The budget scales as $\log\{(L+1)/\alpha\}/(p\Delta_\Pi^2)$. The ratio $R/m$ is unbiased for $p$, but $\Delta_\Pi$ is defined by the oracle partition and is not identified by the realized graph without additional assumptions.

\begin{corollary}[Median under majority]
\label{cor:median-under-majority}
\normalfont

Condition on the learning samples and seeds. Let $\widehat{\mathcal B}$ be fixed, define $\theta_S\triangleq\tau_{\widehat\phi_S(V_S)}$, and before the split draw and $\mathcal I_{\mathrm E}$ fix a partition of $\widehat{\mathcal B}$ that is fixed given $\mathcal G$ and contains a target cell $C_0$. Let $\mathcal R_m\subseteq\widehat{\mathcal B}$ be the retained splits with $R\triangleq|\mathcal R_m|$. Fix $b,\varepsilon\ge0$. Suppose $|\theta_S-\tau|\le b$ for every $S\in C_0$, and let $E$ be an event on which $|\widehat\theta_S-\theta_S|\le\varepsilon$ for every $S\in\mathcal R_m$. If $|\mathcal R_m\cap C_0|>R/2$, then on $E$ the median of $\{\widehat\theta_S:S\in\mathcal R_m\}$ lies within $b+\varepsilon$ of $\tau$. The linking step and cell-separation conditions are not needed.

\end{corollary}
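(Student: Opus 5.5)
The statement is deterministic once the event $E$ holds, so the argument is an order-statistics counting step. No probability bounds or cluster separation are needed.

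\textbf{Setup.} Work on $E$ and write $G\triangleq\mathcal R_m\cap C_0$ for the retained target splits, so that $|G|>R/2$. For every $S\in G$, the triangle inequality with $|\widehat\theta_S-\theta_S|\le\varepsilon$ (from $E$) and $|\theta_S-\tau|\le b$ (from $S\in C_0$) gives
\[
|\widehat\theta_S-\tau|\le b+\varepsilon .
\]
So strictly more than half of the values in the multiset $M\triangleq\{\widehat\theta_S:S\in\mathcal R_m\}$ lie in the interval $I\triangleq[\tau-b-\varepsilon,\tau+b+\varepsilon]$. In particular $R\ge1$, since $|G|>R/2\ge0$ forces $G\ne\varnothing$.

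\textbf{Median lies in $I$.} Sort $M$ as $x_{(1)}\le\dots\le x_{(R)}$. Fix the median convention: the middle order statistic when $R$ is odd, and the average of $x_{(R/2)}$ and $x_{(R/2+1)}$ when $R$ is even. Because $I$ is convex, it suffices to show that every order statistic used in the median lies in $I$.

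\emph{Lower side.} Suppose $x_{(k)}<\tau-b-\varepsilon$. Then at least $k$ elements lie outside $I$, so $k\le R-|G|<R/2$.

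\emph{Upper side.} Symmetrically, if $x_{(k)}>\tau+b+\varepsilon$, then at least $R-k+1$ elements lie outside $I$, so $R-k+1<R/2$, that is, $k>R/2+1$.

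\emph{Middle indices.} The indices used by the median are $k=(R+1)/2$ when $R$ is odd, and $k\in\{R/2,\,R/2+1\}$ when $R$ is even. Each of these satisfies $R/2\le k\le R/2+1$. This is incompatible with both $k<R/2$ and $k>R/2+1$. Hence every median index points into $I$, and so does their average.

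Therefore the median of $M$ lies within $b+\varepsilon$ of $\tau$.

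The step needing the most care is the index bookkeeping for even $R$ and making the median convention explicit. The bound $|G|\ge\lfloor R/2\rfloor+1$ covers both parities. No linking step or separation condition enters, which confirms the final remark of the statement.
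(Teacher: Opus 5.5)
Your proof is correct and follows the paper's argument. The triangle inequality places every retained target-cell estimate in $[\tau-(b+\varepsilon),\tau+(b+\varepsilon)]$, and the strict majority then forces the middle order statistic (or both middle ones), and hence the median, into that interval; your odd/even index bookkeeping just spells out the step the paper states in one line.
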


When the target cell forms a strict retained majority, Cor.~\ref{cor:median-under-majority} dispenses with the linking step. With $b=0$ and $C_0=\{S:\theta_S=\tau\}$, its target-band premise reduces to the exact population target class used in Cor.~\ref{cor:target-majority-median}.

\section{Experiment Details and Evidence Boundaries}
\label{sec:appendix-experiments}

This appendix records the technical details behind Section~\ref{sec:main-experiments}. The tables in this appendix report the original runs of each experiment. Fig.~\ref{fig:experiments} and its detailed versions, Figs.~\ref{fig:appendix-synthetic} and~\ref{fig:appendix-realdata}, also include data sets added later with seeds fixed in advance, for $100$ data sets per synthetic setting, $100$ Twins replicates, and $88$ IHDP replicates in total, so their MAE values differ from the tables.

\subsection{Shared structural family}

All four confirmations share one structure. Let $U$ be the hidden confounder, $I$ a treatment-only cause, $C$ an observed confounder, and let $\eta$, $\epsilon_0,\ldots,\epsilon_4$ and $\varepsilon_Y$ be independent standard normal noises. The informative coordinates, which no learner sees by name, are
\(X_1=U+2\eta,\qquad M_j=U+0.85\,\epsilon_j\ (j=0,\ldots,3),\qquad M_4=U-3.0\,I+0.85\,\epsilon_4.\)
Treatment and outcome follow
\(P(A=1\mid U,I,C,X_1)=0.1+0.8\,\Phi(1.5U+2.0I+0.2X_1+0.3C),\qquad Y(a)=a-2.5U+0.2X_1-0.5C+0.3\varepsilon_Y,\)
so the average treatment effect is $\tau=1$. The observed covariate $X$ carries $X_1$ and $C$. Blocks $W_1,W_2,W_3$ carry $M_1,M_2,M_3$, block $W_4$ carries the contaminated $M_4$, and block $W_5$ carries $M_0$ together with $I$. Each observed vector is a fixed random rotation of its informative coordinates stacked on pure noise coordinates with standard deviation $0.5$, so no observed coordinate is a named quantity. Placing $C$ inside $X$, which always enters the representation, prevents a representation from dropping an outcome-relevant coordinate in order to balance.

The three numeric settings differ only in how many noise coordinates hide the informative ones. SCM-1 gives $X$ two coordinates and the five blocks $(1,1,1,1,2)$ coordinates, for $8$ in total. SCM-2 gives $X$ eight and each block sixteen, for $88$. SCM-3 gives $X$ thirty-two and each block $256$, for $1{,}312$. Because the coefficients are shared, the population values and the census of balanceable splits are the same objects across the three.

In the population of this family, seven of the thirty candidate splits admit exact balance and target $\tau$, namely those built from the clean blocks $W_1,W_2,W_3$; the split that holds out $W_4$ also admits balance and targets a different value; the seven splits that mix $W_4$ with a clean block admit no balance; every split containing $W_5$ admits no balance at all. This is the census the screen is measured against when we report true and false positive rates.

\subsection{Image rendering}

SCM-4 keeps the structure above and changes only what the learner receives in place of the six observed proxy coordinates. Each coordinate is recorded to $15$ equally spaced levels on $[-3\sigma,3\sigma]$, where $\sigma$ is its exact population standard deviation, and is then shown as one original $64\times64$ RGB Shapes3D image whose orientation factor equals that level. Every other generative factor of the image is drawn uniformly and independently, so the image carries the recorded level and nothing else about the unit. Blocks $W_1$ to $W_4$ contribute one image each and $W_5$ contributes two, giving six images per unit.

A frozen reader maps one image back to one number. It is a convolutional network trained with orientation labels on an image bank disjoint from every image used in the causal data, and it is never refitted during the experiment. Across the $14{,}400{,}000$ reads of all $100$ data sets it produced no level error, so SCM-4 measures the pipeline under an image proxy and not the discovery of a representation from raw pixels. The pixel CNN comparator in Section~\ref{sec:main-experiments} is a convolutional network fitted on all six images without holding out a block and without the balance screen.

Two of the twenty SCM-4 data sets reached the wall-clock cap for that comparator while three tasks shared one GPU, and the sealed summary therefore recorded FAIL on the completeness gate and on the comparison that depends on it. The two comparator runs were repeated alone under the sealed configuration and the sealed seeds, and nothing else was rerun or changed. The sealed summary and the amended summary are both retained; the amended one passes all $13$ gates.

\subsection{Search family, screen, and aggregation}

The search ranges over the thirty nonempty proper subsets of the five blocks. For a split $S$ the representation is $Z_S=(X,Bv_{-S})$, where $v_{-S}$ stacks the preprocessed kept blocks and $B$ has rank $k=2$. Preprocessing standardises $X$ and reduces any block with more than two coordinates to its principal directions above the Marchenko--Pastur edge, at most four, fitted on the representation fold alone. The family is therefore linear in the observed coordinates and role-blind: it receives no label identifying a block, but it is not an unrestricted neural encoder.

Each data set is split into four folds of nearly equal size, $6{,}000$ rows each here, rotated so that every fold serves once as representation, screen, nuisance, and evaluation sample. The critics are bounded probit models with link $0.1+0.8\Phi$ and ridge $10^{-4}$; the augmented critic nests the base one, and the lifted base is a candidate whenever the augmented fit cannot improve on it. The screen statistic is the held-out Brier risk difference on the screen fold, refitted there rather than reused from the search and cross-fitted by row parity. A split is retained when $\max(\mathrm{gap},0)+z\,\mathrm{SE}$ falls below the threshold and every base-critic prediction on that fold lies in $[0.05,0.95]$, with $z=\Phi^{-1}(1-0.05/30)\approx 2.935$ for the thirty simultaneous decisions. The thresholds are $1.5$, $1.8$, $1.6$ and $1.7$ times $10^{-3}$ for SCM-1 to SCM-4, each fixed on development data before the confirmation seeds were generated. Retained splits are grouped by single linkage at twice a common radius, and the reported estimate is the median of the largest agreeing group, with no estimate returned when two groups tie for largest. Relative to Algorithm~\ref{alg:probe-search}, this implementation searches all $30$ splits ($m=T$), adds a separate screen fold to the three samples, retains a split only if it passes the screen in all four rotations, averages its estimate over the rotations, and sets the linking radius $\rho$ from the data by a Gaussian multiplier bootstrap. Appendix~\ref{sec:appendix-implementation} states every one of these choices in full.

\subsection{Honest learning and evaluation}

The four folds of a data set carry four separate jobs, and every rotation assigns each fold to each job once. The representation and its search critics use the representation fold. The screen refits fresh critics on the screen fold, so no statistic that decides retention was fitted on the rows that produced the representation. Propensity and outcome nuisance functions use the nuisance fold, and the AIPW score uses the evaluation fold. Preprocessing parameters are estimated only from the fold that is training at that step, and the four rotations are averaged.

For a fixed representation, the base critic predicts $A$ from $Z$, and the augmented critic predicts $A$ from $(X,W_S,Z)$. The empirical screen uses the clipped difference between their paired Brier risks. The runs additionally record critic reliability diagnostics, while the overlap check is part of the retention rule. These diagnostics are empirical guards. They are not upper confidence bounds on the population discrepancy.

The reported paired comparison uses one value per independently generated data set. For method $M$ and baseline $B$, define
\(d_k=|\widehat\tau_{M,k}-1|-|\widehat\tau_{B,k}-1|.\)
The prespecified one-sided upper bound is
\(\overline d+t_{0.95,K-1}\frac{\operatorname{sd}(d_1,\ldots,d_K)}{\sqrt K}.\)
The data set, not a proxy split, row, or training restart, is the replication unit. A no-return data set is included in the return-rate denominator and is not filled with zero error.

{%
\subsection{Modeling choices and implementation}
\label{sec:appendix-implementation}

\paragraph{Why this representation family.} Two requirements shaped the choice. The screen must certify balance from a finite sample, which asks for an encoder class whose uniform deviation term in Eq.~\eqref{eq:representation-oracle-discrepancy} is small, and the search must stay role-blind, so the class may not use any label that identifies a block. A rank-two linear map on preprocessed kept blocks meets both, and every fit it produces is deterministic given its seed, which the sealing protocol requires. The price is that this family cannot represent a nonlinear measurement channel. SCM-4 pays that price explicitly: a frozen network reads each image into a number before PROBE sees it, so the linear family acts on recovered numbers rather than on pixels.

\paragraph{Preprocessing.} All preprocessing is fitted on the representation fold alone. The covariates $X$ are standardised. A block with at most two coordinates passes through unchanged. A block with more coordinates is reduced by a singular value decomposition: with eigenvalues $s^2/n$ of the centred block, the noise level is the median eigenvalue $\sigma^2$, the Marchenko--Pastur edge is $\sigma^2(1+\sqrt{d/n})^2$ for $d$ coordinates, and the retained directions are those whose eigenvalue exceeds $1.1$ times that edge, at least one and at most four. The retained scores are standardised again.

\paragraph{Representation.} For a split $S$, let $v_{-S}$ stack the preprocessed kept blocks, of width $q$, and let $B\in\mathbb R^{k\times q}$ with $k=\min(2,q)$, so that $Z_S=(X,v_{-S}B^{\top})$. The rows of $B$ are normalised inside the objective, which makes the parameterisation scale invariant rather than constrained. The objective is the ridge-penalised Brier gap $G(B)=\widehat R_{0}-\widehat R_{1}$ of the two nested critics on the representation fold, and its gradient follows from the envelope theorem, projected onto the unit-row tangent space. The optimiser is L-BFGS-B with at most $60$ outer iterations, tolerances $10^{-14}$ and $10^{-12}$, and two starts: the top-$k$ right singular vectors of the centred kept features, and one Gaussian random matrix. The start with the lower $G$ is kept, and the critic fits are warm-started across outer iterations. No penalty is placed on $B$.

\paragraph{Critics.} Both critics are bounded probit models, $q=0.1+0.8\,\Phi(\eta)$, fitted by L-BFGS-B with at most $500$ iterations on the ridge-penalised Brier risk with penalty $10^{-4}$ on every coefficient except the intercept. The bounded link is what makes Assumption~\ref{ass:library-overlap} hold by construction for the fitted propensity and keeps the critic class $[0,1]$-valued, as Definition~\ref{def:empirical-held-out-proxy-discrepancy} requires. The base critic reads $(1,X,Z_S)$ and the augmented critic reads those inputs together with $T_S$. The base critic lifted into the augmented space, by padding its coefficients with zeros, is always a candidate for the augmented fit, so the empirical gap is nonnegative by construction.

\paragraph{Screen.} On the screen fold the critics are refitted from scratch and cross-fitted by row parity: even rows fit and odd rows score, then the roles are reversed. The statistic is the pooled out-of-fold paired Brier difference, with $\mathrm{gap}$ its mean and $\mathrm{SE}$ its standard deviation divided by $\sqrt{n_{\mathrm S}}$. A split is retained when $\max(\mathrm{gap},0)+z\,\mathrm{SE}$ falls below the threshold and every base-critic prediction on that fold lies in $[0.05,0.95]$, and it is retained overall only if it passes in all four rotations. The overlap condition is part of the rule, not a diagnostic.

\paragraph{Aggregation.} The common radius $\rho$ comes from a Gaussian multiplier bootstrap rather than from resampling. The evaluation-fold scores of the thirty splits are stacked into one matrix, $\Sigma$ is their centred cross-product divided by $n^2$, and $\rho$ is the $95$th percentile of $\max_j|\xi_j|$ over $5{,}000$ draws $\xi\sim N(0,\Sigma)$. Two splits are linked when their estimates differ by at most $2\rho$, groups are the connected components under single linkage, and the reported estimate is the median of the largest group. When two groups tie for largest, the run returns no estimate, and that data set enters the return-rate denominator.

\paragraph{Nuisance functions and the AIPW score.} The propensity is the same bounded probit at ridge $10^{-4}$, fitted on the nuisance fold and clipped to $[0.05,0.95]$. Each arm's outcome regression is a ridge least squares fit on the nuisance fold with an unpenalised intercept and penalty $10^{-4}n$. The score is evaluated on the evaluation fold, and a split's estimate is the plain mean of its four rotations.

\paragraph{Folds by setting.} Every setting splits its rows into four folds of nearly equal size and rotates them, so the $6{,}000$-row folds are specific to the synthetic settings.

\begin{center}
\footnotesize
\begin{tabular}{lrrll}
\toprule
Setting & $n$ & Data sets & Fold sizes & Screen rule \\
\midrule
SCM-1 to SCM-4 & 24,000 & 20 sealed, 100 total & $6{,}000$ each & fixed threshold \\
Twins & 11,984 & 10 sealed, 100 total & $2{,}996$ each & noise floor \\
IHDP & 747 & 10 sealed, 88 total & $187,187,187,186$ & noise floor \\
ACIC & 4,802 & 9, exploratory & $1{,}201$ or $1{,}200$ & noise floor \\
RHC & 5,735 & 1, effect unknown & $1{,}434$ or $1{,}433$ & noise floor \\
Kernel benchmark & 3,000 & 20 sealed, 100 total & $750$ each & pooled gap \\
\bottomrule
\end{tabular}
\end{center}

The synthetic settings require a pass in all four rotations at the fixed threshold. The kernel benchmark disables the fixed threshold and retains a split when its pooled gap is at most $z\,\mathrm{SE}$ with $z=\Phi^{-1}(1-0.05/2)=1.96$, since only two splits are available there. The noise floor pools the four rotations, taking $\mathrm{gap}=\max(\mathrm{mean},0)$ and $\mathrm{SE}=\sqrt{\sum\mathrm{SE}^2}/4$, and retains a split when $\mathrm{gap}+z\,\mathrm{SE}$ is at most the median of $z\,\mathrm{SE}$ over the thirty splits of the same data set. The realised floors are $0.002661$ for Twins, $0.021346$ for IHDP, $0.004694$ for ACIC, and $0.006335$ for RHC. Overlap is not part of this rule.

\paragraph{The image reader and the pixel comparator.} The image bank is split once, under a fixed seed, into $200{,}000$ reader-training images, $40{,}000$ selection images, and $240{,}000$ images reserved for the causal data, and the three parts are disjoint. The reader is four convolutions, $3\to32\to64\to128\to128$ channels, each $3\times3$ with stride $2$ and padding $1$ and a ReLU, followed by a flattening and two linear layers of widths $256$ and $1$. It is trained as a regression on the scaled level $(\ell-7)/7$ under squared loss for six epochs with batch size $256$ and Adam at learning rate $10^{-3}$, the epoch with the lowest validation error is kept, and the weights are then frozen, hashed, and reloaded in evaluation mode for every run. The pixel comparator starts from the reader: it loads the reader's convolutional tower and its $256$-dimensional feature layer, adds a linear map to sixteen outputs whose first row is copied from the reader's output layer, and shares that tower across the six images. The sixteen outputs per image and the standardised covariates enter two hidden layers of widths $128$ and $64$ with ReLU activations and three heads for the propensity and the two outcome regressions. Training minimises the Brier loss for treatment plus the squared loss for the standardised outcome, with Adam at $10^{-4}$ for the tower and $10^{-3}$ elsewhere, five epochs, batch size $128$, a validation share of $0.2$, and patience two. The comparator uses PROBE's folds and rotations and the same clipping, and it carries a wall-clock cap of $2{,}400$ seconds for its four rotations.

\paragraph{Which comparators share the folds.} Adjustment for $X$, adjustment for $(X,W)$, the unchecked summary, the oracle, and the pixel comparator all use PROBE's folds and rotations, with the bounded probit propensity and arm-wise ridge outcome models whose ridge levels are chosen by three-fold cross-validation on the nuisance fold. The oracle adjusts for the latent coordinates rather than for the rotated observed covariates, so it is stronger than an estimator that sees $X$ and $U$ in observed form. CEVAE, proximal 2SLS, single-proxy control, and KSPC are fitted on the whole data set without PROBE's folds and without cross-fitting, as in their source implementations.

\paragraph{Seeds and sealing.} A sealed run writes a protocol file once, recording the sample size, the full design, and a hash of every source file it will execute; a later run aborts if any hash or any design field differs. Shards are written atomically and never over an existing file, and summaries are also write-once. Fold seeds, representation start seeds, bootstrap seeds, and comparator seeds are fixed offsets of the task seed, so a rerun of a task reproduces its fits exactly on the same hardware.
}

\subsection{Additional results for the synthetic analysis}
\label{sec:appendix-synthetic-results}

 This subsection details the original sealed confirmation runs, summarized in Table~\ref{tab:verified-results}. Fig.~\ref{fig:synthetic-detail} shows every method on its own row, and Fig.~\ref{fig:appendix-kspc} shows the kernel single-proxy benchmark.

Methods that need proxy roles receive the roles of the data-generating process in their correct-role runs. Proximal 2SLS uses $W_4$, which the treatment-only cause $I$ affects, as the treatment-side proxy, a role $W_4$ can fill although it is not a valid held-out block, and $W_2$ as the outcome-side proxy, and single-proxy control and KSPC use the clean block $W_1$. The wrong-role runs swap the two 2SLS proxies and give single-proxy control and KSPC the contaminated block $W_4$. KSPC is our implementation of the estimator of \citet{xu2025kernel} with observed covariates, using its SKPV estimator only, on a random subsample of $3{,}000$ rows when a data set is larger.

\paragraph{Comparators.}Adjustment for $X$, adjustment for $(X,W)$, and the oracle, which also adjusts for $U$, are AIPW estimators on the same folds and rotations as PROBE. Proximal 2SLS is linear. It regresses the outcome-side proxy on $A$, the treatment-side proxy, and $X$, and then regresses $Y$ on $A$, the fitted proxy, and $X$; a block with more than four coordinates enters through its first four principal components. Single-proxy control fits the linear outcome bridge of \citet{park2024single} in $X$ and the first principal component of its block on the untreated units, which gives the effect on the treated. This effect equals $\tau$ in SCM-1 to SCM-4 because the effect is constant. As an ATE estimator, single-proxy control also fits the bridge on the treated units, which gives the effect on the untreated, and averages the two effects with weights equal to the shares of treated and untreated units. CEVAE is reimplemented in PyTorch with the architecture and default settings of the Pyro reference implementation: a $20$-dimensional latent, three hidden layers of $200$ units, $30$ epochs, batch size $100$, and learning rate $10^{-3}$, fitted on the standardised $(X,W)$. In SCM-4 the comparators that use the proxy receive the six numbers of the image reader, except the pixel CNN, which starts from the reader's weights and fine-tunes them on the images.

The kernel single-proxy benchmark uses the generator of \citet{xu2025kernel} with a hidden $U\sim\mathrm{Unif}(-1,1)$, a binary treatment, $P(A=1\mid U)=0.1+0.8\{1+\mathrm{erf}(U)\}/2$, and the deterministic outcome $Y=\sin(\pi U/2)+A-0.3$, so $\tau=1$. PROBE receives the two proxies of that generator as two blocks, $W_1=e^U+\epsilon$ with $\epsilon\sim N(0,0.1^2)$ and $W_2$, an MNIST image of the digit $\lfloor 5U+5\rfloor$; KSPC receives one of them at a time. Here KSPC is the authors' code without covariates, and Fig.~\ref{fig:appendix-kspc} shows both of its estimators, SKPV and SPMMR. With two splits, PROBE retains a split when its mean gap over the four rotations is at most $z$ times its standard error, with $z=\Phi^{-1}(1-0.05/2)$. The benchmark has $20$ replicates of size $n=3{,}000$.

\begin{figure}[t]
\centering
\includegraphics[width=\linewidth]{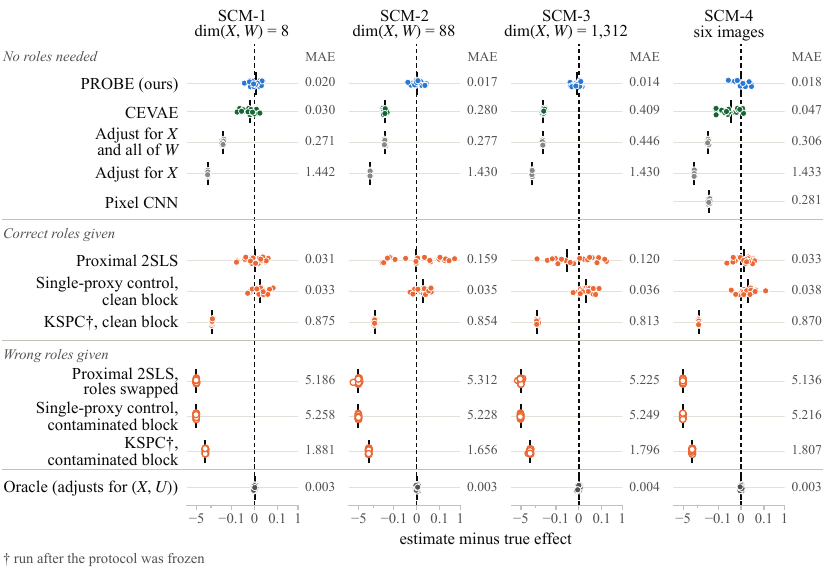}
\caption[Detailed errors in the synthetic settings.]{Detailed version of the left of Fig.~\ref{fig:experiments} for the original $20$ data sets per setting, with every method on its own row. KSPC ($\dagger$) was run after the protocol was frozen.}
\label{fig:synthetic-detail}
\end{figure}

\begin{figure}[t]
\centering
\includegraphics[width=0.55\linewidth]{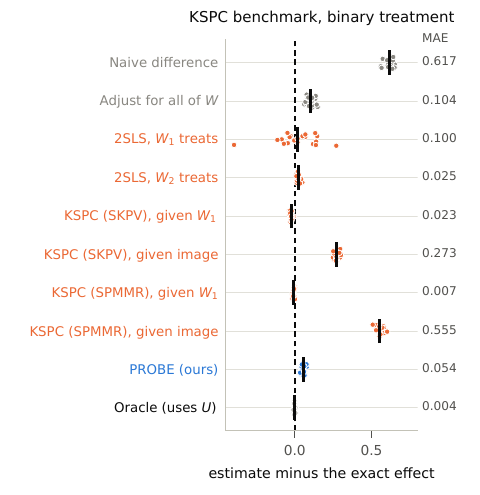}
\caption[Kernel single-proxy benchmark.]{Kernel single-proxy benchmark of \citet{xu2025kernel} with a binary treatment; the effect is exactly $1$. Estimates minus the effect over $20$ replicates, with the MAE on the right.}
\label{fig:appendix-kspc}
\end{figure}

Fig.~\ref{fig:synthetic-detail} gives the detailed comparison. PROBE answers every one of the $80$ original data sets, with mean absolute error between $0.014$ and $0.020$ and 90th percentile error between $0.033$ and $0.046$. Its prespecified one-sided paired bound on the excess over the oracle ranges from $0.015$ to $0.023$. The two ways of using the proxy without an audit both fail: adjustment for $X$ and the entire raw proxy has mean estimate between $0.55$ and $0.73$ because it absorbs the treatment-only coordinate, and in SCM-4 a convolutional network that reads every pixel and never tests balance has mean estimate $0.72$. The paired bound against that pixel network is $-0.25$.

The prespecified-block contrast isolates identification from search, as Thm.~\ref{thm:exact-proxy-identification} requires. Holding out the clean block $W_1$ and balancing it gives $0.998$, $0.987$, $0.980$ and $1.000$ in the four settings. Holding out the contaminated block $W_4$ gives $0.660$, $0.656$, $0.616$ and $0.667$, and the balance screen retains that split in $79$ of the $80$ original data sets and $396$ of all $400$, because $W_4$ really can be balanced. Holding out $W_5$ gives about $0.29$ and is rejected in all $80$ original data sets and all $400$, since no representation balances a block that carries the treatment-only cause. Balance is observable, and on its own it is not enough.

Aggregation repairs the one failure that balance leaves. Thm.~\ref{thm:unknown-valid-block-identification} and Cor.~\ref{cor:target-majority-median} predict that the valid splits outvote $W_4$ when they form a majority, and they do: averaging over the retained splits is dragged to $0.963$, $0.956$, $0.945$ and $0.959$, the plain median of the retained splits already recovers most of that at $1.002$, $0.998$, $0.991$ and $0.997$, and the median of the largest agreeing component that PROBE returns reaches $1.005$, $1.003$, $0.993$ and $1.001$. The screen has true positive rate between $0.986$ and $1.000$ and false positive rate between $0.041$ and $0.046$, every returned component is pure, and every data set returns a unique largest component.

The two bounds of Thm.~\ref{cor:representation-causal} and Thm.~\ref{thm:conditional-aipw-error} are visible separately. For a single split in SCM-1 to SCM-3, the population effect of the learned representation sits a median of $0.042$ to $0.046$ from $\tau$, with 90th percentile $0.103$ to $0.110$, and the realised error of that one split has median $0.051$ to $0.059$. Aggregating the retained splits cuts this to the $0.014$ to $0.020$ reported above, so the combination step, not a single well-behaved representation, carries the accuracy. The same separation shows up in the intervals. A nominal $95\%$ interval around one split, built from the estimate's own standard error, covers $\tau_{\widehat Z}$ at $0.950$, $0.957$ and $0.936$ in SCM-1 to SCM-3, which is the nominal rate, and covers $\tau$ at $0.836$, $0.857$, $0.879$ and $0.807$. The shortfall is the representation gap, which no interval built from sampling variation can contain; Fig.~\ref{fig:appendix-coverage} shows it per data set.

Two boundaries limit what these results show. The image setting hands PROBE numbers produced by a reader that was trained with orientation labels on an auxiliary image bank, disjoint from the causal data; that reader made no level error in $14{,}400{,}000$ reads of all $100$ data sets, so SCM-4 tests the pipeline under an image proxy rather than the discovery of a representation from raw pixels. The second boundary is the one the $W_4$ contrast exposes: the screen certifies balance and cannot certify the outcome-relevant condition that Assumption~\ref{ass:outcome-relevant-completeness} adds.

\begin{table}[t]
\caption{Sealed confirmations. Each row is $20$ independently generated data sets at $n=24{,}000$.}
\label{tab:verified-results}
\centering
\footnotesize
\begin{tabular}{llrrrl}
\toprule
Setting & \shortstack[l]{\\What the learner sees} & Mean error & p90 & Oracle gap & Status \\
\midrule
SCM-1 & $8$ coordinates & 0.020 & 0.033 & 0.022 & 12/12 gates \\
SCM-2 & $88$ coordinates & 0.017 & 0.037 & 0.019 & 12/12 gates \\
SCM-3 & $1{,}312$ coordinates & 0.014 & 0.035 & 0.015 & 12/12 gates \\
SCM-4 & six $64\times64$ images & 0.018 & 0.046 & 0.023 & 13/13 gates, amended \\
\bottomrule
\end{tabular}
\end{table}

The SCM-4 row carries a correction that we report rather than hide. Two of its $20$ data sets reached the wall-clock cap while three tasks shared one GPU, so the pixel-network comparator recorded no value there and the sealed summary returned FAIL on the completeness gate and on the comparison that depends on it. Those two comparator runs were repeated alone under the sealed configuration and the sealed seeds, every other number was left untouched, and the amended summary passes all $13$ gates.

\begin{figure}[t]
\centering
\includegraphics[width=\linewidth]{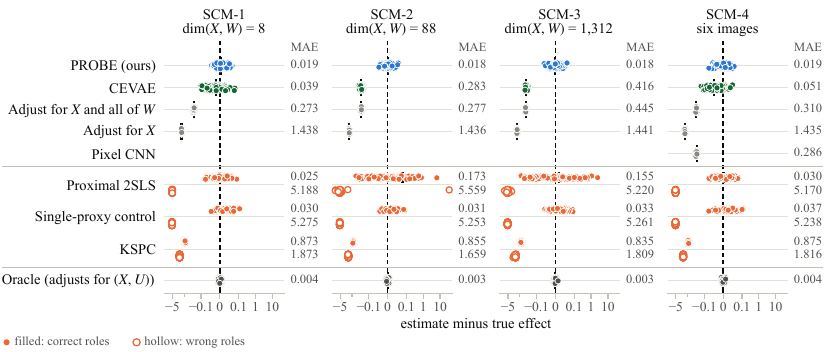}
\caption[Detailed synthetic results.]{Detailed version of the left of Fig.~\ref{fig:experiments} over all $100$ data sets per setting, with the MAE of each method on the right; for methods that need proxy roles, the upper number is for the correct roles.}
\label{fig:appendix-synthetic}
\end{figure}

\begin{figure}[t]
\centering
\includegraphics[width=\linewidth]{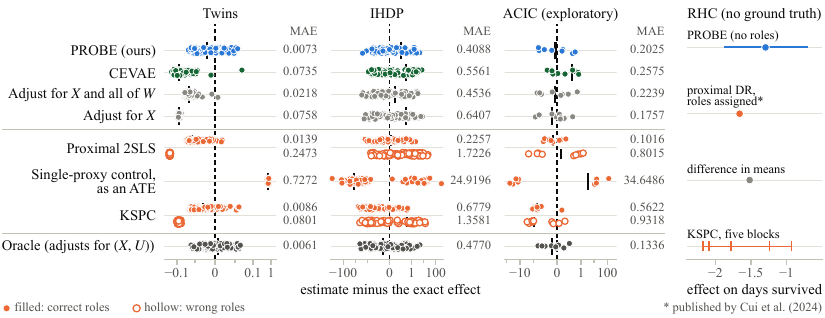}
\caption[Detailed semi-synthetic and RHC results.]{Detailed version of the right of Fig.~\ref{fig:experiments} over $100$ Twins, $88$ IHDP, and $9$ ACIC data sets, with the MAE of each method, and the RHC estimates, where no true effect is known; PROBE's estimate carries its resampled $95\%$ interval, and * marks estimates published by \citet{cui2024semiparametric}.}
\label{fig:appendix-realdata}
\end{figure}

\subsection{Semi-synthetic benchmarks}
\label{sec:appendix-benchmarks}

Twins \citep{louizos2017causal}, IHDP \citep{hill2011bayesian}, and ACIC 2016 \citep{dorie2019automated} keep their covariates and both potential outcomes, so the effect is known exactly. In each, one covariate is hidden as $U$ (gestational age in Twins, the sixth of the $25$ IHDP covariates, and $x_{44}$ in ACIC), and the five proxy blocks and the treatment are generated from it by one fixed recipe. The blocks follow the proxy mechanism of SCM-1 with a generated standard normal treatment-only cause $I$, and the treatment follows $P(A=1\mid U,I,X)=0.1+0.8\,\Phi(1.5U+2.0I+w^\top\tilde X)$, where $U$ is standardised, $\tilde X$ is the standardised $X$, and $w\sim N(0,0.1^2I)$ is drawn once. Twins has $n=11{,}984$ and $42$ covariates, IHDP $n=747$ and $24$, and ACIC $n=4{,}802$ and $78$. The original runs use Twins replicates $11$ to $20$, IHDP replicates $12$ to $21$, and ACIC realizations $2$ to $10$. The comparators take the proxy roles of Appendix~\ref{sec:appendix-synthetic-results}; single-proxy control is run once, as an ATE estimator with the clean block $W_1$.

The sealed thresholds of SCM-1 to SCM-4 do not transfer to these data sets, because the noise level of the screen statistic changes with the numbers of rows and covariates. The screen therefore uses a noise-scaled threshold: instead of requiring a pass in all four rotations, it pools them, taking their mean gap and its standard error, and it retains a split when $\max(\mathrm{gap},0)+z\,\mathrm{SE}$ is at most the median of $z\,\mathrm{SE}$ over the thirty splits of the same data set. This rule was adopted after the pre-registered RHC analysis and the RHC-planted, ACIC, and earlier IHDP analyses with the sealed threshold returned no estimate. It was frozen before Twins replicates $11$ to $20$ and IHDP replicates $12$ to $21$ were generated, so these two studies are prospective tests; it was chosen after seeing ACIC realizations $2$ to $10$, so the ACIC results are exploratory.

The RHC study \citep{connors1996effectiveness} has $5{,}735$ patients, days survived within $30$ as the outcome, and ten physiological measurements grouped into five proxy blocks, as analyzed by \citet{cui2024semiparametric}; no true effect is known. The blocks are, in order, the PaO$_2$/FiO$_2$ ratio and PaCO$_2$, pH and hematocrit, white cell count and albumin, bilirubin and creatinine, and sodium and potassium, and the comparators are the difference in means, the proximal estimate that \citet{cui2024semiparametric} obtain by assigning roles to $W_1$ and $W_2$, and KSPC given each block in turn. The interval around PROBE's RHC estimate in Fig.~\ref{fig:appendix-realdata} gives the sampling spread of the median of the largest agreeing group over $20{,}000$ draws, holding the retained splits and the linking radius at their observed values, so it omits the uncertainty of the screen.

Table~\ref{tab:benchmarks} gives the MAE on the original replicates. On IHDP, ten replicates do not resolve the gap between PROBE and CEVAE: the error ratio is $1.67$, with a $90\%$ paired-bootstrap interval from $0.78$ to $2.78$.

\begin{table}[t]
\caption{Mean absolute error on the original replicates in three semi-synthetic studies with known effects; ACIC is exploratory. Proximal 2SLS, single-proxy control, and KSPC require proxy roles; KSPC ($\dagger$) was run after the protocol was frozen. Bold marks the smallest error among the methods other than the oracle.}
\label{tab:benchmarks}
\centering
\footnotesize
\begin{tabular}{lrrr}
\toprule
Method & Twins & IHDP & ACIC \\
\midrule
PROBE (ours) & \textbf{0.0077} & 0.3054 & 0.2025 \\
CEVAE & 0.0894 & \textbf{0.1828} & 0.2575 \\
Adjust for $X$ and all of $W$ & 0.0207 & 0.3927 & 0.2239 \\
Adjust for $X$ & 0.0749 & 0.7221 & 0.1757 \\
Proximal 2SLS, roles given & 0.0156 & 0.2328 & \textbf{0.1016} \\
Proximal 2SLS, roles swapped & 0.2453 & 1.5856 & 0.8015 \\
Single-proxy control, as an ATE & 0.7318 & 5.9838 & 34.6486 \\
KSPC$^\dagger$, clean block & 0.0099 & 0.6214 & 0.5622 \\
KSPC$^\dagger$, contaminated block & 0.0815 & 1.2499 & 0.9318 \\
\midrule
Oracle (adjusts for $(X,U)$) & 0.0074 & 0.4868 & 0.1336 \\
\bottomrule
\end{tabular}
\end{table}

\subsection{Protocol discipline}

One generated data set is one replicate; splits, rows, rotations, and restarts are not. Seeds overlap across the synthetic settings: SCM-1 replicates $10$ to $19$ share seeds with SCM-2 replicates $0$ to $9$, and SCM-2 replicates $10$ to $19$ with SCM-3 replicates $0$ to $9$, so comparisons across those settings are not independent. Paired methods share data, sample roles, and nuisance budgets. For SCM-1 to SCM-4 the screen threshold is fixed before the confirmation data are generated, and Appendix~\ref{sec:appendix-benchmarks} gives the rule and its history for the other data sets. The threshold remains a screen rather than proof of causal validity. No-return outcomes stay in the return-rate denominator, failed sealed runs stay in the record, and this appendix documents exclusions, replacements, and the representation family the search ranges over.

\subsection{Evidence boundaries}

 The empirical gates are performance criteria, not proofs of completeness, stability, or the population plurality condition.

\subsection{Additional figures}

Fig.~\ref{fig:appendix-aggregation} shows the three aggregation rules on the sealed confirmations, Fig.~\ref{fig:appendix-coverage} shows the interval coverage of one split, and Fig.~\ref{fig:appendix-learned} separates the representation and estimation parts of the error.

The interval in Fig.~\ref{fig:appendix-coverage} is the AIPW estimate of one split plus or minus $1.96$ times its own standard error, and that standard error is computed from the score cross-product the sealed run already stored, so the figure refits nothing. It covers what the learned representation targets at the nominal rate and covers $\tau$ less often, by the amount of the representation gap. An interval built from sampling variation cannot contain that gap, which is why Thm.~\ref{cor:representation-causal} bounds it separately from Thm.~\ref{thm:conditional-aipw-error}.

\begin{figure}[t]
\centering
\includegraphics[width=\linewidth]{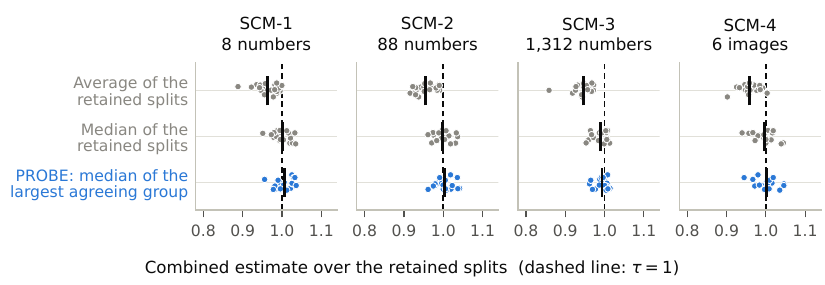}
\caption[Aggregation rules.]{Three ways to combine the retained splits. Averaging is pulled away from $\tau$ by the balanced but invalid split; the median of the largest agreeing group is not.}
\label{fig:appendix-aggregation}
\end{figure}

\begin{figure}[t]
\centering
\includegraphics[width=\linewidth]{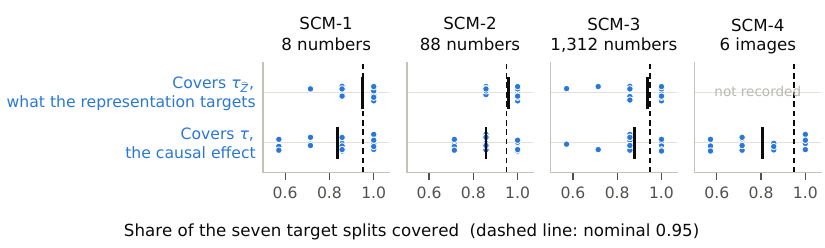}
\caption[Interval coverage.]{Coverage of a nominal $95\%$ interval around one split, one dot per data set. The interval covers the population effect of the learned representation at the nominal rate and covers $\tau$ less often, by the representation gap.}
\label{fig:appendix-coverage}
\end{figure}

\begin{figure}[t]
\centering
\includegraphics[width=\linewidth]{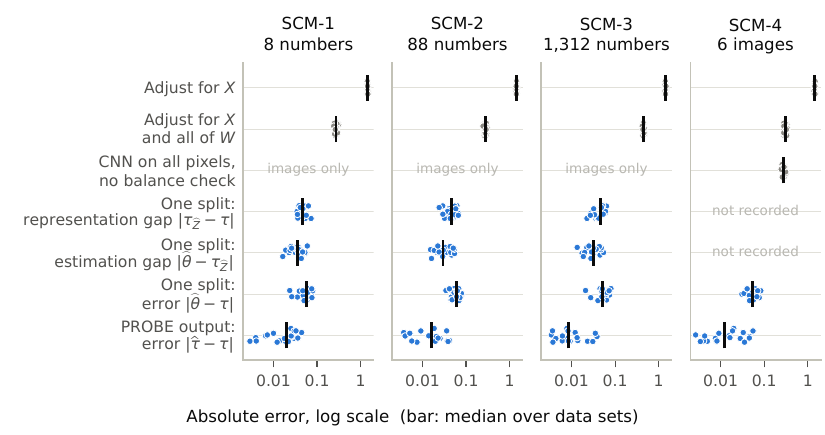}
\caption[Absolute errors of one split and of aggregation.]{Absolute errors on a log scale. For one split the representation gap $|\tau_{\widehat Z}-\tau|$ is the population effect of the learned representation, which is exact for the linear representations of SCM-1 to SCM-3 and is not recorded for SCM-4. Aggregation reduces the error of a single split by roughly a factor of three.}
\label{fig:appendix-learned}
\end{figure}

\FloatBarrier

\section{Proofs}
\label{app:proofs}

\subsection{Proofs for Section~\ref{sec:identification}}

\paragraph{Sufficient conditions for outcome-relevant completeness.}
\begin{proof}[Proof of Proposition~\ref{appdetail:prop:orc-sufficient}]
Fix $z$ and $Q,Q'\in\mathcal Q_{j,z}$ with $\mathcal K_jQ=\mathcal K_jQ'$. Write $\pi_Q$ for the distribution of $(X,U)$ under $Q$, $\pi_{Q,X}$ for its $X$-marginal, and $\pi_Q(\cdot\mid x)$ for its conditional distribution of $U$ given $X=x$. By condition (i), $K_j(\cdot\mid v,u)=K_j(\cdot\mid x,u)$, so the distribution $\mathcal K_jQ$ of $(X,W_j)$ has $X$-marginal $\pi_{Q,X}$ and conditional distribution $\int K_j(\cdot\mid x,u)\,\pi_Q(du\mid x)$ given $X=x$. The equality $\mathcal K_jQ=\mathcal K_jQ'$ therefore gives $\pi_{Q,X}=\pi_{Q',X}$ and, for $\pi_{Q,X}$-almost every $x$, $\int K_j(\cdot\mid x,u)\,\pi_Q(du\mid x)=\int K_j(\cdot\mid x,u)\,\pi_{Q'}(du\mid x)$. Condition (ii) yields $\pi_Q(\cdot\mid x)=\pi_{Q'}(\cdot\mid x)$ for almost every $x$, hence $\pi_Q=\pi_{Q'}$. By condition (i), $\mu_{a,j,z}(v,u)=\bar\mu_{a,z}(x,u)$ for a measurable function $\bar\mu_{a,z}$, so $\mathbb E_Q\{\mu_{a,j,z}(V_j,U)\}=\int\bar\mu_{a,z}\,d\pi_Q=\int\bar\mu_{a,z}\,d\pi_{Q'}=\mathbb E_{Q'}\{\mu_{a,j,z}(V_j,U)\}$, which is Assumption~\ref{ass:outcome-relevant-completeness}.

For case (a), the distribution of $W_j$ given $X=x$ is the convolution of the pushforward of $\pi$ under $g(x,\cdot)$ with the distribution of $\epsilon_j$. Its characteristic function is the product of the characteristic function of the pushforward and that of $\epsilon_j$, and the latter vanishes nowhere by assumption. Two distributions $\pi$ and $\pi'$ with equal convolutions therefore have pushforwards with equal characteristic functions, hence equal pushforwards by the uniqueness theorem for characteristic functions, and injectivity of $g(x,\cdot)$ gives $\pi=\pi'$. Thus (ii) holds.

For case (b), the probability generating function of $W_j$ given $X=x$ under a distribution $\pi$ of $U$ is $\mathbb E(\prod_\ell s_\ell^{W_{j,\ell}})=\int\exp\{-\sum_\ell\lambda_\ell(x,u)(1-s_\ell)\}\,\pi(du)$ for $s\in[0,1]^{d}$, which is the Laplace transform of the pushforward of $\pi$ under $\lambda(x,\cdot)$ evaluated on $[0,1]^{d}$. A finite measure on $[0,\infty)^{d}$ is determined by its Laplace transform on a set with nonempty interior, so two distributions with equal mixtures have equal pushforwards, and injectivity of $\lambda(x,\cdot)$ gives $\pi=\pi'$. In the scalar case this is the identifiability of Poisson mixtures of \citet{teicher1961identifiability}. Thus (ii) holds.

\end{proof}

\paragraph{Identification from a held-out audit.}
\begin{proof}[Proof of Theorem~\ref{thm:exact-proxy-identification}]
Because $Z_j=\phi_j(V_j)$ is a measurable function of $V_j$, conditioning on $(U,V_j,Z_j)$ is equivalent to conditioning on $(U,V_j)$. The two clauses of Assumption~\ref{ass:joint-latent-exchangeability} therefore remain valid after additionally conditioning on $Z_j$. Thus, for each $a\in\{0,1\}$,
\begin{align}
\mathbb E\{Y(a)\mid A,V_j,U,Z_j\}
&=
\mathbb E\{Y(a)\mid V_j,U,Z_j\},
\label{eq:conditioning-safe-outcome-app}\\
 P(W_j\mid A,V_j,U,Z_j)
&=
K_j(\cdot\mid V_j,U).
\nonumber
\end{align}
Integrating the second identity over $Q_{b,j,z}$ and using the definition of $\mathcal K_j$ gives
\( P(X,W_j\mid A=b,Z_j=z) = \mathcal K_jQ_{b,j,z}.\)
Exact balance in Eq.~\eqref{eq:exact-proxy-balance} therefore implies
\(\mathcal K_jQ_{1,j,z}=\mathcal K_jQ_{0,j,z}\)
for $P_{Z_j}$-almost every $z$. Representation overlap gives $Q_{b,j,z}\ll P(V_j,U\mid Z_j=z)$ for $b\in\{0,1\}$. Assumption~\ref{ass:consistency}, Eq.~\eqref{eq:conditioning-safe-outcome-app}, and Jensen's inequality give the required integrability, so $Q_{b,j,z}\in\mathcal Q_{j,z}$ almost everywhere. Assumption~\ref{ass:outcome-relevant-completeness} therefore gives, for each $a\in\{0,1\}$,
\begin{align}
\mathbb E_{Q_{1,j,z}}\{\mu_{a,j,z}(V_j,U)\}
=
\mathbb E_{Q_{0,j,z}}\{\mu_{a,j,z}(V_j,U)\}.
\label{eq:orc-equal-latent-means}
\end{align}
By Eq.~\eqref{eq:conditioning-safe-outcome-app} and iterated expectation, for each $b\in\{0,1\}$,
\(\mathbb E\{Y(a)\mid A=b,Z_j=z\} = \mathbb E_{Q_{b,j,z}}\{\mu_{a,j,z}(V_j,U)\}.\)
Eq.~\eqref{eq:orc-equal-latent-means} shows that the conditional mean of $Y(a)$ given $(A,Z_j)$ does not depend on $A$. Consistency then gives
\(\mathbb E(Y\mid A=a,Z_j=z) = \mathbb E\{Y(a)\mid A=a,Z_j=z\} = \mathbb E\{Y(a)\mid Z_j=z\}.\)
Integrating over $P_{Z_j}$ yields Eq.~\eqref{eq:exact-identification}.\qedhere
\end{proof}

\paragraph{Non-nesting of canonical proximal and proximal-balancing sufficient conditions.}
\begin{proof}[Proof of Proposition~\ref{prop:non-nesting-proximal-probe}]
We give one finite-state distribution for each direction. In both constructions, $X$ is constant, $Y=Y(A)$, and every treatment draw is generated by an exogenous randomizer independent of all proxy noises and outcome noises.

First, we construct a distribution satisfying the canonical proximal conditions but not the proximal-balancing conditions for the designated proxy split. Let
\(U\sim\operatorname{Bernoulli}(1/2), P(A=1\mid U=u)=\frac14+\frac12u,\)
and let
\(Z^{\mathrm{tr}}=U\mathbin\oplus E_Z, \qquad Z^{\mathrm{out}}=W_j=U\mathbin\oplus E_W,\)
where $E_Z$ and $E_W$ are mutually independent $\operatorname{Bernoulli}(1/4)$ variables, independent of the treatment randomizer. Set $Y(a)=a+U$ and $V_j=Z^{\mathrm{tr}}$. Consistency, integrability, and strict latent overlap hold. Independence of the exogenous noises gives the three canonical proxy-validity conditions.

The function
\(h(z^{\mathrm{out}},a)=a+2z^{\mathrm{out}}-\frac12\)
is an outcome bridge because $\mathbb E(Z^{\mathrm{out}}\mid U=u)=1/4+u/2$, so $\mathbb E\{h(Z^{\mathrm{out}},a)\mid U=u\}=a+u$. Iterated expectation gives the required observed bridge equation. The binary channel from $U$ to $Z^{\mathrm{tr}}$ has matrix
\(\begin{pmatrix} 3/4&1/4\\ 1/4&3/4 \end{pmatrix}\)
and determinant $1/2$. Conditional on either treatment arm, both latent states have positive probability, and the corresponding unnormalized conditional-expectation matrix has determinant $3/32$. It therefore has full column rank, which proves the canonical treatment-proxy completeness condition.

Now consider any measurable representation $Z_j=\phi_j(V_j)$. The Markov relation $U\to V_j\to Z_j$ holds. Conditional on $(U,Z_j)$, treatment and $W_j$ are independent, and
\(\mathbb E(A\mid U,Z_j) = \mathbb E(W_j\mid U,Z_j) = \frac14+\frac12U.\)
The conditional covariance decomposition gives
\(\operatorname{Cov}(A,W_j\mid Z_j) = \frac14\operatorname{Var}(U\mid Z_j).\)
Because $P(U=1\mid V_j=0)=1/4$ and $P(U=1\mid V_j=1)=3/4$,
\(P(U=1\mid Z_j) = \mathbb E\{P(U=1\mid V_j)\mid Z_j\} \in[1/4,3/4]\)
almost surely. Hence $\operatorname{Var}(U\mid Z_j)\ge3/16$ and
\(\operatorname{Cov}(A,W_j\mid Z_j)\ge\frac{3}{64}>0.\)
Exact balance therefore fails for every such representation. This proves the first direction.

Second, we construct a distribution satisfying the proximal-balancing conditions for the designated proxy split but not the canonical proximal conditions. Let $U=(U_1,U_2)$, where $U_1$ and $U_2$ are independent $\operatorname{Bernoulli}(1/2)$ variables. Let $Z^{\mathrm{tr}}$ and $Z^{\mathrm{out}}$ be separate observed coordinates that agree almost surely with $U_1$, and set
\(V_j=Z^{\mathrm{tr}}=U_1, \qquad W_j=Z^{\mathrm{out}}=U_1, \qquad Z_j=U_1.\)
Define
\(P(A=1\mid U_1,U_2) = \frac3{20}+\frac15U_1+\frac25U_2,\)
and let $\xi$ be an independent mean-zero Rademacher variable, also independent of the treatment randomizer. Set
\(Y(a)=a+U_1+(1+U_2)\xi.\)
The four latent treatment probabilities are $3/20$, $11/20$, $7/20$, and $15/20$, so strict latent overlap holds. Latent exchangeability follows from the independent outcome noise, and held-out treatment independence holds because $W_j$ is deterministic given $U$. Moreover,
\(P(A=1\mid Z_j=0)=\frac7{20}, \qquad P(A=1\mid Z_j=1)=\frac{11}{20},\)
so representation overlap holds. Because $W_j=Z_j$ and $X$ is constant, exact balance also holds.

It remains to verify the all-$Q$ outcome-relevant completeness condition. Within a stratum $Z_j=z$, every admissible distribution $Q\in\mathcal Q_{j,z}$ is supported on $V_j=U_1=z$, while $U_2$ may vary. On this support,
\(\mu_{a,j,z}(v,u)=a+z.\)
Thus, for any admissible $Q,Q'$ and either potential treatment level,
\(\mathcal K_jQ=\mathcal K_jQ' \quad\Longrightarrow\quad \mathbb E_Q\{\mu_{a,j,z}(V_j,U)\} =a+z =\mathbb E_{Q'}\{\mu_{a,j,z}(V_j,U)\}.\)
This proves all proximal-balancing conditions for the designated proxy split.

The outcome bridge $h(z^{\mathrm{out}},a)=a+z^{\mathrm{out}}$ exists because
\(\mathbb E(Y\mid Z^{\mathrm{tr}}=z,A=a) =a+z =\mathbb E\{h(Z^{\mathrm{out}},a)\mid Z^{\mathrm{tr}}=z,A=a\}.\)
Canonical treatment-proxy completeness nevertheless fails. In the treated arm,
\(P(U_2=1\mid U_1=0,A=1)=\frac{11}{14}, \qquad P(U_2=1\mid U_1=1,A=1)=\frac{15}{22}.\)
Define the square-integrable, nonzero function
\(g_1(U) = U_2- \begin{cases} 11/14,&U_1=0,\\ 15/22,&U_1=1. \end{cases}\)
Then
\(\mathbb E\{g_1(U)\mid Z^{\mathrm{tr}},A=1\} = \mathbb E\{g_1(U)\mid U_1,A=1\} =0\)
almost surely, although $g_1(U)$ is not almost surely zero. Hence the canonical completeness condition fails, proving the second direction.

For the final claim, the second construction satisfies Assumption~\ref{ass:outcome-relevant-completeness} but not canonical treatment-proxy completeness. In the first construction, replace $E_W$ by an independent $\operatorname{Bernoulli}(1/2)$ variable. Canonical treatment-proxy completeness is unchanged because it involves only $(U,Z^{\mathrm{tr}},A)$. Now $W_j$ is independent of $U$, so $\mathcal K_jQ$ is the same distribution for every $Q$. For every representation $\phi_j$ and almost every $z$, the bound $P(U=1\mid Z_j)\in[1/4,3/4]$ shown above gives both values of $U$ positive probability given $Z_j=z$, so $\mathbb E_Q\mu_{a,j,z}=a+Q(U=1)$ takes more than one value over $\mathcal Q_{j,z}$ and Assumption~\ref{ass:outcome-relevant-completeness} fails.
\end{proof}

\paragraph{Identification with an unknown valid block.}
\begin{proof}[Proof of Theorem~\ref{thm:unknown-valid-block-identification}]
Under condition~\eqref{eq:unique-target-plurality}, the number of proxy splits in $\mathcal B$ whose candidate value equals $\tau$ is strictly larger than the count for every non-target value. Therefore, $\tau$ is the unique value maximizing the population count:
\(\{\tau\} = \underset{c\in\mathbb R}{\operatorname{arg\,max}} \bigl|\{S\in\mathcal B:\theta_S=c\}\bigr|.\)
If no non-target value occurs, the convention in Definition~\ref{def:population-plurality} compares the nonempty target cluster with zero and gives the same conclusion.
\end{proof}

\paragraph{Target-valued majority.}
\begin{proof}[Proof of Corollary~\ref{cor:target-majority-median}]
Define
\(N_\tau \triangleq \bigl|\{S\in\mathcal B:\theta_S=\tau\}\bigr|\) and \(N_c \triangleq \bigl|\{S\in\mathcal B:\theta_S=c\}\bigr|, \qquad c\ne\tau.\)
Condition~\eqref{eq:target-valued-majority} gives $N_\tau>|\mathcal B|/2$. For every $c\ne\tau$,
\(N_c \le |\mathcal B|-N_\tau < N_\tau.\)
Therefore, $\tau$ is the unique candidate value of greatest multiplicity.

It remains to establish the median statement. Let $n=|\mathcal B|$. If $n=2k+1$ is odd, then $N_\tau\ge k+1$. At most $k$ candidate values are strictly below $\tau$, and at most $k$ are strictly above $\tau$, so the $(k+1)$st order statistic equals $\tau$. If $n=2k$ is even, then $N_\tau\ge k+1$ and $n-N_\tau\le k-1$. Hence both the $k$th and the $(k+1)$st order statistics equal $\tau$. Under the conventional definition of the even-sample median as the average of these two middle order statistics, the median equals $\tau$. Thus,
\(\operatorname{median}\bigl((\theta_S)_{S\in\mathcal B}\bigr)=\tau.\)
The stronger condition that more than half of the proxy splits in $\mathcal B$ satisfy the grouped assumptions is sufficient because each such proxy split has candidate value $\tau$, but it is not necessary.
\end{proof}

\paragraph{Zero discrepancy.}
\begin{proof}[Proof of Proposition~\ref{prop:zero-discrepancy}]
By Definition~\ref{def:held-out-proxy-discrepancy}, $D_{\mathrm{res},j}(\phi)=0$ if and only if
\(q_{j,\phi}(T_j,Z_j^\phi)=e_{j,\phi}(Z_j^\phi) \quad\text{almost surely}.\)
Because $A$ is binary, this equality is equivalent to $T_j\perp\!\!\!\perp A\mid Z_j^\phi$.
\end{proof}

\paragraph{Supporting bias decomposition.}
Fix a representation $\phi$ and abbreviate $e(z)=e_{j,\phi}(z)$. For each $a\in\{0,1\}$ and $P_{Z_j^\phi}$-almost every $z$, define
\(\delta_{a,j}^{\phi}(z) \triangleq \mathbb E_{Q_{1,j,z}^{\phi}}\{\mu_{a,j,z}^{\phi}(V_j,U)\} - \mathbb E_{Q_{0,j,z}^{\phi}}\{\mu_{a,j,z}^{\phi}(V_j,U)\}.\)
Under Assumptions~\ref{ass:consistency} and \ref{ass:joint-latent-exchangeability}, and under $0<e(Z_j^\phi)<1$ almost surely, the conditioning argument used in Eq.~\eqref{eq:conditioning-safe-outcome-app} gives
\(\mathbb E(Y\mid A=a,Z_j^\phi=z) = \mathbb E_{Q_{a,j,z}^{\phi}}\{\mu_{a,j,z}^{\phi}(V_j,U)\}.\)
Conditional expectation over $A$ gives
\(\mathbb E\{Y(a)\mid Z_j^\phi=z\} ={} e(z)\mathbb E_{Q_{1,j,z}^{\phi}}\{\mu_{a,j,z}^{\phi}\}+\{1-e(z)\}\mathbb E_{Q_{0,j,z}^{\phi}}\{\mu_{a,j,z}^{\phi}\}.\)
Subtracting the two conditional expressions and collecting terms gives
\begin{align}
&\mathbb E(Y\mid A=1,Z_j^\phi=z)
-\mathbb E(Y\mid A=0,Z_j^\phi=z)
\nonumber\\
&\quad-
\left[
\mathbb E\{Y(1)\mid Z_j^\phi=z\}
-
\mathbb E\{Y(0)\mid Z_j^\phi=z\}
\right]
\nonumber\\
&\qquad=
\{1-e(z)\}\delta_{1,j}^{\phi}(z)
+e(z)\delta_{0,j}^{\phi}(z).
\label{eq:pointwise-bias-decomposition}
\end{align}

\paragraph{Approximate-balance bias.}
\begin{proof}[Proof of Theorem~\ref{thm:approximate-proxy-bias-bound}]
Fix $z$ in the almost-sure set on which overlap and the structural conditions hold. Bayes' rule gives
\(\frac{d P(T_j\mid A=1,Z_j^\phi=z)}{dP_{\phi,z}}(t) =\frac{q_{j,\phi}(t,z)}{e(z)}\) and \(\frac{d P(T_j\mid A=0,Z_j^\phi=z)}{dP_{\phi,z}}(t) =\frac{1-q_{j,\phi}(t,z)}{1-e(z)}.\)
Subtracting gives the density identity
\(\frac{d\{ P(T_j\mid A=1,Z_j^\phi=z) - P(T_j\mid A=0,Z_j^\phi=z)\}} {dP_{\phi,z}}(t) = \frac{q_{j,\phi}(t,z)-e(z)} {e(z)\{1-e(z)\}}.\)
Held-out treatment independence makes the conditional distribution of $W_j$ given $(U,V_j)$ treatment-invariant, so $\mathcal K_jQ_{b,j,z}^{\phi}= P(T_j\mid A=b,Z_j^\phi=z)$ for $b\in\{0,1\}$. As in the proof of Thm.~\ref{thm:exact-proxy-identification}, overlap and Assumption~\ref{ass:consistency} give $Q^{\phi}_{b,j,z}\in\mathcal Q^{\phi}_{j,z}$ for $b\in\{0,1\}$. Assumption~\ref{ass:uniform-stability} therefore implies
\begin{align}
|\delta_{a,j}^{\phi}(z)|
&\le
\frac{\Gamma_j(\phi)}{e(z)\{1-e(z)\}}
\left[
\mathbb E\!\left(
\left.
\{q_{j,\phi}(T_j,z)-e(z)\}^2
\right|Z_j^\phi=z
\right)
\right]^{1/2}.
\label{eq:stratum-bias-stability-bound}
\end{align}
Apply the triangle inequality to Eq.~\eqref{eq:pointwise-bias-decomposition}, use Eq.~\eqref{eq:stratum-bias-stability-bound} for both potential outcomes, and integrate over $Z_j^\phi$. This gives
\(\left| \mathbb E\!\left[ \mathbb E(Y\mid A=1,Z_j^\phi) - \mathbb E(Y\mid A=0,Z_j^\phi) \right] -\tau \right| \le \mathbb E\!\left[ \frac{\Gamma_j(\phi)} {e_{j,\phi}(Z_j^\phi)\{1-e_{j,\phi}(Z_j^\phi)\}} \left\{ \mathbb E\!\left[ \left. \{q_{j,\phi}(T_j,Z_j^\phi)-e_{j,\phi}(Z_j^\phi)\}^2 \right|Z_j^\phi \right] \right\}^{1/2} \right] \le \frac{\Gamma_j(\phi)}{\eta(1-\eta)}D_{\mathrm{res},j}(\phi).\)
Because $\Gamma_j(\phi)$ does not depend on $z$, the overlap bound $\eta\le e_{j,\phi}(z)\le1-\eta$ gives the last inequality, together with $\mathbb E\{s(Z_j^\phi)\}\le[\mathbb E\{s(Z_j^\phi)^2\}]^{1/2}=D_{\mathrm{res},j}(\phi)$ by Cauchy--Schwarz, where $s(z)$ denotes the conditional root-mean-square factor in the display, and proves Eq.~\eqref{eq:uniform-approximate-proxy-bias-bound}.
\end{proof}

\paragraph{Population sensitivity region.}
\begin{proof}[Proof of Proposition~\ref{prop:population-sensitivity-region}]
Thm.~\ref{thm:approximate-proxy-bias-bound} gives $|\tau_{Z_j^\phi}-\tau|\le\Gamma_j(\phi)D_{\mathrm{res},j}(\phi)/\{\eta(1-\eta)\}$, and the displayed interval is the set of values of $\tau$ compatible with this inequality.
\end{proof}

\subsection{Proofs for Section~\ref{sec:estimation}}

\paragraph{Uniform discrepancy deviation.}
\begin{proof}[Proof of Proposition~\ref{appdetail:prop:uniform-discrepancy-deviation}]
For clarity, the joint encoder--critic loss classes used in the proposition are
\(\mathcal L_{0,j,n_{\mathrm D}} \triangleq \left\{ o\mapsto\{a-h(\phi(v_j))\}^2: \phi\in\mathcal F_{j,n_{\mathrm D}},\ h\in\mathcal H_{0,n_{\mathrm D}} \right\}\); \(\mathcal L_{1,j,n_{\mathrm D}} \triangleq \left\{ o\mapsto\{a-h(t_j,\phi(v_j))\}^2: \phi\in\mathcal F_{j,n_{\mathrm D}},\ h\in\mathcal H_{1,n_{\mathrm D}} \right\},\)
where $o=(v_j,t_j,a,y)$. For a loss class $\mathcal L$, its expected Rademacher complexity is
\(\mathfrak R_{n_{\mathrm D}}(\mathcal L) \triangleq \mathbb E\!\left[ \sup_{\ell\in\mathcal L} \frac1{n_{\mathrm D}}\sum_{i=1}^{n_{\mathrm D}}\sigma_i\ell(O_i) \right],\)
where the $\sigma_i$ are independent Rademacher signs independent of the observations. The loss classes are pointwise measurable and separable, so these suprema are measurable.
The same argument may instead be stated with outer probability and outer expectation.

Every loss in either class lies in $[0,1]$. Standard symmetrization and bounded-difference concentration therefore imply, simultaneously for $b\in\{0,1\}$, that
\begin{align}
\sup_{\phi\in\mathcal F_{j,n_{\mathrm D}},\,h\in\mathcal H_{b,n_{\mathrm D}}}
|\widehat R_{b,\phi}(h)-R_{b,\phi}(h)|
\le
2\mathfrak R_{n_{\mathrm D}}(\mathcal L_{b,j,n_{\mathrm D}})
+\sqrt{\frac{\log(4/\delta_{\mathrm D})}{2n_{\mathrm D}}}
\label{eq:brier-risk-uniform-deviation-app}
\end{align}
with probability at least $1-\delta_{\mathrm D}$. Let $R_{b,\phi}^{\star}=\inf_{h\in\mathcal H_{b,n_{\mathrm D}}}R_{b,\phi}(h)$ and define $\widehat R_{b,\phi}^{\star}$ analogously. Eq.~\eqref{eq:brier-risk-uniform-deviation-app} controls $|\widehat R_{b,\phi}^{\star}-R_{b,\phi}^{\star}|$ by the same right-hand side. Eq.~\eqref{eq:approximate-critic-fits} also gives
\(0 \le \widehat R_{b,\phi}(\widehat h_{b,\phi}) -\widehat R_{b,\phi}^{\star} \le \varepsilon_{b,\phi}.\)

The conditional-mean property of squared loss yields the Brier identity
\(\mathbb E[\{A-e_{j,\phi}(Z_j^\phi)\}^2] -\mathbb E[\{A-q_{j,\phi}(T_j,Z_j^\phi)\}^2] = \mathbb E[\{q_{j,\phi}(T_j,Z_j^\phi)-e_{j,\phi}(Z_j^\phi)\}^2] =D_{\mathrm{res},j}^2(\phi).\)
It follows from Eq.~\eqref{eq:critic-approximation-error} that
\begin{align}
\left|
R_{0,\phi}^{\star}-R_{1,\phi}^{\star}
-D_{\mathrm{res},j}^2(\phi)
\right|
\le
a_{0,n_{\mathrm D}}(\phi)+a_{1,n_{\mathrm D}}(\phi)
\le a_{j,n_{\mathrm D}}.
\label{eq:population-critic-gap-app}
\end{align}
Combine Eqs.~\eqref{eq:brier-risk-uniform-deviation-app}--\eqref{eq:population-critic-gap-app}. The positive-part map is one-Lipschitz and $D_{\mathrm{res},j}^2(\phi)\ge0$, so taking the supremum over $\phi$ gives Eq.~\eqref{eq:uniform-discrepancy-deviation}.
\end{proof}

\paragraph{Representation-learning oracle inequality.}
\begin{proof}[Proof of Proposition~\ref{appdetail:thm:representation-oracle}]
On the event in Eq.~\eqref{eq:uniform-discrepancy-deviation}, set
\(\bar\xi_{j,n_{\mathrm D}} \triangleq r_{j,n_{\mathrm D}}(\delta_{\mathrm D})+a_{j,n_{\mathrm D}}+\varepsilon_{\mathrm{crit},j}.\)
The deviation inequality at $\widehat\phi_j$, Eq.~\eqref{eq:approximate-erm}, and the deviation inequality at a near-minimizer of $D_{\mathrm{res},j}^2$ over $\mathcal F_{j,n_{\mathrm D}}$ give
\(D_{\mathrm{res},j}^2(\widehat\phi_j) \le \widetilde D_{j,n_{\mathrm D}}^2(\widehat\phi_j)+\bar\xi_{j,n_{\mathrm D}} \le \inf_{\phi\in\mathcal F_{j,n_{\mathrm D}}} \widetilde D_{j,n_{\mathrm D}}^2(\phi) +\varepsilon_{\mathrm{enc},j}+\bar\xi_{j,n_{\mathrm D}} \le \inf_{\phi\in\mathcal F_{j,n_{\mathrm D}}} D_{\mathrm{res},j}^2(\phi) +2\bar\xi_{j,n_{\mathrm D}}+\varepsilon_{\mathrm{enc},j}\),
which proves Eq.~\eqref{eq:representation-oracle-discrepancy}.
\end{proof}

\paragraph{Learned-representation error.}
\begin{proof}[Proof of Theorem~\ref{cor:representation-causal}]
Conditional on the training observations $\{O_i:i\in\mathcal I_{\mathrm D}\}$ and any algorithmic seed used in training, the selected encoder is fixed; the population expectations below use a fresh observation independent of these training inputs. Eq.~\eqref{eq:uniform-approximate-proxy-bias-bound} gives
\(|\tau_{\widehat Z_j}-\tau| \le \frac{\Gamma_j(\widehat\phi_j)}{\eta(1-\eta)} D_{\mathrm{res},j}(\widehat\phi_j).\)
On the event in Eq.~\eqref{eq:uniform-discrepancy-radius}, which has probability at least $1-\delta_{\mathrm D}$, every $\phi\in\mathcal F_{j,n_{\mathrm D}}$ satisfies \(D_{\mathrm{res},j}^2(\widehat\phi_j)\le\widetilde D_{j,n_{\mathrm D}}^2(\widehat\phi_j)+\xi_{j,n_{\mathrm D}}(\delta_{\mathrm D})\le\widetilde D_{j,n_{\mathrm D}}^2(\phi)+\xi_{j,n_{\mathrm D}}(\delta_{\mathrm D})\le D_{\mathrm{res},j}^2(\phi)+2\xi_{j,n_{\mathrm D}}(\delta_{\mathrm D})\). The first and last inequalities hold on this event, and the middle one holds because $\widehat\phi_j$ minimizes $\widetilde D_{j,n_{\mathrm D}}^2$ over $\mathcal F_{j,n_{\mathrm D}}$. Taking the infimum over $\phi$ and then the square root gives $D_{\mathrm{res},j}(\widehat\phi_j)\le[\inf_{\phi\in\mathcal F_{j,n_{\mathrm D}}}D_{\mathrm{res},j}^2(\phi)+2\xi_{j,n_{\mathrm D}}(\delta_{\mathrm D})]^{1/2}$. Substituting this bound into the first display proves Eq.~\eqref{eq:representation-oracle-causal}. Substituting the square root of Eq.~\eqref{eq:representation-oracle-discrepancy} instead proves Thm.~\ref{appdetail:cor:representation-causal}.
\end{proof}

\paragraph{Exact signed nuisance remainder.}
\begin{proof}[Proof of Proposition~\ref{appdetail:prop:aipw-remainder}]
Fix $\widehat Z_j=z$. Taking the conditional expectation of the score and using Eq.~\eqref{eq:aipw-nuisances} gives
\(\mathbb E\{\mathtt{UIF}_j(O)\mid\widehat Z_j=z\} ={} \widehat m_{1,j}(z)-\widehat m_{0,j}(z)+ \frac{e_j(z)\{m_{1,j}(z)-\widehat m_{1,j}(z)\}} {\widehat e_j(z)}- \frac{\{1-e_j(z)\}\{m_{0,j}(z)-\widehat m_{0,j}(z)\}} {1-\widehat e_j(z)}.\)
Subtract $m_{1,j}(z)-m_{0,j}(z)$ and collect the two nuisance errors. Averaging the resulting identity over $\widehat Z_j$ proves Eq.~\eqref{eq:aipw-remainder}.
\end{proof}

\paragraph{Honest AIPW error.}
\begin{proof}[Proof of Theorem~\ref{thm:conditional-aipw-error}]
 Conditional on the two learning samples, the evaluation observations are independent and the score $\mathtt{UIF}_j(O)$ has variance $\sigma_j^2$, so Chebyshev's inequality bounds the difference between the evaluation average and its conditional expectation by $\sigma_j/\sqrt{n_{\mathrm E}\delta_{\mathrm E}}$, the first term in Eq.~\eqref{eq:aipw-error-radius}, with probability at least $1-\delta_{\mathrm E}$. Prop.~\ref{appdetail:prop:aipw-remainder}, Cauchy--Schwarz, and the propensity clipping bound control the absolute conditional remainder by
\(\frac{q_{e,j}(q_{0,j}+q_{1,j})}{\eta}.\)
The triangle inequality proves Eq.~\eqref{eq:conditional-aipw-error}.
For Eq.~\eqref{eq:end-to-end-sensitivity}, the triangle inequality gives $|\widehat\tau_j^{\mathrm{AIPW}}-\tau|\le|\widehat\tau_j^{\mathrm{AIPW}}-\tau_{\widehat Z_j}|+|\tau_{\widehat Z_j}-\tau|$. Eq.~\eqref{eq:conditional-aipw-error} bounds the first term on the same event, and Eq.~\eqref{eq:uniform-approximate-proxy-bias-bound} bounds the second term by $\Gamma_j(\widehat\phi_j)D_{\mathrm{res},j}(\widehat\phi_j)/\{\eta(1-\eta)\}$.
\end{proof}

\paragraph{Consistency.}
\begin{proof}[Proof of Corollary~\ref{cor:learned-aipw-consistency}]
 Condition (C1) makes the evaluation term $\sigma_{j,n}/\sqrt{n_{\mathrm E}\delta_{\mathrm E,n}}$ of $\varepsilon_{\mathrm{AIPW},j}$ vanish, condition (C2) makes its nuisance-product term vanish, and condition (C3) makes the stability-weighted residual-discrepancy term in Eq.~\eqref{eq:end-to-end-sensitivity} vanish. The failure probability $\delta_{\mathrm E,n}$ of Thm.~\ref{thm:conditional-aipw-error} also converges to zero, which proves convergence in probability.
\end{proof}

\paragraph{Cluster plurality under random subsampling.}
\begin{proof}[ Proof of Lemma~\ref{lem:random-subsampling-plurality}]
Conditional on $\mathcal G$ and $R=r\ge1$, the retained proxy splits form a uniformly sampled $r$-element subset of $\widehat{\mathcal B}$ because the $m$ draws are uniform without replacement and $\widehat{\mathcal B}$ is fixed. For each competing cell $C_\ell$, define $H_\ell(S)\triangleq\mathbf 1\{S\in C_0\}-\mathbf 1\{S\in C_\ell\}$ for $S\in\widehat{\mathcal B}$. This variable lies in $[-1,1]$ and has population mean $(n_0-n_\ell)/N\ge\Delta_\Pi$. Let $\overline H_{\ell,r}$ be its retained-sample average. The event $R_\ell\ge R_0$ implies $\overline H_{\ell,r}\le0$. Hoeffding's inequality for sampling without replacement \citep{hoeffding1963probability} gives $P(\overline H_{\ell,r}\le0\mid\mathcal G,R=r)\le e^{-r\Delta_\Pi^2/2}$. A union bound over the $L$ competing cells yields $P(\max_{1\le\ell\le L}R_\ell\ge R_0\mid\mathcal G,R=r)\le L e^{-r\Delta_\Pi^2/2}$. Averaging over $R$ and counting $R=0$ as failure gives $P(\text{failure}\mid\mathcal G)\le P(R=0\mid\mathcal G)+L\,\mathbb E\{e^{-R\Delta_\Pi^2/2}\mid\mathcal G\}$. If $L=0$, every retained split belongs to $C_0$, so the conclusion holds whenever $R\ge1$.
\end{proof}

\paragraph{Finite-sample PROBE cluster recovery.}
\begin{proof}[Proof of Theorem~\ref{thm:probe-search-error}]
Work conditionally on $\mathcal G$. Let $E_1$ be the uniform-evaluation-error event and let $E_2$ be the event that $R\ge1$ and $R_0>\max_{1\le\ell\le L}R_\ell$. Eq.~\eqref{eq:conditional-aipw-error} applied to each $S\in\widehat{\mathcal B}$ at $\delta_{\mathrm E}=\delta/N$ gives $P(|\widehat\theta_S-\theta_S|>\varepsilon\mid\mathcal G)\le\delta/N$, because $\varepsilon$ is at least the radius of Eq.~\eqref{eq:aipw-error-radius} for $S$ at that level, and a union bound over the $N$ splits in $\widehat{\mathcal B}$ gives $P(E_1^{\mathrm c}\mid\mathcal G)\le\delta$, while Lemma~\ref{lem:random-subsampling-plurality} gives $P(E_2^{\mathrm c}\mid\mathcal G)\le P(R=0\mid\mathcal G)+L\,\mathbb E\{e^{-R\Delta_\Pi^2/2}\mid\mathcal G\}$. For $S,S'\in C_0$, Eq.~\eqref{eq:uniform-approximate-proxy-bias-bound} for $S$ with encoder $\widehat\phi_S$ and Assumption~\ref{ass:library-overlap} give $|\theta_S-\tau|\le\mathtt{bias}_S\le b$, and likewise $|\theta_{S'}-\tau|\le b$. Hence $|\theta_S-\theta_{S'}|\le2b\le2(\rho-\varepsilon)$ by condition (i), so the within-cluster bound of condition (ii) also holds on $C_0$. On $E_1$, two retained splits in the same cell satisfy $|\widehat\theta_S-\widehat\theta_{S'}|\le2(\rho-\varepsilon)+2\varepsilon=2\rho$, so they are linked. Two retained splits in different cells satisfy $|\widehat\theta_S-\widehat\theta_{S'}|>2(\rho+\varepsilon)-2\varepsilon=2\rho$, so they are not linked. Hence the connected components are exactly the nonempty sets $\mathcal R_m\cap C_\ell$. On $E_2$, $\mathcal R_m\cap C_0$ is the unique largest component, and Algorithm~\ref{alg:probe-search} returns the median of its estimates. For every $S\in C_0$, $|\widehat\theta_S-\tau|\le|\widehat\theta_S-\theta_S|+|\theta_S-\tau|\le\varepsilon+b$, where $|\theta_S-\tau|\le b$ by Eq.~\eqref{eq:uniform-approximate-proxy-bias-bound}, because every $S\in C_0$ satisfies Assumptions~\ref{ass:consistency}, \ref{ass:joint-latent-exchangeability}, and~\ref{ass:uniform-stability}. The median therefore lies in $[\tau-(b+\varepsilon),\tau+(b+\varepsilon)]$. A union bound over $E_1^{\mathrm c}$ and $E_2^{\mathrm c}$ proves Eq.~\eqref{eq:probe-search-error-probability}.
\end{proof}

\begin{proof}[Proof of Corollary~\ref{cor:budget-rule}]
Write $c\triangleq\Delta_\Pi^2/2$. First, $P(R=0)=\prod_{i=0}^{m-1}(T-N-i)/(T-i)\le(1-p)^m\le e^{-mp}$. Second, $R$ is the sum of the indicators $\mathbf 1\{S\in\widehat{\mathcal B}\}$ over the $m$ splits drawn without replacement. Because $x\mapsto e^{-cx}$ is continuous and convex, Thm.~4 of \citet{hoeffding1963probability} bounds $\mathbb E\{e^{-cR}\}$ by the corresponding expectation under sampling with replacement. Thus $\mathbb E\{e^{-cR}\}\le\{1-p(1-e^{-c})\}^m\le\exp\{-mp(1-e^{-c})\}$. Since $1-e^{-c}\le1$, adding the bound for $P(R=0)$ gives Eq.~\eqref{eq:sampling-failure-closed-form}. Its right side is at most $\alpha$ when $m\ge\log\{(L+1)/\alpha\}/\{p(1-e^{-c})\}$, and Thm.~\ref{appdetail:thm:probe-search-error} gives $P(|\widehat\tau-\tau|\le b+\varepsilon)\ge1-\delta-\alpha$. Finally, $1-e^{-c}\ge c/(1+c)\ge\Delta_\Pi^2/3$ because $0<\Delta_\Pi\le1$. If $L=0$, only the empty-draw bound is needed, and $m\ge\log(1/\alpha)/p$ suffices.
\end{proof}

\paragraph{Median under majority.}
\begin{proof}[Proof of Corollary~\ref{cor:median-under-majority}]
On the target-band and uniform-evaluation-error event, every retained split in $C_0$ satisfies $|\widehat\theta_S-\tau|\le b+\varepsilon$. If $|\mathcal R_m\cap C_0|>R/2$, more than half of all retained estimates lie in $[\tau-(b+\varepsilon),\tau+(b+\varepsilon)]$. Both middle order statistics, and hence their midpoint when needed, lie in this interval. Therefore the median lies within $b+\varepsilon$ of $\tau$.
\end{proof}

\subsection{Finite-Strata MMD Specialization}
\label{app:finite-strata-mmd-specialization}

This subsection records the earlier finite-strata analysis as an independent specialization. It is not used by the continuous-representation results in the main text.

Fix a finite library $\mathcal F_j^{\mathrm{fs}}\subseteq\Phi_j$ with $N_j^{\mathrm{fs}}\triangleq|\mathcal F_j^{\mathrm{fs}}|$. Every $\phi\in\mathcal F_j^{\mathrm{fs}}$ is deterministic and takes values in a common finite set $\mathcal S_j$ of size $M_j$. Write $Z_j^\phi=\phi(V_j)$, let $\mathcal R_j(\phi)$ be its positive-mass strata, and set $p_{j,z}(\phi)=P(Z_j^\phi=z)$. Assume, uniformly over the library and $z\in\mathcal R_j(\phi)$,
\(P(Z_j^\phi=z)\ge p_{\min}>0, \eta\le P(A=1\mid Z_j^\phi=z)\le1-\eta\)
for some $\eta\in(0,1/2]$. Let $\kappa_j$ be a characteristic kernel on $(X,W_j)$ with reproducing kernel Hilbert space $\mathcal H_j$ and
\(\sup_v\sqrt{\kappa_j(v,v)}\le\rho_j<\infty.\)
For $z\in\mathcal R_j(\phi)$, define
\(\Delta_{j,z}^{\mathrm{MMD}}(\phi) \triangleq \operatorname{MMD}_{\kappa_j} \left\{ P(X,W_j\mid A=1,Z_j^\phi=z), P(X,W_j\mid A=0,Z_j^\phi=z) \right\}\); \(D_j^{\mathrm{MMD}}(\phi) \triangleq \sum_{z\in\mathcal R_j(\phi)} p_{j,z}(\phi)\Delta_{j,z}^{\mathrm{MMD}}(\phi).\)

For $i\in\mathcal I_{\mathrm D}$, let $Z_{j,i}^\phi=\phi(V_{j,i})$ and define
\(N_{a,z}^\phi \triangleq \sum_{i\in\mathcal I_{\mathrm D}} \mathbf 1\{A_i=a,Z_{j,i}^\phi=z\}\); \(N_z^\phi\triangleq N_{0,z}^\phi+N_{1,z}^\phi\); \(\widehat p_{j,z}(\phi)\triangleq\frac{N_z^\phi}{n_{\mathrm D}}\); \(\widehat m_{a,j,z}(\phi) \triangleq \frac{1}{N_{a,z}^\phi} \sum_{i\in\mathcal I_{\mathrm D}} \mathbf 1\{A_i=a,Z_{j,i}^\phi=z\}\); \(\qquad{} \times\kappa_j\{(X_i,W_{j,i}),\cdot\}.\)
When both treatment-arm cells are nonempty, set
\(\widehat\Delta_{j,z}^{\mathrm{MMD}}(\phi) \triangleq \|\widehat m_{1,j,z}(\phi)-\widehat m_{0,j,z}(\phi)\|_{\mathcal H_j}\); \(\widehat D_j^{\mathrm{MMD}}(\phi) \triangleq \sum_{z:N_z^\phi>0} \widehat p_{j,z}(\phi)\widehat\Delta_{j,z}^{\mathrm{MMD}}(\phi).\)
If an occupied stratum contains only one treatment arm, set $\widehat D_j^{\mathrm{MMD}}(\phi)=+\infty$.

The MMD specialization uses its own structural stability condition. For every $\phi\in\mathcal F_j^{\mathrm{fs}}$, every positive-mass stratum $z$, every $a\in\{0,1\}$, and all $Q,Q'\in\mathcal Q_{j,z}$, define
\(\Delta^\mu_{a,j,z}(Q,Q') \triangleq \mathbb E_Q\{\mu_{a,j,z}\}-\mathbb E_{Q'}\{\mu_{a,j,z}\}.\)
Assume
\begin{align}
\left|\Delta^\mu_{a,j,z}(Q,Q')\right|
\le
\Gamma_j^{\mathrm{MMD}}(\phi)
\operatorname{MMD}_{\kappa_j}(\mathcal K_jQ,\mathcal K_jQ'),
\qquad
\sup_{\phi\in\mathcal F_j^{\mathrm{fs}}}
\Gamma_j^{\mathrm{MMD}}(\phi)
\le\overline\Gamma_j^{\mathrm{MMD}}<\infty.
\label{eq:finite-mmd-stability}
\end{align}

\begin{proposition}[Finite-strata MMD learning bound]
\label{prop:finite-strata-mmd-learning-bound}
Let $\delta_{\mathrm D}\in(0,1)$ and suppose
\begin{align}
n_{\mathrm D}
\ge
\frac{8}{p_{\min}\eta}
\log\left(\frac{8M_jN_j^{\mathrm{fs}}}{\delta_{\mathrm D}}\right).
\label{eq:finite-mmd-sample-threshold}
\end{align}
Define
\(e_{j,n_{\mathrm D}}^{\mathrm{MMD}}(\delta_{\mathrm D}) \triangleq \rho_j \left[ 2+\sqrt{2\log\left(\frac{4M_jN_j^{\mathrm{fs}}}{\delta_{\mathrm D}}\right)} \right] \sqrt{\frac{2}{n_{\mathrm D}p_{\min}\eta}}\); \(r_{j,n_{\mathrm D}}^{\mathrm{MMD}}(\delta_{\mathrm D}) \triangleq 2e_{j,n_{\mathrm D}}^{\mathrm{MMD}}(\delta_{\mathrm D}) +2\rho_j \sqrt{ \frac{ 2\left[M_j\log2+\log\{4N_j^{\mathrm{fs}}/\delta_{\mathrm D}\}\right] }{n_{\mathrm D}} }.\)
Then, with probability at least $1-\delta_{\mathrm D}$,
\begin{align}
\sup_{\phi\in\mathcal F_j^{\mathrm{fs}}}
\left|
\widehat D_j^{\mathrm{MMD}}(\phi)-D_j^{\mathrm{MMD}}(\phi)
\right|
\le
r_{j,n_{\mathrm D}}^{\mathrm{MMD}}(\delta_{\mathrm D}).
\label{eq:finite-mmd-uniform-deviation}
\end{align}
If $\widehat\phi_j^{\mathrm{fs}}$ obeys
\(\widehat D_j^{\mathrm{MMD}}(\widehat\phi_j^{\mathrm{fs}}) \le \inf_{\phi\in\mathcal F_j^{\mathrm{fs}}} \widehat D_j^{\mathrm{MMD}}(\phi) +\varepsilon_{\mathrm{opt},j}^{\mathrm{fs}},\)
then, on the same event,
\begin{align}
D_j^{\mathrm{MMD}}(\widehat\phi_j^{\mathrm{fs}})
&\le
\inf_{\phi\in\mathcal F_j^{\mathrm{fs}}}
D_j^{\mathrm{MMD}}(\phi)
+2r_{j,n_{\mathrm D}}^{\mathrm{MMD}}(\delta_{\mathrm D})
+\varepsilon_{\mathrm{opt},j}^{\mathrm{fs}},
\label{eq:finite-mmd-oracle}\\
|\tau_{\widehat Z_j^{\mathrm{fs}}}-\tau|
&\le
\overline\Gamma_j^{\mathrm{MMD}}
\left[
\inf_{\phi\in\mathcal F_j^{\mathrm{fs}}}
D_j^{\mathrm{MMD}}(\phi)
+2r_{j,n_{\mathrm D}}^{\mathrm{MMD}}(\delta_{\mathrm D})
+\varepsilon_{\mathrm{opt},j}^{\mathrm{fs}}
\right],
\label{eq:finite-mmd-causal-oracle}
\end{align}
where $\widehat Z_j^{\mathrm{fs}}=\widehat\phi_j^{\mathrm{fs}}(V_j)$, provided the primitive structural conditions and overlap used in the exact bias decomposition hold for every candidate.
\end{proposition}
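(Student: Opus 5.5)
The proof has three parts: a uniform deviation bound for the empirical weighted MMD, an oracle inequality obtained by comparison, and a causal bound obtained through a stratum-wise bias decomposition using the MMD stability condition \eqref{eq:finite-mmd-stability}.

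\textbf{Uniform deviation.} I would split $\widehat D_j^{\mathrm{MMD}}(\phi)-D_j^{\mathrm{MMD}}(\phi)$ into two pieces:
\[
\sum_z\widehat p_{j,z}(\widehat\Delta_{j,z}-\Delta_{j,z})+\sum_z(\widehat p_{j,z}-p_{j,z})\Delta_{j,z}.
\]
\emph{First piece: enough data in every cell.} A multiplicative Chernoff bound on the counts $N_{a,z}^\phi$ uses $p_{j,z}\ge p_{\min}$ and arm probabilities at least $\eta$. Together with the sample-size threshold \eqref{eq:finite-mmd-sample-threshold} and a union bound over the $2M_jN_j^{\mathrm{fs}}$ cells, it gives $N_{a,z}^\phi\ge n_{\mathrm D}p_{\min}\eta/2$ simultaneously, with failure probability $\delta_{\mathrm D}/4$.

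\emph{First piece: mean embeddings.} Conditional on the cell labels, each empirical mean embedding $\widehat m_{a,j,z}$ is an average of i.i.d.\ bounded RKHS elements. The standard RKHS Hoeffding/McDiarmid bound then gives
\[
\|\widehat m_{a,j,z}-m_{a,j,z}\|_{\mathcal H_j}\le \frac{\rho_j\bigl(1+\sqrt{2\log(\cdot)}\bigr)}{\sqrt{N_{a,z}}}.
\]
By the triangle inequality, $|\widehat\Delta_{j,z}-\Delta_{j,z}|$ is at most the sum over the two arms. Inserting the count lower bound yields $e^{\mathrm{MMD}}_{j,n_{\mathrm D}}$, with a union bound over cells and the library. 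Since $\sum_z\widehat p_{j,z}=1$, the whole first piece is at most a constant multiple of $e^{\mathrm{MMD}}$.

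\emph{Second piece.} Since $0\le\Delta_{j,z}\le2\rho_j$,
\[
\Bigl|\sum_z(\widehat p_{j,z}-p_{j,z})\Delta_{j,z}\Bigr|\le 2\rho_j\cdot\tfrac12\|\widehat p-p\|_1 .
\]
I would control the total variation by writing $\|\widehat p-p\|_1=2\max_{A\subseteq\mathcal S_j}(\widehat p(A)-p(A))$ and applying Hoeffding with a union bound over the $2^{M_j}$ subsets and the $N_j^{\mathrm{fs}}$ encoders. This produces the $M_j\log 2+\log(4N_j^{\mathrm{fs}}/\delta_{\mathrm D})$ term. Summing the failure probabilities gives \eqref{eq:finite-mmd-uniform-deviation}. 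On the good event every occupied stratum has both arms, so the $+\infty$ convention is never triggered.

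\textbf{Oracle inequality.} This is the same comparison chain as in the proof of Prop.~\ref{appdetail:thm:representation-oracle}. For any $\phi$ in the library,
\[
D(\widehat\phi)\le\widehat D(\widehat\phi)+r\le \widehat D(\phi)+\varepsilon_{\mathrm{opt}}+r\le D(\phi)+2r+\varepsilon_{\mathrm{opt}} .
\]
Taking the infimum over $\phi$ gives \eqref{eq:finite-mmd-oracle}.

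\textbf{Causal bound.} I would reuse the pointwise bias decomposition \eqref{eq:pointwise-bias-decomposition}. Held-out treatment independence gives $\mathcal K_jQ^\phi_{b,j,z}=P(T_j\mid A=b,Z_j^\phi=z)$, exactly as in the proof of Thm.~\ref{thm:approximate-proxy-bias-bound}. Then \eqref{eq:finite-mmd-stability} gives
\[
|\delta^\phi_{a,j}(z)|\le\overline\Gamma_j^{\mathrm{MMD}}\Delta^{\mathrm{MMD}}_{j,z}(\phi).
\]
The weights $1-e(z)$ and $e(z)$ sum to one, so the stratum bias is at most $\overline\Gamma\Delta_{j,z}$. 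No $1/\{\eta(1-\eta)\}$ factor appears, because the MMD is taken directly between the arm-conditional laws and not through a density ratio. Averaging over $z$ gives $|\tau_{Z}-\tau|\le\overline\Gamma D^{\mathrm{MMD}}(\phi)$. Evaluating at $\widehat\phi^{\mathrm{fs}}_j$ and inserting \eqref{eq:finite-mmd-oracle} gives \eqref{eq:finite-mmd-causal-oracle}.

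The main obstacle is the first piece of the deviation bound: the random cell counts and the conditional RKHS concentration must be combined carefully, with conditioning on the labels, so that the constants match $e^{\mathrm{MMD}}$ exactly. The remaining steps reuse earlier arguments.
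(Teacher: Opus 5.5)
Your proposal is correct and follows the paper's proof essentially step by step. The shared steps are: a multiplicative Chernoff bound for the cell counts, conditional RKHS mean concentration with a union bound over the $2M_jN_j^{\mathrm{fs}}$ cells, a multinomial $L_1$ bound (your subset-plus-Hoeffding argument reproduces the paper's expression exactly), the same comparison chain for the oracle inequality, and the pointwise bias decomposition combined with MMD stability for the causal bound. Your centering of $\Delta_{j,z}$ at $\rho_j$ even bounds the second piece with half the paper's constant, so the stated radius $r^{\mathrm{MMD}}_{j,n_{\mathrm D}}(\delta_{\mathrm D})$ is met with room to spare.
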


\begin{proof}
Eq.~\eqref{eq:finite-mmd-sample-threshold} and a multiplicative Chernoff bound imply that every candidate treatment-by-stratum cell contains at least $n_{\mathrm D}p_{\min}\eta/2$ observations except on an event of probability at most $\delta_{\mathrm D}/4$. Conditional on the cell assignments, bounded Hilbert-space mean concentration and a union bound over at most $2M_jN_j^{\mathrm{fs}}$ cells give
\(\left| \widehat\Delta_{j,z}^{\mathrm{MMD}}(\phi) -\Delta_{j,z}^{\mathrm{MMD}}(\phi) \right| \le 2e_{j,n_{\mathrm D}}^{\mathrm{MMD}}(\delta_{\mathrm D})\)
simultaneously, with failure probability at most $\delta_{\mathrm D}/2$. A multinomial $L_1$ concentration bound gives
\(\sum_z|\widehat p_{j,z}(\phi)-p_{j,z}(\phi)| \le \sqrt{ \frac{ 2\left[M_j\log2+\log\{4N_j^{\mathrm{fs}}/\delta_{\mathrm D}\}\right] }{n_{\mathrm D}} }\)
simultaneously over candidates, with failure probability at most $\delta_{\mathrm D}/4$. On the intersection of these events, $0\le\Delta_{j,z}^{\mathrm{MMD}}(\phi)\le2\rho_j$ and the triangle inequality give Eq.~\eqref{eq:finite-mmd-uniform-deviation}.

The empirical-risk comparison yields
\(D_j^{\mathrm{MMD}}(\widehat\phi_j^{\mathrm{fs}}) \le \widehat D_j^{\mathrm{MMD}}(\widehat\phi_j^{\mathrm{fs}}) +r_{j,n_{\mathrm D}}^{\mathrm{MMD}}(\delta_{\mathrm D}) \le \inf_{\phi\in\mathcal F_j^{\mathrm{fs}}} D_j^{\mathrm{MMD}}(\phi) +2r_{j,n_{\mathrm D}}^{\mathrm{MMD}}(\delta_{\mathrm D}) +\varepsilon_{\mathrm{opt},j}^{\mathrm{fs}}\),
which proves Eq.~\eqref{eq:finite-mmd-oracle}. The exact bias decomposition and the MMD stability condition in Eq.~\eqref{eq:finite-mmd-stability} give
\(|\tau_{Z_j^\phi}-\tau| \le \Gamma_j^{\mathrm{MMD}}(\phi)D_j^{\mathrm{MMD}}(\phi).\)
Apply this inequality to $\widehat\phi_j^{\mathrm{fs}}$ and substitute Eq.~\eqref{eq:finite-mmd-oracle} to obtain Eq.~\eqref{eq:finite-mmd-causal-oracle}.
\end{proof}

\endgroup

\end{document}